\newif\ifred
\titleformat*{\paragraph}{\bfseries}
\pgfplotsset{compat=1.17}
\definecolor[named]{ACMBlue}{cmyk}{1,0.1,0,0.1}
\definecolor[named]{ACMYellow}{cmyk}{0,0.16,1,0}
\definecolor[named]{ACMOrange}{cmyk}{0,0.42,1,0.01}
\definecolor[named]{ACMRed}{cmyk}{0,0.90,0.86,0}
\definecolor[named]{ACMLightBlue}{cmyk}{0.49,0.01,0,0}
\definecolor[named]{ACMGreen}{cmyk}{0.20,0,1,0.19}
\definecolor[named]{ACMPurple}{cmyk}{0.55,1,0,0.15}
\definecolor[named]{ACMDarkBlue}{cmyk}{1,0.58,0,0.21}
\crefname{sub}{Subsection}{Subsection}
\crefname{sdp}{SDP}{SDP}
\crefname{lp}{LP}{LP}
\crefname{ineq}{Inequality}{Inequality}
\crefname{sub}{Subsection}{Subsection}
\crefname{sdp}{SDP}{SDP}
\crefname{lp}{LP}{LP}
\newtheorem{theorem}{Theorem}[section]
\newtheorem{lemma}{Lemma}[section]
\newtheorem{informal theorem}[theorem]{Theorem (informal statement)}
\newtheorem{corollary}[theorem]{Corollary}
\newtheorem{fact}[theorem]{Fact}
\newtheorem{definition}[theorem]{Definition}
\newcommand{\lp}{\left}
\newcommand{\rp}{\right}
\newcommand\norm[1]{\left\| #1 \right\|}
\DeclareMathOperator*{\pr}{\mathbf{Pr}}
\DeclareMathOperator*{\E}{\mathbf{E}}
\newcommand{\proj}{\mathrm{proj}}
\newcommand{\err}{\mathrm{err}}
\newcommand{\R}{\mathbb{R}}
\newcommand{\eps}{\epsilon}
\newcommand{\poly}{\mathrm{poly}}
\newcommand{\polylog}{\mathrm{polylog}}
\newcommand{\sgn}{\mathrm{sign}}
\newcommand{\D}{D}
\newcommand{\opt}{\mathrm{opt}}
\newcommand{\iid}{{i.i.d.}\ }
\newcommand{\abs}[1]{\lp| #1 \rp|}
\newcommand{\A}{\mathcal{A}}
\newcommand{\card}[1]{|#1|}
\renewcommand\Pr{\pr}
\begin{document}

\title{Active Learning of {\em General} Halfspaces: \\
Label Queries {\em vs} Membership Queries}

\author{
Ilias Diakonikolas\thanks{Supported by NSF Medium Award CCF-2107079 and an H.I. Romnes Faculty Fellowship.}\\
University of Wisconsin-Madison\\
\texttt{ilias@cs.wisc.edu}
\and
Daniel M. Kane\thanks{Supported by NSF Medium Award CCF-2107547 and NSF Award CCF-1553288 (CAREER).}\\
University of California, San Diego\\
\texttt{dakane@cs.ucsd.edu}
\and
Mingchen Ma\thanks{Supported by NSF Award  CCF-2144298 (CAREER).}\\
University of Wisconsin-Madison\\
\texttt{mingchen@cs.wisc.edu}
}

\maketitle

\begin{abstract}
We study the problem of learning {\em general} (i.e., not necessarily homogeneous) 
halfspaces under the Gaussian distribution on $\R^d$ 
in the presence of some form of query access. 
In the classical pool-based active learning model, where the algorithm is
allowed to make adaptive label queries to previously sampled points, 
we establish a strong information-theoretic lower bound ruling out non-trivial
improvements over the passive setting. Specifically, we show that
any active learner requires label complexity of 
$\Tilde{\Omega}(d/(\log(m)\eps))$, where $m$ is the number of unlabeled examples. 
Specifically, to beat the passive label complexity of $\Tilde{O}(d/\eps)$, 
an active learner requires a pool of $2^{\poly(d)}$ unlabeled samples.
On the positive side, we show that this lower bound 
can be circumvented with membership query access, 
even in the agnostic model. Specifically, we give a computationally efficient 
learner with query complexity of $\Tilde{O}(\min\{1/p, 1/\eps\} + d\cdot\polylog(1/\eps))$
achieving error guarantee of $O(\opt)+\eps$. Here $p \in [0, 1/2]$ 
is the bias and $\opt$ is the 0-1 loss of the optimal halfspace. 
As a corollary, we obtain a strong separation 
between the active and membership query models. 
Taken together, our results characterize the complexity of learning 
general halfspaces under Gaussian marginals in these models. 
\end{abstract}

\setcounter{page}{0}

\thispagestyle{empty}

\newpage

\section{Introduction}
In Valiant's PAC learning model~\cite{Valiant:84,val84}, 
the learner is given access to random labeled examples 
and aims to find an accurate approximation to
the function that generated the labels. 
The standard PAC model is ``passive'' in the sense 
that the learner has no control over the selection of the training set.
Here we focus on {\em interactive} learning between a 
learner and a domain expert that can potentially lead to significantly more efficient learning 
procedures. A standard such paradigm is (pool-based) active learning \cite{mccallum1998employing}, 
where the learner has access to a large pool of unlabeled examples $S$ 
and has the ability to (adaptively) select a subset of $S$ and obtain their labels. We will 
henceforth refer to this type of data access as {\em label query} access. 
An even stronger interactive model is that of PAC learning with {\em membership queries} \cite{angluin1988queries,feldman2009power}. 
A {\em membership query (MQ) }allows the learner to obtain the value of the target function on 
{\em any} desired point in the support of the marginal distribution. This model captures the ability 
to perform experiments or the availability of expert advice. While in active learning the learner is only allowed to query the labels of previously sampled points from $S$, in MQ learning the learner has black-box 
access to the target function (see \Cref{def:active} and \Cref{def:mq}). Roughly speaking, when the size of $S$ 
becomes exponentially large (so that it is a good cover 
of the space), the model of active learning 
``converges'' to the model of learning with MQs. This 
intuitive connection will be useful in the proceeding 
discussion.

Active learning is motivated by the availability of large amounts of unlabeled data at low cost. 
As such, the typical goal in this model is to develop algorithms with qualitatively improved label 
complexity (compared to passive learning) at the expense of a larger---but, ideally, still 
reasonably bounded---set of unlabeled data. Over the past two decades, a large body of work in theoretical machine learning has studied the possibilities and limitations of active learning in a variety of natural and important settings; 
see, e.g.,~\cite{freund1997selective,dasgupta2004analysis,dasgupta2005coarse,dasgupta2005analysis,balcan2007margin,balcan2010true, hanneke2014theory,hanneke2015minimax,kane2017active,hopkins2020power,hopkins2020point,bressan2022active,diakonikolas2024fast,VMC24,kontonis2024active}.

A prototypical setting where active learning leads to substantial savings 
is for the task of learning {\em homogeneous} Linear Threshold Functions (LTFs) or 
halfspaces. 
An LTF is any function 
$h:\R^d\to\{\pm 1\}$ of the form $h(x)=\sgn(w\cdot x + t)$,
where $w\in S^{d-1}$ is called the weight vector and $t$ is called the threshold. 
If $t=0$, the halfspace is called homogeneous.
The problem of learning halfspaces is one of the classical
problems in machine learning, going back to the Perceptron algorithm \cite{rosenblatt1958perceptron} 
and has had a great impact on many other influential techniques, including SVMs \cite{vapnik1997support} and AdaBoost \cite{FreundSchapire:97}.

For the class of homogeneous halfspaces under well-behaved distributions 
(including the Gaussian and isotropic log-concave distributions), prior work has established that
$O(d\log(1/\eps))$ label queries suffice, where $d$ is the dimension and $\eps$ 
is the desired accuracy \cite{balcan2007margin,dasgupta2005analysis,balcan2013active}.
Moreover, there are computationally efficient algorithms with near-optimal 
label complexity for this task \cite{ABL17,yan2017revisiting,shen2021power}, 
even in the agnostic model that achieve error $O(\opt+\eps)$. 
Unfortunately, this logarithmic dependence on $1/\eps$ breaks down for general 
(potentially biased) halfspaces. Intuitively, this holds because if the bias of a halfspace 
(the probability mass of the small class) is $p$, 
then we need to obtain at least $1/p$ labeled examples 
before we see the first point in the small class. 
This implies an information-theoretic label complexity lower bound 
of $\Omega(\min\{1/p, 1/\eps\}+d\log(1/\eps))$~\cite{dasgupta2005coarse}, 
even for realizable PAC learning under the uniform distribution on the sphere. 
Balcan, Hanneke, and Vaughan~\cite{balcan2010true} 
showed an information-theoretic label complexity 
upper bound of $\Tilde{O}((1/p) d^{3/2} \log(1/\eps))$ for general halfspaces under the uniform distribution on the sphere (via an exponential-time algorithm). 

In summary, prior to this work, the possibility that there is an active learner  
with label complexity $O(d\cdot\polylog(1/\eps)+\min\{1/p, 1/\eps\})$ 
and unlabeled 
sample 
complexity $\poly(d/\eps)$ remained open.
Our first main result is an information-theoretic lower bound ruling out this possibility.  

\begin{theorem}[Main Lower Bound]\label{th low}
For any active learning algorithm $\A$, there is a halfspace $h^*$ that labels $S$ with bias $p$ such that if $\A$ makes less than $\Tilde{O}(d/(p\log(m)))$ label queries over $S$, a set of $m$ \iid points drawn from $N(0,I)$, then with probability at least $2/3$ the halfspace $\hat{h}$ output by $\A$ has error more than $p/2$ with respect to $h^*$.
\end{theorem}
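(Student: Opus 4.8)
The plan is to prove \Cref{th low} by an average-case argument. I would exhibit a distribution $\mathcal{D}^{\ast}$ over target halfspaces --- each of bias exactly $p$ --- such that (a) any two targets in its support disagree, under $N(0,I)$, on strictly more than a $p$-fraction of inputs, and (b) any algorithm issuing fewer than $\widetilde{\Omega}(d/(p\log m))$ label queries against a pool $S$ of $m$ \iid draws from $N(0,I)$ fails, with probability at least a large constant, to narrow the set of targets consistent with the labels it sees down to a single element. Since (a) forces every hypothesis $\hat h$ to be within $0$-$1$ distance $p/2$ of at most one support element, (a) and (b) together give $\Pr_{\x}[\hat h(\x)\ne h^{\ast}(\x)] > p/2$ with constant probability under $\mathcal D^{\ast}$, which can be boosted to $2/3$ by enlarging the family; averaging over $\mathcal D^{\ast}$ then produces a fixed bad $h^{\ast}$, which is exactly the statement of the theorem.

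\emph{The hard family.} Let $t_{p}=\Phi^{-1}(1-p)$, so that $h_{\w}(\x)=\sgn(\w\cdot\x-t_{p})$ has bias exactly $p$ for every unit $\w$. I would hide the weight vector near the first axis: for $\bv$ in a finite set $V\subseteq\e_{1}^{\perp}\cong\R^{d-1}$ let $\w_{\bv}=(\e_{1}+\bv)/\norm{\e_{1}+\bv}$, and let $\mathcal D^{\ast}$ draw $\bv^{\ast}$ uniformly from $V$. Take $V$ to be a maximal packing, on a sphere of small absolute-constant radius $\delta_{0}$ inside $\R^{d-1}$, at separation $\Theta(1/t_{p})$, so that $\log\abs{V}=\Omega(d\log t_{p})$; a Gaussian-tail computation shows that two members at separation $\Theta(1/t_{p})$ disagree on $\Theta(\phi(t_{p})/t_{p})=\Theta(p)$ of the space, with the constants tunable so that this exceeds $p$ (it saturates near $2p$ at larger separations), giving property (a).

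\emph{Reduction to geometry on the pool.} Condition on $S$. Because $\norm{\bv}\le\delta_{0}$, writing $\x=(\x_{1},\x')$ one has $h_{\w_{\bv}}(\x)=+1$ iff $\bv$ lies on a fixed side of the affine hyperplane $H_{\x}=\{\bv : \bv\cdot\x'=t_{p}-\x_{1}+O(\delta_{0}^{2}t_{p})\}\subseteq\R^{d-1}$; thus querying the label of $\x$ reveals precisely which side of $H_{\x}$ contains $\bv^{\ast}$, and after $q$ queries the consistent set is the cell of $V$ cut out by those $q$ halfspaces. Two features of the pool drive the bound. First, the normal of $H_{\x}$ is uniform on $S^{d-2}$ (since $\x'\sim N(0,I_{d-1})$), so the $m$ available normals form a constant-angle net of the sphere only when $m=2^{\Omega(d)}$; absent such a net there is a direction along which no pool hyperplane is well aligned, so the version space is never narrowed below width $\Theta(\delta_{0})\gg 1/t_{p}$ there and hence always retains many members of $V$ --- this is why beating the passive bound requires a pool of size $2^{\poly(d)}$. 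Second, the offset of $H_{\x}$ is governed by $\x_{1}\sim N(0,1)$ read off near the tail point $t_{p}$, where the density is only $\phi(t_{p})=\Theta(p\,t_{p})$, so a given pool point is ``useful'' --- its hyperplane passing through the bulk of the current version space --- only with probability $O(p\,t_{p})$ times a dimensional factor, capping the rate at which the pool can supply information about $\bv^{\ast}$.

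\emph{Assembling the bound, and the main obstacle.} Even when $m$ is large enough to narrow every direction, isolating a single member of $V$ (separation $\Theta(1/t_{p})$) demands $\Omega(\log\abs{V})=\Omega(d\log t_{p})$ ``useful'' cuts, each of which must be located among the $O(m)$ pool hyperplanes; by the tail-density bound, the amortized information a query yields --- after the inevitable initial ``easy'' cuts are exhausted --- is only $\widetilde O(p\log m)$ bits, since the typical answer is the majority label and conveys $o(1)$ bits while a minority-label answer (which does reveal $\Theta(\log(1/p))$ bits) occurs with probability only $O(p)$ per query. Balancing the budget $\Omega(d\log t_{p})$ against this per-query rate yields $q=\widetilde\Omega(d/(p\log m))$. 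The technically delicate point --- and the main obstacle --- is \emph{adaptivity}: the $t$-th query depends on the first $t-1$ labels, so one cannot analyze a fixed sequence of random cuts. I would resolve this by (i) a union bound over the at most $m^{q}$ possible query transcripts, making the ``no useful cut below such-and-such a scale'' statement hold simultaneously for every reachable version-space cell (after taking logarithms this only multiplies the critical scale by $\poly(q\log m)$, absorbed into $\widetilde\Omega(\cdot)$), and (ii) a martingale/Chernoff bound on the number of minority-label answers --- each has conditional probability $O(p)$ given the past, so with $q=\widetilde o(d/(p\log m))=o(1/p)$ queries there are too few to cover the deficit $\Omega(d\log t_{p})$. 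Together (i) and (ii) show the consistent set still contains many members of $V$ with high probability, which is property (b) and completes the proof.
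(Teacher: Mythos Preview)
Your overall strategy---hide the weight vector in a packing $V$ and argue that too few queries leave many consistent candidates---is a natural one, and it is \emph{not} what the paper does. The paper instead (i) reduces learning to error $p/2$ to the task of \emph{finding $d$ negative examples} in the pool (\Cref{lm reduction}), and then (ii) shows directly that finding $k\approx d/((t^\ast)^4\log m)$ negatives is hard: with high probability over $S$, every $k$-tuple $A$ of pool points satisfies $\norm{AA^\top-dI}_2\le O(d/(t^\ast)^2)$ (\Cref{lm high probability}), and under this spectral condition a uniformly random $w^\ast\in S^{d-1}$ makes all $k$ of them negative with probability at most $(O(p\log(1/p)))^k$ (\Cref{lm deterministic}). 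Adaptivity is then handled by counting: any query algorithm is a binary decision tree, and there are at most $\binom{r}{k}$ leaf-paths along which $k$ negatives are found, so the success probability is at most $\binom{r}{k}(O(p\log(1/p)))^k$, forcing $r=\widetilde\Omega(k/p)$.

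The gap in your proposal is precisely the point the paper flags as ``hard to formalize'': your step (ii) asserts that each query returns the minority label with conditional probability $O(p)$ given the past, but you do not justify this for an \emph{adaptive} learner. Once the posterior on $\bv^\ast$ has been narrowed to a small cell, the learner is free to search the pool for an $\x$ whose hyperplane $H_\x$ roughly bisects that cell, and for such $\x$ the conditional minority probability is $\Theta(1)$, not $O(p)$. Whether the pool actually contains such an $\x$ depends on the near-orthogonality of the $m$ Gaussian points---exactly the content of the spectral condition---but your argument never ties that geometric fact to the conditional-probability claim driving your Chernoff bound. Relatedly, your step (i) union-bounds over $m^q$ transcripts with $q=\widetilde\Theta(d/(p\log m))$, which is $\exp(\Theta(d/p))$ events; you would need per-event failure probability $\exp(-\Omega(d/p))$, and ``multiplying the critical scale by $\poly(q\log m)$'' does not deliver that. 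The paper avoids both issues by union-bounding over only $\binom{r}{k}\le (er/k)^k$ successful paths and pairing this with the $(O(p\log(1/p)))^k$ bound from the spectral condition---the two exponents in $k$ balance, and $k$ (not $q$) is what scales with $d/\log m$.
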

In particular, if $p$ is chosen as $\Theta(\eps \log(1/\eps))$, 
learning a $p$-bias halfspace with error $C\eps$  
(for any fixed constant $C$) 
would require a learning algorithm to either make $\Tilde{\Omega}(d^{1-c}/\eps)$ label queries 
or have a pool of $\Omega(2^{d^{c}})$ unlabeled examples for any small constant $c>0$. 
Our information-theoretic lower bound essentially shows that 
the active setting does not provide non-trivial advantages 
for the class of general halfspaces, 
unless the learner is allowed to obtain exponentially many unlabeled examples. 
(As already mentioned, in this extreme setting, the active learning model approximates 
PAC learning with MQs.) This motivates the study of learning halfspaces 
in the stronger model with MQs, where better upper bounds may be attainable.

To circumvent the aforementioned lower bound, we consider the stronger model of PAC learning with MQs. We are interested in understanding the query complexity of learning general halfspaces under the Gaussian distribution. We study this question in the agnostic learning model and establish the following positive result.

\begin{theorem}[Main Algorithmic Result]\label{th main}
Consider the problem of agnostic PAC learning halfspaces with membership queries under the Gaussian distribution.     
    There is an algorithm such that for every labeling function $y(x)$ and for every $\eps,\delta \in (0,1)$, it makes $M=\Tilde{O}_\delta(\min\{1/p, 1/\eps\} + d\cdot\polylog(1/\eps))$ \footnote{In this paper, we use $O_\delta$ to hide the dependence on $\polylog(1/\delta)$ and use $\Tilde{O}$ to hide the dependence on $\polylog(1/\eps)$.} memberships queries, runs in $\poly(d,M)$ time,
    where $p$ is the bias of the optimal halfspace $h^*$, 
    and outputs an $\hat{h} \in H$ such that with probability 
    at least $1-\delta$, $\err(\hat{h}) \le O(\opt)+\eps$.
\end{theorem}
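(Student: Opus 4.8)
The plan is to reduce, via membership-query (MQ) ``localization'', to agnostically learning an essentially \emph{homogeneous} halfspace over a benign (near-isotropic, log-concave) distribution — for which a query-efficient agnostic learner is available from prior work — and then to account for all error in terms of the Gaussian geometry of two halfspaces.

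\emph{Preliminary reductions.} Two easy cases let us assume $\eps \le p$ and $\opt \le c_0\,p$ for a small absolute constant $c_0$: if either fails, the better of the two constant hypotheses has error at most $p + \opt = O(\opt) + \eps$. The right constant, and (more importantly) a rough estimate of $p$ — equivalently of $T := |\Phi^{-1}(p)| = \Theta(\sqrt{\log(1/p)})$ and of the sign of $t^*$, read off from $\E_{x}[y(x)]$, which differs from $\E_x[h^*(x)]$ by at most $2\opt$ — can be found with $\tilde O_\delta(\min\{1/p,1/\eps\})$ MQs at i.i.d.\ Gaussian points, run on a doubling schedule for the candidate value of $p$ that stops either when $p$ is pinned up to a constant factor (after $\tilde O_\delta(1/p)$ MQs) or after $\tilde O_\delta(1/\eps)$ MQs (in which case $p = O(\eps\log(1/\eps))$ and a constant is output). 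Henceforth WLOG $t^* = T > 0$ and $T$ is known up to additive $O(1/T)$.

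\emph{The key MQ primitive and the band-refinement loop.} For any unit $\hat w$, threshold $\hat t$, and width $\sigma$, we can sample the Gaussian conditioned on the band $B = B(\hat w,\hat t,\sigma) = \{\,|\hat w\cdot x - \hat t|\le\sigma\,\}$ — a truncated one-dimensional Gaussian along $\hat w$ times $N(0,I_{d-1})$ on $\hat w^{\perp}$ — and query $y$ there at one MQ per sample, \emph{regardless of how small} $\Pr[B]$ is; this is exactly the power that active learning lacks. The only constraint is agnostic noise: $\Pr[\,y\ne h^*\mid B\,]$ can be as large as $\opt/\Pr[B]$, so we must keep $\Pr[B] = \Omega(\opt)$; since $\Pr[B]\approx 2\sigma\,\phi(\hat t)\approx 2\sigma\, pT$ when $\hat t\approx T$, this forces $\sigma = \Omega(1/T)$. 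Writing $w^* = \cos\alpha\,\hat w + \sin\alpha\, v$ with $\alpha = \angle(\hat w,w^*)$ and $v\perp\hat w$, the restriction of $h^*$ to $B$ is the halfspace $\{\cos\alpha\,(\hat w\cdot x) + \sin\alpha\,(v\cdot x) > T\}$; after rescaling the confined $\hat w$-coordinate to unit scale its weight vector becomes $(\Theta(\sigma\cos\alpha),\,\sin\alpha,\,0,\dots)$ and its bias is $\Phi(-\Theta(T\tan(\alpha/2)))$, which is $\Theta(1)$ — i.e.\ the sub-problem is \emph{balanced} — precisely once $\alpha = O(1/T)$. In that regime, one round of band-learning with $\sigma = \Theta(1/T)$ — feed $\tilde O_\delta(d\cdot\polylog(1/\eps))$ band samples to a polynomial-time active/MQ agnostic learner for (essentially) homogeneous halfspaces over isotropic log-concave distributions, targeting in-band accuracy $\Theta(\eps/p)$ — returns $(\hat w',\hat t')$ with $\angle(\hat w',w^*)$ and $|\hat t' - t^*|$ smaller by a fixed factor; iterating $\polylog(1/\eps)$ times drives $\angle(\hat w,w^*) = O\!\big((\opt/p + \eps/p)/T^2\big)$ and $|\hat t-t^*| = O\!\big((\opt/p + \eps/p)/T\big)$. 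Since the Gaussian mass of $\sgn(\hat w\cdot x - \hat t)\,\triangle\,h^*$ at these levels is $O\big(\phi(T)\,(T\,\angle(\hat w,w^*) + |\hat t - t^*|)\big) = O(pT^2\,\angle(\hat w,w^*)) + O(pT\,|\hat t-t^*|) = O(\opt) + \eps$, the output $\hat h = \sgn(\hat w\cdot x - \hat t)$ satisfies $\err(\hat h)\le \opt + O(\opt) + \eps = O(\opt)+\eps$.

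\emph{The main obstacle: coarse localization within budget.} What remains — and is the technical heart — is to produce an initial $\hat w_0$ with $\angle(\hat w_0,w^*) = O(1/T)$ spending only $\tilde O_\delta(\min\{1/p,1/\eps\} + d\cdot\polylog(1/\eps))$ MQs, starting from no directional information ($\angle(\hat w_0,w^*)$ can a priori be $\approx \pi/2$). The point to exploit is that the band reduction above does \emph{not} require $\hat w$ to be well aligned: conditioning on the mass-$\Omega(p\cdot\polylog)$ band $B(\hat w,T,\Theta(1/T))$ around $\{\hat w\cdot x = T\}$ turns $h^*$ into a halfspace in $d-1$ dimensions whose bias $\Phi(-\Theta(T\tan(\alpha/2)))$ is \emph{no smaller than, and for $\alpha$ bounded away from $\pi/2$ strictly larger than,} $p$ — i.e.\ the effective bias increases and the effective threshold shrinks by a factor $\approx \tan(\alpha/2)$. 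One therefore interleaves (i) noise-tolerant low-degree (Chow-type) moment estimates on such bands — reliable because every band has mass $\Omega(\opt)$ — to produce from a current direction a new one with strictly larger correlation with the in-band image of $w^*$, with (ii) the slab-conditioning step, which geometrically drives the effective threshold $T_k$ toward $O(1)$; after $O(\log\log(1/p))$ levels the sub-problem is genuinely balanced, the band-refinement loop of the previous paragraph finishes the job, and $\hat w_0$ is re-assembled linearly from the per-level direction estimates. The quantitative crux is to verify that the cost incurred at level $k$ is $\tilde O_\delta(d\cdot\polylog(1/\eps)) + (1/p)^{\Theta(4^{-k})}$ and that these sum to $\tilde O_\delta(d\cdot\polylog(1/\eps) + \min\{1/p,1/\eps\})$, and — the most delicate part — that the \emph{very first} nontrivial correlation with $w^*$ can be harvested within this budget, since one cannot afford the naive $\tilde\Theta(d/(p\log(1/p)))$ cost of a plain empirical Chow vector, so the estimator and the level schedule must be designed to extract the direction from the mass-$\Omega(p)$ boundary slab more economically. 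All subroutines — truncated-Gaussian sampling, empirical moment estimation, the homogeneous-halfspace learner, and the re-assembly of $\hat w_0$ — run in $\poly(d,M)$ time, giving the stated running time.
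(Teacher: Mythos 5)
Your high-level structure — rough bias estimation, then slab/band conditioning to amplify the signal, then iterative refinement of the direction and threshold, and finally reading off the error from the Gaussian mass of the symmetric difference — mirrors the paper's, and your band-refinement loop is essentially the localization argument the paper carries out via the $(v,s,\sigma)$-rejection procedure and a projected-gradient update on the Chow vector, with a bias-based search for the correct in-band threshold. The cost accounting for the refinement stage and the final translation of the angle/threshold errors into $O(\opt)+\eps$ are also right.

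The gap is exactly where you flag it, and it is not a sketchable ``delicate part'' but the central missing idea. Your coarse localization step is circular: conditioning on the band $B(\hat w, T, \Theta(1/T))$ reduces the effective threshold by a factor $\approx \tan(\alpha/2)$, which is $\Theta(1)$ when $\alpha$ is close to $\pi/2$, so without a nontrivial initial correlation the band sub-problem has bias still $\Theta(p)$ and Chow vector of length $\Theta(p)$, and estimating it to extract any direction information would cost $\tilde\Theta(d/p)$ MQs, far over budget. Your asserted per-level cost $(1/p)^{\Theta(4^{-k})}$ starts from $k=0$ at $\Theta(1/p)$ with a $d$-dimensional target and never explains how the \emph{first} nontrivial angle improvement is harvested cheaply; you explicitly leave this to ``the estimator and the level schedule must be designed.'' The paper's solution here is the smoothed-label device (\Cref{alg initialization 1}, \Cref{lm noisy smoothed label}): pay $\tilde O(1/p)$ queries \emph{once} to draw a random example $x_0$ with $y(x_0)=-1$ (so $x_0$ is, with constant probability, within $O(1/t^*)$ of the boundary along $w^*$ and the conditional noise $\eta(x_0)$ is $O(\eps/p)$), and then estimate the Chow vector of the smoothed label $z\mapsto y(\sqrt{1-\rho^2}\,x_0 + \rho z)$ with $\rho\approx 1/t$, which by \Cref{fact smooth}/\Cref{fact chow} is a halfspace in $z$ with $O(1)$ threshold — hence $\Theta(1)$-length Chow vector, estimable with only $\tilde O(d)$ queries. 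That is the step your plan is missing, and no amount of interleaving bands with unanchored Chow estimates substitutes for the anchor point $x_0$. You also do not address the very-large-threshold regime where even this smoothed-label warm start gives only $\sin(\theta_0/2) = O((\eps/p)\sqrt{\log(p/\eps)}) \gg 1/t$; the paper needs the separate machinery of \Cref{app extreme} (randomized rejection thresholds, the angle test of \Cref{alg find b}, and a second use of smoothing inside each level) to bridge from that initial angle down to $O(1/t)$, which is another nontrivial component your proposal omits.
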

\noindent In other words, we provide a computationally efficient constant-factor 
agnostic query learner with query complexity 
$\Tilde{O}(\min\{1/p, 1/\eps\} + d\cdot\polylog(1/\eps))$. 
The query complexity upper bound achieved by our algorithm is new, and essentially optimal (see subsequent discussion), even without the computational considerations. Moreover, our algorithm runs in polynomial time and achieves a constant-factor approximation to the optimal accuracy---which is best possible for proper learners. 

\vspace{-0.2cm}

\paragraph{Computational 
Complexity {\em vs} Error Guarantee} 
In the passive PAC model, there exist $d^{\poly(1/\eps)}$ complexity lower bounds 
for achieving an error of $\opt+\eps$~\cite{diakonikolas2020near, DKPZ21, diakonikolas2023near} for our problem. 
Consequently, the majority of work \cite{ABL17, diakonikolas2018learning,diakonikolas2022learning} in the passive setting had focused on 
designing efficient learners achieving a 
constant factor approximation of $O(\opt)+\eps$. 
These passive learning algorithms 
have sample complexity $\poly(d,1/\eps)$. 
Note that, by \Cref{th low}, it is impossible to modify these algorithms (for general halfspaces)
to achieve an active learner with low label complexity. 
Finally, we remark that even in the presence of query access,~\cite{diakonikolas2023agnostically} showed that it is computationally hard to achieve error $\opt+\eps$ for proper learning.

\vspace{-0.3cm}

\paragraph{Optimality of Query Complexity} 
In the realizable setting under the Gaussian distribution, 
a learner may query many points that are extremely far from the origin 
to find examples from the small class with few queries. 
However, such an algorithm is quite fragile to even a tiny amount of noise.
In particular, the query complexity achieved 
by our algorithm establishing \Cref{th main} is nearly optimal in the agnostic setting. 

On the one hand, $\Omega(d\log(1/\eps))$ queries are required 
because describing a halfspace up to error $\eps$ requires $d\log(1/\eps)$ 
bits of information~\cite{kulkarni1993active}. On the other hand, we argue that
the overhead term of $\Omega(\min\{1/p,1/\eps\})$ 
cannot be avoided in the agnostic setting. 
Such a statement can be deduced from a lower bound of~\cite{hopkins2020power}: 
they showed that in the realizable setting, any algorithm requires  
at least $\Omega((1/p)^{1-o(1)})$ MQs 
to see the first example from the small class (where $p$ is the bias of 
the target halfspace with respect to the uniform distribution on the unit ball); 
they also showed a similar lower bound of $\Omega(1/p)$ 
if the underlying distribution is the uniform distribution over the unit sphere. 
As the dimension $d$ increases, the standard Gaussian distribution is very well approximated by 
the uniform distribution over a $d$-dimensional sphere with radius $\sim \sqrt{d}$. 
Thus, an exponentially small level of noise would make every query far from this sphere 
contain no useful information. This allows us to show that, under the Gaussian distribution with a tiny amount of label noise, 
$\Omega((1/p)^{1-o(1)})$ queries are needed to see a single example 
from the small class. The proof of this statement is essentially identical to the argument 
in \cite{hopkins2020power} for unit ball. 

Though $\Omega(\min\{1/p,1/\eps\})$ queries for exploring small-class examples are in general unavoidable, in some practical applications, the learner could obtain a small number of random small-class examples from existing training datasets without making exploration. In fact, assuming we have an oracle that can give us a random small-class example (when $t^* \ge 0$, the oracle will return a random example with a negative label), the learning algorithm in \Cref{th main} can be implemented by calling the oracle $\Tilde{O}_\delta(1)$ times and making $\Tilde{O}_\delta(d\cdot\polylog(1/\eps))$ membership queries. This suggests that the only reason for the $\Omega(\min\{1/p,1/\eps\})$ term is that the learner needs to explore small-class examples. We defer the discussion of implementing the learning algorithm in \Cref{th main} to \Cref{app small class}.

\subsection{Preliminaries}

\paragraph{Basic Notation}
We will use $S^{d-1}$ to denote the $\ell_2$-unit sphere on $\R^d$. For a halfspace $h(x) = \sgn(w\cdot x+ t)$, $w \in S^{d-1}, t>0$, 
we use $p(t) = \Pr_{x \sim N(0,I)}(h(x) = -1)$ 
to denote its bias. 
For a halfspace $h(x)$, we define its Chow parameters vector (or simple Chow parameters)   
under the standard Gaussian distribution to be $\E_{x \sim N(0,I)} [h(x) x]$. 
Let $y(x): \R^d \to \{\pm 1\}$ be a (randomized) labeling function for examples in $\R^d$. We denote by $\err(h) = \Pr_{x \sim N(0,I)} (h(x) \neq y(x))$ to be the error of the hypothesis $h$ and $\opt = \min_{h \in H} \err(h)$, where $H$ is the class of halfspaces over $\R^d$. We will use $h^*$ to denote the halfspace with an error equal to $\opt$.
When there is no confusion, we will use $p$ to denote the bias of the optimal halfspace $h^*$. 

Let $D_x$ be a distribution over $\R^d$, $y(x)$ be a labeling function over $\R^d$,  
and $S=\{(x_i,y(x_i))\}_{i=1}^m$ be a set of \iid examples drawn from the distribution $D$ over 
$\R^d \times \{\pm 1\}$ such that the marginal distribution of $D$ is $D_x$. A membership query 
takes an $x$ in the support of $D_x$ as input and outputs $y(x)$. A label query takes an $x_i$, 
where $(x_i,y(x_i)) \in S$ as input and outputs $y(x_i)$. A learning algorithm $\A$ is allowed to 
use membership queries/label queries and aims to output a halfspace hypothesis 
$\hat{h}$ such that $\err(\hat{h}) \le O(\opt)+\eps$ 
by making as few queries as possible.

\paragraph{Problem Definitions}

For concreteness, we record the formal definitions of our two learning models. We focus on the agnostic model under Gaussian marginals for the class of halfspaces, which is the setting considered in this paper. 

\begin{definition}[Learning Halfspaces with Membership Queries] \label{def:mq}
    Let $H=\{h(x) = \sgn(w \cdot x + t): \R^d \to \{\pm 1\} \mid w \in S^{d-1}, t \ge 0\}$ be the class of halfspaces over $X = \R^d$. The labeling function $y(x): X \to \{\pm 1\}$ is a random function that maps each $x \in X$ to an unknown binary random variable. 
    For each $h \in H$, denote by $\err(h) = \Pr_{x \sim N(0,I)} \left( h(x) \neq y(x)
    \right)$, $\opt:= \min_{h \in H} \err(h)$ and $h^*(x) = \sgn(w^*\cdot x + t^*)$ any halfspace with error $\opt$. A membership query takes $x \in X$ as an input and returns a label $y \sim y(x)$. We say that a learning algorithm $\A$ is a constant-factor approximate learner if for every labeling function $y(x)$, and for every $\epsilon,\delta \in (0,1)$, it outputs some $\hat{h} \in H$ by adaptively making memberships queries, such that with probability at least $1-\delta$, $\err(\hat{h}) \le  O(\opt) + \eps$. The query complexity of $\A$ is the total number of membership queries it uses during the learning process.
\end{definition}

\begin{definition}[Active Learning of Halfspaces with Label Queries] \label{def:active}
    Let $H=\{h(x) = \sgn(w \cdot x + t): \R^d \to \{\pm 1\} \mid w \in S^{d-1}, t \ge 0\}$ be the class of halfspaces over $X = \R^d$. Let $D$ be a distribution over $\R^d \times \{\pm 1\}$ such that $D_x$, the marginal distribution over $x$, is the standard Gaussian distribution $N(0,I)$. 
    For each $h \in H$, denote by $\err(h) = \Pr_{(x,y) \sim N(0,I)} \left( h(x) \neq y
    \right)$, $\opt:= \min_{h \in H} \err(h)$ and $h^*(x) = \sgn(w^*\cdot x + t^*)$ any halfspace with error $\opt$. Let $S$ be a set of $m$ \iid labeled examples drawn from $D$. An active learning algorithm (with label query access) is given $S$ but with hidden labels and is allowed to make a label query for each $x \in S$ and observe its label $y(x)$.  We say that a learning algorithm $\A$ is a constant-factor approximate learner if for every distribution $D$ and for every $\epsilon,\delta \in (0,1)$, it outputs some $\hat{h} \in H$ by adaptively making label queries over a set of $m$ examples drawn \iid from $D$, such that with probability at least $1-\delta$, $\err(\hat{h}) \le  O(\opt) + \eps$. The label complexity of $\A$ is the total number label queries made over $S$ during the learning process. 
\end{definition}

\section{Lower Bound on Label Complexity: Proof of \Cref{th low}}\label{sec lower bound}

In this section, we prove our information-theoretic lower bound on 
the label complexity of active learning general halfspaces 
under the Gaussian distribution.

Before presenting our proof, we provide high-level intuition 
behind \Cref{th low} and the strategy of our proof. 
Previous work, see, e.g.,~\cite{dasgupta2004analysis,dasgupta2005analysis,hopkins2020power}, 
showed that if $S$ is a set of examples drawn uniformly from the unit sphere, 
and if $h^*$ is a halfspace with bias $p$ that is chosen uniformly, the following holds: 
no matter which query strategy a learning algorithm $\A$ uses, 
for the first $r$ queries, in expectation only $pr$ of them fall 
into the small cap on the sphere cut by $h^*$. 
Thus, if $\A$ makes less than $1/(2p)$ queries, it will with constant probability 
not see any negative examples; and it is therefore impossible to learn the target halfspace. 

In the Gaussian case, we will use a similar but stronger idea. If we are able to learn $h^*$ up to error $p/2$ with few queries, then we can randomly partition $S$ into two sets, use the first set to learn the halfspace and use the second part to find $d$ negative examples by paying another $O(d)$ queries in expectation. Formally, we have the following statement. 

\begin{lemma}\label{lm reduction}
Suppose there is an active learning algorithm that can make $r$ label queries 
over a pool $S$ of $m \ge \poly(d/p)$ examples drawn from $N(0,I)$ 
and learn any halfspace $h^*(x) = \sgn(w^*\cdot x+t^*)$  with bias $p$ up to error $p/2$ with probability at least $2/3$. 
Then there is an algorithm such that given a pool of $2m$ random examples $S$ 
drawn from the standard Gaussian distribution with hidden labels 
by some halfspace $h^*(x) = \sgn(w^*\cdot x+t^*)$ with bias $p$, 
it makes $r+O(d)$ queries and finds $d$ negative examples 
from $S$ with probability $1/2$.
\end{lemma}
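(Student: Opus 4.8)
The plan is to reduce the learnability of a $p$-bias halfspace (with error $p/2$) to the task of finding $d$ negative examples, by the following two-phase procedure. Given the pool $S$ of $2m$ examples with hidden labels, randomly split it into two halves $S_1, S_2$ of size $m$ each. Run the assumed active learner on $S_1$ to produce a hypothesis $\hat h$ with $\err(\hat h) \le p/2$ (with respect to $h^*$) with probability at least $2/3$, using $r$ label queries. Then use $\hat h$ as a cheap predictor on $S_2$: scan through $S_2$, and for each point $x$ with $\hat h(x) = -1$, spend one label query to reveal its true label $y(x) = h^*(x)$. Collect the points that turn out to be genuine negative examples; stop once $d$ of them are found. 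The total query cost is $r$ for the first phase plus the number of points of $S_2$ we probe in the second phase.

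The key estimates are then: (i) the fraction of $S_2$ on which $\hat h$ predicts $-1$ is close to $\Pr_{x\sim N(0,I)}[\hat h(x) = -1]$, which is at most $p + \err(\hat h) \le 3p/2$ by the triangle inequality for $0$-$1$ error, so in expectation we probe at most about $3pm/2$ points; (ii) among the points where $\hat h$ predicts $-1$, the conditional probability that $h^*$ also labels $-1$ is at least a constant — indeed $\Pr[h^*(x)=-1 \mid \hat h(x) = -1] \ge 1 - \err(\hat h)/\Pr[\hat h(x)=-1] \ge 1 - (p/2)/(p/2) $; here I need to be a bit careful, since if $\Pr[\hat h(x)=-1]$ is as small as $p/2$ this ratio could be $0$. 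The clean fix is to observe that $\Pr[\hat h(x)=-1] \ge p - \err(\hat h) \ge p/2$, and more usefully that the number of points of $S_2$ with $\hat h(x) = -1$ but $h^*(x) = +1$ is at most (roughly) $\err(\hat h)\cdot m \le pm/2$, while the number of true negatives of $h^*$ in $S_2$ is about $pm \gg d$ (using $m \ge \poly(d/p)$); a true negative of $h^*$ that $\hat h$ misclassifies as $+1$ is also controlled by $\err(\hat h) m \le pm/2$, so at least about $pm/2 \ge d$ true negatives of $h^*$ satisfy $\hat h(x)=-1$ and will be discovered by the scan. Thus the scan finds $d$ negative examples before exhausting the $O(pm)$ points flagged by $\hat h$, and since each flagged point costs one query, the second phase costs $O(pm)$ queries in the worst case — which is too many.

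So the refinement is to bound the second-phase cost by the \emph{number of points probed until $d$ negatives are found}, not by the total number of flagged points. Conditioned on $\hat h$ being $p/2$-accurate, among the flagged points a constant fraction are true $h^*$-negatives (by the counting in the previous paragraph: at least $pm/2$ true negatives are flagged, out of at most $3pm/2$ flagged points total, so the success rate is at least $1/3$). Scanning flagged points in random order (which is automatic since $S_2$ is i.i.d.\ and the split is random), the number probed before collecting $d$ successes is a negative-binomial-type quantity with expectation $O(d)$, and concentrates, so with probability at least, say, $9/10$ the second phase uses at most $O(d)$ queries. Combining with the $2/3$ success probability of the learner and a union bound, the whole procedure uses $r + O(d)$ queries and outputs $d$ negative examples with probability at least $1/2$, which is the claim. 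The main obstacle is the one flagged above — making sure that even when the bias is tiny ($\Pr[\hat h = -1]$ could be as small as $p/2$), a constant fraction of flagged points are genuine negatives; this is handled by the deterministic counting bound on misclassified points ($\le \err(\hat h)\, m$ on an i.i.d.\ sample, up to Chernoff fluctuations, using $m \ge \poly(d/p)$ to make all the relevant empirical quantities concentrate around their means) rather than by a naive conditional-probability estimate.
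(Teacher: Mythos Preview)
Your proposal is correct and takes essentially the same approach as the paper: split the pool, run the assumed learner on one half to get $\hat h$, then query $O(d)$ points flagged negative by $\hat h$ in the other half to harvest $d$ true negatives. The paper's proof is terser but identical in structure; your extra care about the conditional probability was in fact unnecessary, since $\Pr[h^*(x)=-1,\hat h(x)=-1]\ge p-\err(\hat h)\ge p/2$ while $\Pr[\hat h(x)=-1]\le p+\err(\hat h)\le 3p/2$, so the conditional success rate is always at least $1/3$ (in fact $\ge 2/3$) without needing the sample-level counting argument.
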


\begin{proof}
Let $\A$ be such a learning algorithm. We select a random set of $m$ examples $S_1$ and give it to $\A$. With probability at least $2/3$, $\A$ makes $r$ queries and learns a halfspace $\hat{h}$ with error $p/2$ with respect to $h^*$. This implies that given a Gaussian example, with probability at least $p/2$ it will predict negative, and given it predicts negative, with probability at least $1/2$ it is actually negative. Since $m$ is at least $\poly(d,1/p)$, we know that with enough high probability, at least $\Omega(d)$ examples will be predicted by negative by $\hat{h}$ and at least a constant fraction of these examples are actually negative. Thus, given such a $\hat{h}$ with probability at least $3/4$, we can find $d$ negative examples in $S$ by randomly querying $O(d)$ examples that are predicted as negative by $\hat{h}$.
\end{proof}

We will show that finding $d$ negative examples from $S$ requires many queries. 
The idea is that since $S$ is sampled from a standard Gaussian in high dimensions, 
every pair of examples is almost orthogonal unless $m$ is as large as $2^d$. 
If we have made $1/p$ queries over $S$ and found our first negative example, 
then this negative example will only provide us with very little knowledge 
to find the next negative example --- as no example in the pool has a large correlation with it. Therefore, it will still take us another approximately $1/p$ queries 
to find the next negative example. Such an issue only disappears 
after we have already found roughly $d$ negative examples; 
at which time, the average of the $d$ examples has a good correlation with $w^*$. 
Therefore, it would take us roughly $d/p$ queries in total. 
We remark that such an argument is hard to formalize, 
because, besides negative examples, the algorithm has also seen 
many positive examples in the process. 
It is thus challenging to argue that the algorithm cannot 
make good use of the information obtained from these positive examples. 

To overcome this difficulty, our proof strategy works as follows.
Each algorithm $\A$ can be described as a decision tree. 
Each tree node represents the example queried in a given round. 
Every time the algorithm sees a negative example, 
it moves to the left; otherwise, it moves to the right.
Suppose that $\A$ wants to find $k$ negative examples with $r$ queries. 
Then there are at most $\binom{r}{k} \le (er/k)^k$ paths of the tree, 
where $\A$ successfully finds $k$ negative examples, 
and for each of the paths there are exactly $k$ examples 
that are negative upon queried. For a $k$-tuple of examples, 
we will derive a deterministic condition such that if the $k$ examples 
satisfy the condition, a random halfspace with bias $p$ 
will have only roughly $p^k$ probability to label all of the $k$ examples negative. 

Formally, we establish the following technical lemma 

\begin{lemma}\label{lm deterministic}
    Let $A \in \R^{k \times d}$ be a matrix with row vectors $x_1,\dots,x_k$. 
    Let $t^*>C>0$ for some sufficiently large constant $C$.  
    Let $h^*(x) = \sgn(w^* \cdot x+ t^*)$ be a random halfspace with bias $p$
    with $w^* \sim S^{d-1}$ chosen uniformly from $S^{d-1}$. 
    If $\norm{AA^{\top}-dI}_2 \le O(d/(t^*)^2)$, then with probability 
    at most $O(p\log(1/p))^k$, where $p$ is the bias of $h^*$ under the Gaussian distribution, 
    $h^*(x_i) = -1$ for $i=1,\dots, k$.
\end{lemma}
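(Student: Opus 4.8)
The plan is to condition on the event $E_i = \{h^*(x_i) = -1\} = \{w^* \cdot x_i < -t^*\}$ and bound $\Pr[E_1 \cap \cdots \cap E_k]$ by induction on $k$, peeling off one constraint at a time. The single-constraint probability is $\Pr[w^* \cdot x_1 < -t^*]$; since $w^*$ is uniform on $S^{d-1}$ and $\|x_1\| \approx \sqrt{d}$ (which the spectral hypothesis $\|AA^\top - dI\|_2 \le O(d/(t^*)^2)$ guarantees up to a $(1 \pm O(1/(t^*)^2))$ factor), this is comparable to $\Pr_{g \sim N(0,1)}[g < -t^*] = p$ up to constant factors — more precisely one wants it to be $O(p)$, using $t^* > C$. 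So the base case gives $O(p)$, and we must show each additional constraint costs a factor of $O(p \log(1/p))$.

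For the inductive step, I would fix $x_1, \dots, x_{k-1}$ and analyze the conditional law of $w^*$ given $E_1 \cap \cdots \cap E_{k-1}$. The key geometric point is the near-orthonormality coming from $\|AA^\top - dI\|_2 \le O(d/(t^*)^2)$: after rescaling the rows to unit vectors $\hat x_i = x_i / \|x_i\|$, the Gram matrix of $\hat x_1, \dots, \hat x_{k-1}$ is within $O(1/(t^*)^2)$ of the identity in spectral norm, so these vectors are nearly orthonormal. Write $w^* = \sum_{i<k} a_i \hat x_i + v$ with $v$ orthogonal to $\mathrm{span}(\hat x_1,\dots,\hat x_{k-1})$. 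The events $E_1, \dots, E_{k-1}$ constrain only the coefficients $a_i$ (equivalently, the projection of $w^*$ onto the span), forcing each $\hat x_i \cdot w^*$ to be roughly $\le -t^*/\sqrt d$; conditioned on any such constraint, $\hat x_k \cdot w^*$ is, because $\hat x_k$ is nearly orthogonal to the span, dominated by $\hat x_k \cdot v$ plus a small "leakage" term of size $O(1/(t^*)^2) \cdot \max_i |a_i|$. On the conditioning event $\max_i|a_i|$ is $O(t^*/\sqrt d)$ up to logarithmic factors with overwhelming conditional probability (a typical point of the constrained region has all $a_i = \Theta(t^*/\sqrt d)$, and the tail is Gaussian-like), so the leakage is $O(1/(t^* \sqrt d))$, negligible compared to the scale $t^*/\sqrt d$ at which we need $\hat x_k \cdot w^* < -t^*/\sqrt d$. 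Hence conditionally, $\hat x_k \cdot w^*$ behaves like a fresh $N(0, \Theta(1/d))$ variable (the marginal of one coordinate of a uniform sphere point, restricted to the orthocomplement of a $(k-1)$-dimensional subspace, which only changes the effective dimension by $k \ll d$), and the conditional probability that it drops below $-t^*/\sqrt d$ is $O(p)$ — but I have to be careful on the low-probability part of the conditioning event where $\max_i |a_i|$ is large, which is where the extra $\log(1/p)$ factor enters: splitting the conditioning event by the value of $\max_i|a_i|$ and summing the contributions yields $O(p \log(1/p))$ rather than $O(p)$.

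Carrying this out in order: (1) use the spectral hypothesis to pass to near-orthonormal $\hat x_i$ and record the $(1\pm O(1/(t^*)^2))$ control on norms; (2) set up the decomposition $w^* = \sum a_i \hat x_i + v$ and identify the events $E_i$ with halfspace constraints on the $a_i$-part plus negligible leakage; (3) prove the base case $\Pr[E_1] = O(p)$; (4) for the inductive step, condition on $E_1,\dots,E_{k-1}$, stratify by the magnitude of the projection of $w^*$ onto the span, and on each stratum bound the conditional probability of $E_k$ by $O(p)$ times the leakage-corrected factor; (5) sum over strata to get the $\log(1/p)$ loss, yielding $\Pr[E_1 \cap \cdots \cap E_k] \le O(p)\cdot O(p\log(1/p))^{k-1} \le O(p\log(1/p))^k$.

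The main obstacle is step (4)–(5): controlling the "leakage" of the earlier constraints into the direction $\hat x_k$ and showing it cannot conspire to make the conditional probability of $E_k$ much larger than $p$. The delicate case is when $w^*$'s projection onto $\mathrm{span}(\hat x_1,\dots,\hat x_{k-1})$ is atypically large in norm — there, the $O(1/(t^*)^2)$-size off-diagonal Gram entries can shift $\hat x_k \cdot w^*$ by an amount that is no longer negligible relative to $t^*/\sqrt d$. The resolution is quantitative: the conditional density of that projection norm decays fast enough (it is essentially a $\chi$-type tail) that the contribution of the bad strata, after multiplying by the trivially-bounded conditional probability of $E_k$ there, is still $O(p\log(1/p))$; the $\log(1/p)$ is exactly the number of dyadic strata one needs before the tail kills the contribution. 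A secondary technical point is making sure the constant in $\|AA^\top - dI\|_2 \le O(d/(t^*)^2)$ is small enough (relative to $C$) that all the "negligible" terms are genuinely negligible throughout the induction, uniformly in $k$.
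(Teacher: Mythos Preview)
Your inductive peeling approach is genuinely different from the paper's, and as written it has a real gap in the leakage estimate that breaks the induction in the parameter regime where the lemma is actually used.

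\textbf{The gap.} You claim the leakage term is $O(1/(t^*)^2)\cdot \max_i|a_i|$. But the leakage is the full sum
\[
L \;=\; \sum_{i<k}\langle \hat x_k,\hat x_i\rangle\, a_i,
\]
and the spectral hypothesis only gives you $\bigl\|(\langle \hat x_k,\hat x_i\rangle)_{i<k}\bigr\|_2 \le O(1/(t^*)^2)$ (it is one row of the normalized Gram matrix minus the identity). Cauchy--Schwarz then yields $|L|\le O(1/(t^*)^2)\cdot\|(a_i)\|_2$, not $O(1/(t^*)^2)\cdot\max_i|a_i|$. On the conditioning event each $|a_i|\gtrsim t^*/\sqrt d$, so $\|(a_i)\|_2\gtrsim \sqrt{k}\,t^*/\sqrt d$ and the leakage is of order $\sqrt{k}/(t^*\sqrt d)$. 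Once $k\gg (t^*)^4$ --- which is exactly the regime of the application, where $k=\Theta(d/(\log(m)(t^*)^4))$ --- this swamps the threshold $t^*/\sqrt d$. Concretely, take $\hat x_1,\dots,\hat x_{k-1}$ orthonormal and give $\hat x_k$ inner product $\Theta(1/(\sqrt{k}(t^*)^2))$ with each; the spectral condition is satisfied, but on $E_1\cap\cdots\cap E_{k-1}$ the mean of $\hat x_k\cdot w^*$ is shifted by $\Theta(\sqrt{k}/(t^*\sqrt d))$, so $\Pr[E_k\mid E_1,\dots,E_{k-1}]$ is \emph{not} $O(p\log(1/p))$. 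The per-step bound you need simply fails for such $A$; only the \emph{product} of conditional probabilities is small, and your strata-and-sum argument cannot recover that.

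\textbf{What the paper does instead.} The paper avoids peeling entirely with a one-shot argument. On $\bigcap_i E_i$ one has $\|Aw^*\|^2\ge k(t^*)^2$, and hence the squared norm of the projection of $w^*$ onto the row span,
\[
B \;=\; (w^*)^\top A^\top(AA^\top)^{-1}Aw^* \;\ge\; \frac{\|Aw^*\|^2}{\|AA^\top\|_2}\;\ge\;\frac{k(t^*)^2}{d(1+O(1/(t^*)^2))}.
\]
Since $w^*$ is uniform on $S^{d-1}$, $B$ follows a $\mathrm{Beta}(k/2,(d-k)/2)$ law, and a standard large-deviation bound for that Beta tail gives
\[
\Pr\bigl[B\ge k(t^*)^2/\|AA^\top\|_2\bigr]\;\le\;\Bigl(O\bigl(t^* e^{-(t^*)^2/2}\bigr)\Bigr)^k\;=\;\bigl(O(p\log(1/p))\bigr)^k,
\]
using the hypothesis $\|AA^\top\|_2\le d(1+O(1/(t^*)^2))$ and Komatsu's inequality relating $t^*$ and $p$. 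This bounds the whole intersection at once and never needs the conditional probabilities to be individually small, which is why it is insensitive to the leakage issue that sinks the inductive route.
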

\begin{proof}
 Let $v = Aw^* = (w^*\cdot x_1,\dots,w^*\cdot x_k)^\top$. Consider the projection of $w^*$ over the subspace spanned by the row vectors of $A$, $A^{\top}(AA^{\top})^{-1} Aw^*$. Assuming that $x_1,\dots,x_k$ are all negative, then $\norm{v}^2 \ge k(t^*)^2$. This implies that the square of the norm of the projection of $w^*$ onto the subspace is 
\begin{align*}
   B:= (w^*)^{\top}A^{\top}(AA^{\top})^{-1} Aw^*= v^{\top} (AA^{\top})^{-1} v \ge \norm{v}^2/\norm{AA^{\top}}_2 \ge k(t^*)^2/\norm{AA^{\top}}_2.
\end{align*}
Since $w^*$ is uniformly chosen from the unit sphere, by Lemma B.1 in \cite{kontonis2024gain}, the square norm of $w^*$ projected onto a fixed $k-$dimensional subspace is a random variable drawn from a beta distribution $B(\frac{k}{2},\frac{d-k}{2})$.
By Lemma 2.2 in \cite{dasgupta2003elementary}, if $\norm{AA^{\top}}_2 \le d(1+O(1/(t^*)^2))$.
\begin{align*}
    \Pr\left(B \ge k(t^*)^2/\norm{AA^{\top}}_2^2 \right) & \le \exp\left(-\frac{k}{2}(\frac{d(t^*)^2}{\norm{AA^{\top}}_2}-1-\log(\frac{d (t^*)^2}{\norm{AA^{\top}}_2}))\right) \\
    & =\left( \sqrt{\frac{(t^*)^2d}{\norm{AA^{\top}}_2}} \exp(-\frac{1}{2}(\frac{d(t^*)^2}{\norm{AA^{\top}}_2}-1))\right)^k \\
    & \le \left(O( (t^*) \exp(-\frac{(t^*)^2}{2}(1-O(1/(t^*)^2)) ) \right)^k \\
    & = \left(O((t^*)^2 \frac{1}{t^*} \exp(-\frac{(t^*)^2}{2}))\right)^k
    \le \left(O(p\log(1/p))\right)^k.
\end{align*} 
The last inequality follows by \Cref{fact bias}.
\end{proof}

Thus, if the $\binom{r}{k}$ tuples all satisfy such a condition, 
then $\A$ will succeed with a fairly tiny probability 
unless $r$ is larger than $\Tilde{\Omega}(k/p)$. So, in the last step of the proof, 
we will show in \Cref{lm high probability} that by taking 
$k \approx d/(\log(m)\polylog(1/p))$, with high probability 
every $k$-tuple of examples in $S$ will satisfy 
the deterministic condition. 
Thus, no algorithm can succeed with a constant probability, 
unless it makes $\Tilde{\Omega}(d/(p\log(m)))$ queries.  

\begin{lemma}\label{lm high probability}
    Let $S \subseteq \R^d$ be a set of $m$ examples drawn \iid from $N(0,I)$. 
    Let $t^*>C>0$ for a sufficiently large constant $C$ and $k=O(d/\log(m)(t^*)^4)$. 
    Then, with probability at least $2/3$, for every $k$-tuple of examples $\{x_1,\dots,x_k\} \subseteq S$, $\norm{AA^{\top}-dI}_2 \le d/(t^*)^2$, 
    where $A \in \R^{k \times d}$ be a matrix with row vectors $x_1,\dots,x_k$.
\end{lemma}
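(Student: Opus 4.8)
The statement is a uniform (over all ways of choosing the rows from $S$) version of the fact that a $k\times d$ Gaussian matrix has a near-isometric Gram matrix when $k\ll d$. The plan is two-step: (i) prove a strong \emph{per-subset} estimate, namely that for a fixed $k$-subset $\{x_1,\dots,x_k\}\subseteq S$ we have $\norm{AA^\top-dI}_2\le d/(t^*)^2$ except with probability $\exp(-\Omega(d/(t^*)^4))$; and (ii) take a union bound over the $\binom{m}{k}\le m^k$ possible $k$-subsets. For (ii) to succeed, the per-subset failure probability must beat $m^{-k}=\exp(-k\log m)$, and it is exactly the comparison $\exp(-\Omega(d/(t^*)^4))$ versus $\exp(-k\log m)$ that forces $k=O(d/(\log m\,(t^*)^4))$; this is the only place the hypothesis on $k$ enters. (We may assume $m\ge 2$, else the statement is vacuous.)

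For the per-subset bound I would run a standard $\varepsilon$-net argument. Fix a $k$-subset, form $A\in\R^{k\times d}$ with these rows, and let $\mathcal{C}\subseteq S^{k-1}$ be a $(1/4)$-net with $|\mathcal{C}|\le 9^k$. Since $AA^\top-dI$ is symmetric, $\norm{AA^\top-dI}_2\le 2\max_{u\in\mathcal{C}}|u^\top(AA^\top-dI)u|=2\max_{u\in\mathcal{C}}\bigl|\,\norm{A^\top u}_2^2-d\,\bigr|$. The key point is that for each \emph{fixed} unit $u\in\R^k$, since the rows $x_1,\dots,x_k$ are \iid $N(0,I_d)$, the vector $A^\top u=\sum_{i=1}^k u_i x_i$ is distributed as $N(0,I_d)$, so $\norm{A^\top u}_2^2$ is a $\chi^2$ variable with $d$ degrees of freedom. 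Laurent--Massart concentration gives $\Pr[\,|\norm{A^\top u}_2^2-d|\ge s\,]\le 2\exp(-s^2/(16d))$ for $0<s\le 4d$; taking $s=d/(2(t^*)^2)$ and union bounding over $\mathcal{C}$ yields $\Pr[\norm{AA^\top-dI}_2> d/(t^*)^2]\le 2\cdot 9^k\exp(-d/(64(t^*)^4))$. (Alternatively one could cite the classical non-asymptotic bound $\sqrt d-\sqrt k-s\le s_{\min}(A)\le s_{\max}(A)\le\sqrt d+\sqrt k+s$, valid for Gaussian $A$ with probability $\ge 1-2e^{-s^2/2}$; the net argument is included to keep the proof self-contained.)

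To finish, union bound over all $\binom{m}{k}\le m^k$ subsets, obtaining total failure probability at most $2(9m)^k\exp(-d/(64(t^*)^4))$. Using $m\ge 2$ and $k\ge 1$, one has $k\log(9m)\le c_1 k\log m$ for an absolute constant $c_1$, so this is at most $1/3$ as soon as $d/(64(t^*)^4)\ge c_1 k\log m+O(1)$, i.e.\ as soon as $k\le c_0\,d/(\log m\,(t^*)^4)$ for a sufficiently small absolute constant $c_0>0$ --- which is precisely the assumed bound on $k$ (with the constant hidden in the $O(\cdot)$ taken small enough). The argument has no technically hard step; the care is entirely in the bookkeeping, ensuring the net union ($9^k$), the subset union ($m^k$), and the $\chi^2$ tail ($\exp(-\Theta(d/(t^*)^4))$) are simultaneously absorbed. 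Two features worth flagging: the $m^k$ subset count is what introduces the $\log m$ in the denominator of the bound on $k$, and the denominator carries $(t^*)^4$ rather than $(t^*)^2$ because a $\chi^2_d$ deviation of size $d/(t^*)^2$ lies in the Gaussian (rather than Poissonian) tail regime and hence costs $\exp(-\Theta((d/(t^*)^2)^2/d))=\exp(-\Theta(d/(t^*)^4))$.
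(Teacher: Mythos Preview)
Your proof is correct and follows essentially the same approach as the paper: an $\varepsilon$-net reduction on $S^{k-1}$, chi-squared concentration for $\norm{A^\top u}_2^2$ at each fixed net point, and a union bound over the net and over all $\binom{m}{k}$ subsets. The paper phrases the chi-squared step via the columns of $A$ rather than via $A^\top u=\sum_i u_i x_i$, but this is the same computation, and the remaining differences are only in the choice of constants.
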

\begin{proof}
We will first show that for a given $A \in \R^{k\times d}$, $\norm{AA^{\top}-dI}_2$ is small with high probability if the rows of $A$ are drawn \iid from $d$-dimensional standard Gaussian. Let $\mathcal{N}$ be an $1/4$-net of $S^{k-1}$. By standard results (see, e.g.,~\cite{Ver18}), we know that $\card{\mathcal{N}} \le e^{3k}$ and $\norm{AA^{\top}-dI}_2 \le 2 \sup_{u \in \mathcal{N}} \abs{u^T(AA^{\top}-dI)u}$. Thus, to show that $\norm{AA^{\top}-dI}_2$ is small with high probability, it is equivalent to show with high probability for every $u \in \mathcal{N}$, $\abs{u^{\top}(AA^{\top}-dI)u}$ is small.
 
 Fix $u \in \mathcal{N}$ to be a unit vector. We have 
\begin{align*}
    u^{\top}AA^{\top}u -du^{\top}u = \sum_{j=1}^d (u^{\top}A_j)^2 -d.
\end{align*}
Notice that each $u^{\top}A_j$ is a standard Gaussian variable and thus $\sum_{j=1}^d (u^{\top}A_j)^2$ is a chi-squared distribution with freedom $d$. By known concentration bounds 
(see, e.g., \cite{Ver18}), we have 
\begin{align*}
    \Pr\left(\abs{\sum_{j=1}^d (u^{\top}A_j)^2 -d} \ge 2\xi d \right) \le 2\exp\left(-d\xi^2/2\right).
\end{align*}
Since $\card{\mathcal{N}} \le e^{3k}$, we know that 
\begin{align*}
    \Pr\left( \norm{AA^{\top}-dI}_2 \ge \xi d \right) \le \Pr\left( \sup_{u \in \mathcal{N}} \abs{u^{\top}(AA^{\top}-d)u} \ge 2\xi d \right) \le 2\exp(3k-d\xi^2/2).
\end{align*}
Since there are at most $\binom{m}{k}$ such $k$-tuples of examples, 
the probability that there exists a $k$-tuple 
such that $\norm{AA^{\top}-dI}_2$ is larger than $\xi d= O(d/(t^*)^2)$ 
is at most
\begin{align*}
    2\binom{m}{k}\exp(3k-d\xi^2/2)\le 2\left(\frac{em}{k}\right)^k \exp(3k-d\xi^2/2) \le 2\exp(-(d\xi^2/2-3k-k\log(em/k))) \le 2/3,
\end{align*}
by choosing $k=d/(\log(m)(t^*)^4)$ and $\xi=O(1/(t^*)^2)$.
\end{proof}

We are now ready to prove our main lower bound result. 

\begin{proof}[Proof of \Cref{th low}]
We will start by showing that, 
given a set $S$ of $m$ points drawn \iid from a Gaussian distribution, the following holds. 
With probability at least $2/3$, for every algorithm $\A$ 
there exists a halfspace $h^*=\sgn(w^*\cdot x+t^*)$ with bias $p$ 
such that if $\A$ makes only $r=\Tilde{O}(d/p\log(m))$ label queries over $S$, 
then with probability at least $2/3$ it will not be able to find $k$ negative examples 
in $S$ for some $k \le d$. 
By Yao's minimax principle, it is sufficient to show that 
there is a distribution over halfspaces $h^*$ such that
for any deterministic active learning algorithm, the following holds: 
given $m$ random Gaussian examples, if the learning algorithm makes $r$ queries, 
with probability $2/3$ it cannot find $k$ negative examples.
We will fix the threshold $t^*$ of $h^*$ and draw $w^*$ uniformly from the unit sphere.

By \Cref{lm high probability}, we know that by choosing $k=O(d/\log(m)(t^*)^4)$, 
with probability at least $2/3$, for every $k$-tuple of examples $x_1,\dots,x_k \in S$, $\norm{AA^{\top}-dI}_2 \le d/(t^*)^2$, where $A \in \R^{k \times d}$ is 
a matrix with row vectors $x_1,\dots,x_k$. 
By \Cref{lm deterministic}, we know that every $k$-tuple of examples 
$x_1,\dots,x_k \in S$ has a probability $\alpha^k$, which is at most $O(p\log p)^k$ 
to be labeled all negative by the random halfspace $h^*$. 
Notice that every query algorithm can be expressed as a binary tree $T$. 
Each node of the tree represents an example where the algorithm makes queries at a time. 
If the example at node $v$ is negative, then the algorithm will query 
the left child of $v$, and otherwise it will query the right child of $v$. 
The algorithm stops making queries when either it has queried $r$ examples 
or it has queried $k$ negative examples. In particular, for a given search 
algorithm, there are at most $\binom{r}{k}$ different possible outcomes 
where it successfully finds $k$ negative examples. Furthermore, 
for each of the possible outcomes, there is a set of $k$ examples in $S$ 
that correspond to the $k$ negative examples the algorithm finds. 
Thus, the probability that the algorithm successfully finds $k$ negative examples 
is bounded above by the probability that there exists one of the $\binom{r}{k}$ $k$-tuples 
of examples in $S$ that are all labeled negative by $h^*$. 
Such a probability can be bounded above by
\begin{align*}
    \binom{r}{k}\alpha^k \le \left(\frac{er}{k} O(p\log(1/p))\right)^k \le 2/3 \;,
\end{align*}
if $r \le O(k/p\log(1/p)) = O(d/(p\log(m)\polylog(1/p))$.
By \Cref{lm reduction}, we know that if we can make $O(d/(p\log(m)\polylog(1/p))$ label queries 
to learn a $p$-biased halfspace up to error $p/2$ over a set $S$ of $m/2$ Gaussian examples, 
then we can use $O(d/(p\log(m)\polylog(1/p))$ queries to find $d$ negative examples 
among $m$ Gaussian points. This leads to a contradiction. 
Thus, the label complexity of the learning problem is $\Tilde{\Omega}(d/(p\log(m)))$, as desired.

\end{proof}

\section{Robustly Learning of General Halfspaces with Queries: Proof of \Cref{th main}}\label{sec overview}

In this section, we present our main algorithmic result, \Cref{th main}. 
We refer the reader to \Cref{app main} for the full proof of \Cref{th main}. 
Throughout the paper, we will assume for convenience 
that the noise level $\opt \le \eps$. 
Such an assumption can be made without loss of generality, 
as discussed in \Cref{app opt}.
We first present our main algorithm, \Cref{alg main}. 
\Cref{alg main} will maintain a list of $\polylog(1/\eps)$ 
candidate hypotheses at least one of which has error $O(\opt)+\eps$. 
We will then use a standard tournament approach to 
find an accurate hypothesis among them.

\begin{algorithm}
		\caption{\textsc{Query Learning Halfspace}(Efficient Agnostic Learning Halfspaces with Queries)}\label{alg main}
		\begin{algorithmic} [1]
\State\textbf{Input:} error parameter $\epsilon \in (0,1)$, confidence parameter $\delta \in (0,1)$ 
 \State\textbf{Output:} halspace $\hat{h}(x) = \sgn(\hat{w}\cdot x + \hat{t}),$ where $\hat{w} \in S^{d-1}, \hat{t}>0$

\State $\mathcal{C} \gets \emptyset$ \Comment{\textbf{Create a list of candidate hypothesises $\mathcal{C}$}}

\State Use $\Tilde{O}(\min\{1/p,1/\eps\})$ queries to estimate $p$ by some $\hat{p}$ such that $\hat{p} \le p \le 2\hat{p}$ (or verify $p< C\eps$ and return $+1$, the constant hypothesis).
\State Let $t_a,t_b>0$ such that a halfspace with threshold $t_a$ has bias $2\hat{p}$ and with threshold $t_b$ has bias $\hat{p}$.
\State Build grid points $t_a=t_0 < t_1 <\dots<t_\psi = t_b$ such that $\abs{t_{i+1}-t_i}=1/(2\log(1/\eps)), \forall i \le \psi-1$. \Comment{\textbf{Guess the true threshold $t^*$ with $t' \in \{t_0,t_1,\dots\}$}}

\For{$j=0, \dots, \psi$}

\State Repeat the following procedure $\polylog(1/\eps)\log(1/\delta)$ times
\State $w_0 \gets$ \textsc{Initialization}$(\eps,t_j,\delta/\polylog(1/\eps))$ 
\Comment{\textbf{Find a $w_0 \in S^{d-1}$ as a warm start}}
\State $(w_T,\hat{t}) \gets$ \textsc{Refine}$(w_0,t_j,\eps.\delta/\polylog(1/\eps)) $ \Comment{\textbf{ Find a $w_T \in S^{d-1}$ close enough to $w^*$ and $\hat{t}$ close enough to $t^*$ based on $w_0$}}
\State $\mathcal{C} \gets \mathcal{C} \cup \{\sgn(w_T\cdot x+ \hat{t})\}$ \Comment{\textbf{Add a new candidate hypothesis to $\mathcal{C}$}}
\EndFor
\State  Find a good hypothesis $\hat{h}$ from $\mathcal{C}$ using \Cref{lm rournament}, a standard tournament approach
\State\Return $\hat{h}$

\end{algorithmic}
\end{algorithm}

At the beginning of \Cref{alg main}, we will use random queries to approximately estimate the bias $p$ of the optimal halfspace up to a constant factor. As we will discuss in \Cref{app bias estimation}, such an estimation can be done with only $\Tilde{O}(\min\{1/p,1/\eps\})$ queries by applying a doubling trick to the coin estimation problem. In particular, if we find $p<C\eps$, we can directly output a constant hypothesis as it has error only $O(\eps)$. Since $t^*$ is unknown to us, such an approach can prevent us from using some $t'$ which is much larger than $t^*$ in the rest of the learning procedure, which will potentially lead to a larger query complexity. With such a $\hat{p}$, $t^*$ will fall into a reasonable range $[t_a,t_b]$. We next partition $[t_a,t_b]$ into a grid of size $O(1/\log(1/\eps))$ and use each of the grid points as an initial guess of $t^*$. In particular, at least one of these grid points $t_j$ is $O(1/\log(1/\eps))$ close to $t^*$. Although such a $t_j$ is not accurate enough to be used in the final output hypothesis, as $t^* \le \sqrt{\log(1/\eps)}$, we will show later that such a $t_j$ is enough for us to use it to learn $w^*,t^*$ accurately. Suppose now we have such a good $t_j$. We will design two subroutines that make use of $t_j$ to produce a good hypothesis $\sgn(w_T\cdot x+\hat{t})$. The first algorithm will take $t_j$ and the noise level $\eps$ as its input and produce a unit vector $w_0$ as an initialization. We will show in \Cref{sec initialization} that as long as $\abs{t_j-t^*} \le 1/\log(1/\eps)$, we can with probability at least $1/\log(1/\eps)$ produce some $w_0$ such that $\theta(w_0,w^*) \le O(1/t_j)$. By repeating such an initialization algorithm $\polylog(1/\eps)$ times, with high probability one of these runs will succeed.
In particular, such an algorithm has a query complexity of $\Tilde{O}(1/p+d\cdot\polylog(1/\eps))$. 

Now assume we have such a $w_0$ as a warm-start. 
Our second subroutine is to refine the direction $w_0$ and the threshold $t_j$. More specifically, we will maintain a unit vector $w_i$ such that $\theta_i=\theta(w_i,w^*)$ and an upper bound $\sigma_i$ for $\sin(\theta_i/2)$. In each round of the refining algorithm, we will use $\Tilde{O}(d)$ queries to update $w_i$. In particular, in each round $\sigma_i$ will decrease by a constant factor and thus after at most $T=\Tilde{O}(\log(1/\eps))$ rounds, we will have $\sin(\theta_T/2) \le \sigma_T = C\eps \exp(t_j^2/2)$. As we will show in \Cref{sec refine}, 
provided the correct $t^*$, $\sgn(w_T\cdot x+t^*)$ is at most $O(\eps)$ far from $h^*$. 
However, to output a good hypothesis, we still need to learn $t^*$ up to a high accuracy. 
When $t^*$ is small, we even have to estimate $t^*$ up to error $O(\eps)$, 
which typically needs many queries. However, as we will show in \Cref{sec refine}, 
given $w_T$ close enough to $w^*$, we are able to combine the localization technique 
used in \cite{diakonikolas2018learning} with this fact to learn $t^*$ using 
only $O(\log(1/\eps))$ queries. This gives an overview of \Cref{alg main} and its query complexity.

\subsection{Refining a Warm-Start}\label{sec refine}
We will start by discussing how to refine a warm start $w_0$ by proving the following theorem. The proof of the theorem and the main algorithm, \Cref{alg refine} can be found in \Cref{app proof refine}.
\begin{theorem}\label{th refine}
  Let  $h^*(x)=\sgn(w^*\cdot x+ t^*)$ be a halfspace such that $\err(h^*)=\opt \le \eps$. Let $t' \le O(\sqrt{\log(1/\eps)}),w_0 \in S^{d-1}$ be inputs of \Cref{alg refine}. If $t'-1/\log(1/\eps)\le t^* \le t'$, $t'\exp((t')^2/2) \le 1/(C\eps)$ for some large enough constant $C$ and  $\sin(\theta(w_0,w^*)/2) \le \sigma_0:=\min\{1/t',1/2\}$, then \Cref{alg refine} makes $M=\Tilde{O}_\delta(d\cdot\polylog(1/\eps))$ membership queries, runs in $\poly(d,M)$ time, 
  and outputs $(w_T,\hat{t})$ such that with probability at least $1-O(\delta)$, $\err(\sgn(w_T\cdot x+\hat{t})) \le O(\eps)$.
\end{theorem}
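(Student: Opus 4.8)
The plan is to run \Cref{alg refine} in two stages. Stage one is an iterative \emph{direction-refinement} loop producing unit vectors $w_0,w_1,\dots,w_T$ with $\sin(\theta_i/2)\le\sigma_i:=\sigma_0/2^i$ (together with a coarse threshold estimate $t_i$); stage two is a \emph{threshold-localization} step that outputs the final $\hat t$. The target is fixed by the standard estimate that two halfspaces with a common bias $\Phi(-t^*)$, directions at angle $\theta$ and thresholds $\Delta$ apart, disagree on a $\Theta\big((\sin\theta+|\Delta|)\,e^{-(t^*)^2/2}\big)$-fraction of the Gaussian; since $t'\approx t^*$, it suffices to drive $\sin\theta_T$ and $|\hat t-t^*|$ down to $O(\eps\,e^{(t')^2/2})$. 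The warm start gives $\sigma_0=\min\{1/t',1/2\}$, and the hypothesis $t'e^{(t')^2/2}\le 1/(C\eps)$ is precisely the statement $\sigma_0\gtrsim\eps\,e^{(t')^2/2}$ (equivalently, the bias $p=\Phi(-t')$ is $\Omega(\eps)$), so halving $\sigma_i$ reaches the target after $T=O(\log(1/\eps))$ rounds. I take $\opt\le c\eps$ for a small absolute constant $c$, which is without loss of generality.

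\textbf{A refinement round.} Given $w_i$ and $t_i$ with $|t_i-t^*|\ll\sigma_i$, localize to the narrow band $B_i=\{x:|w_i\cdot x+t_i|\le c_0\sigma_i\}$ for a suitable small constant $c_0$, and draw $\wt O_\delta(d)$ labeled points from $N(0,I)\mid B_i$; here membership queries matter because we can place a point into $B_i$ directly by resampling only its $w_i$-component, so the per-round cost carries no $1/p$ factor. Writing $w^*=\cos\theta_i\,w_i+\sin\theta_i\,u_i$ with $u_i\in w_i^\perp$, one checks that on $B_i$ the label equals $\sgn\big(\sin\theta_i\,(u_i\cdot x)+\cos\theta_i\,(w_i\cdot x+t_i)+\rho_i\big)$ where $\rho_i=t^*-\cos\theta_i t_i$ has magnitude a small fraction of $\sigma_i$ (this is where the invariant $|t_i-t^*|\ll\sigma_i$ enters), and since the band is so thin the last two summands are negligible for all but an $O(c_0)$-fraction of the points --- i.e.\ inside $B_i$ the labels realize, up to a small constant fraction of noise, the \emph{homogeneous} halfspace $\sgn(u_i\cdot x)$ in the $(d-1)$-dimensional space $w_i^\perp$, on which the relevant marginal of $x$ is again standard Gaussian. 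Feeding the band samples to an off-the-shelf constant-factor agnostic learner for homogeneous Gaussian halfspaces (e.g.\ \cite{ABL17,diakonikolas2018learning}) returns, from $\wt O(d)$ samples, a vector $\hat u_i\in w_i^\perp$ at angle $O\big(\opt/\Pr(B_i)+c_0\big)$ to $u_i$ --- a small constant, since $\Pr(B_i)=\Theta(c_0\sigma_i e^{-(t')^2/2})$ and $\opt\le c\eps$. Setting $w_{i+1}=\mathrm{normalize}(w_i+\sigma_i\hat u_i)$ (with a short line search to tune the step) gives $\sin(\theta_{i+1}/2)\le\sigma_{i+1}$, and a one-step binary search with majority votes over sub-slabs of $B_i$ refreshes $t_{i+1}$ to precision $\ll\sigma_{i+1}$ at cost $O(\log(1/\delta))$. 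Fresh samples and a union bound over the $T$ rounds cost a $\polylog(1/\eps)$ factor.

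\textbf{Locating $t^*$.} After $T$ rounds $\theta(w_T,w^*)\le 2\sigma_T=O(\eps e^{(t')^2/2})$, so $\sgn(w_T\cdot x+t^*)$ is already an $O(\eps)$-error hypothesis once $t^*$ is known. To pin $t^*$ down I use the localization idea of \cite{diakonikolas2018learning}: binary-search over a candidate $\hat t$, answering each comparison by the \emph{majority} label of a thin slab $\{x:|w_T\cdot x+\hat t|\le\beta\}$. Because $w_T$ is so close to $w^*$, on this slab $w^*\cdot x+t^*=(t^*-\hat t)+\sin\theta_T\,\xi+O(\theta_T^2)$ with $\xi\sim N(0,1)$, so the majority equals $\sgn(t^*-\hat t)$ as long as the current interval exceeds $\Theta(\sin\theta_T)$; taking $\beta$ just above the noise floor $\Theta(\opt\,e^{(t')^2/2})$ keeps the $\opt$-corruption a minority, so the majority is correct. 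Each step costs $O(\log(1/\delta))$ queries and halves the interval, so $O(\log(1/\eps))$ steps bring $|\hat t-t^*|$ to $O(\eps e^{(t')^2/2})$ and hence $\err(\sgn(w_T\cdot x+\hat t))\le O(\eps)$.

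\textbf{Main obstacle.} The technical heart is the two-sided constraint on the band width $c_0\sigma_i$: it must be small enough (relative to $\sin\theta_i$) that $h^*|_{B_i}$ really is a noisy homogeneous halfspace, so that the reduction is meaningful, yet not so small that the in-band noise rate $\opt/\Pr(B_i)\asymp\opt/(c_0\sigma_i e^{-(t')^2/2})$ exceeds the tolerance of the homogeneous-halfspace subroutine --- and this must hold all the way down to $\sigma_T\asymp\eps e^{(t')^2/2}$. Feasibility at that scale is exactly what the two hypotheses buy: $t'e^{(t')^2/2}\le 1/(C\eps)$ makes the descent from $\sigma_0$ well-posed, and $\opt\le c\eps$ makes $\opt\,e^{(t')^2/2}\ll\sigma_T$, so the noise rate stays below a constant even in the last round. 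It is essential here that the homogeneous subroutine is \emph{constant-factor} agnostic (error $O(\mathrm{noise})$, not $O(\sqrt{\mathrm{noise}})$): an $O(\sqrt{\cdot})$ guarantee would force $\opt\ll\eps/\polylog(1/\eps)$, which is too strong to assume against an $O(\opt)+\eps$ target. Everything else is bookkeeping --- carrying the joint invariant ($\sin(\theta_i/2)\le\sigma_i$, $|t_i-t^*|\ll\sigma_i$) through all $O(\log(1/\eps))$ rounds with fresh samples, absorbing the $O(\theta_i^2)$ curvature and $O(c_0)$ mismatch terms suppressed above, and amplifying success probability, which the outer loop of \Cref{alg main} does by rerunning $\polylog(1/\eps)\log(1/\delta)$ times and keeping one good hypothesis.
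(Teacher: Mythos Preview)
Your high-level outline (iterative localization plus a threshold search) matches the paper, but the in-band reduction to a \emph{homogeneous} halfspace has a gap. You claim $\rho_i=t^*-\cos\theta_i\,t_i$ is ``a small fraction of $\sigma_i$'' from the invariant $|t_i-t^*|\ll\sigma_i$, but $\rho_i=(t^*-t_i)+(1-\cos\theta_i)t_i$, and the second (curvature) term is $2\sin^2(\theta_i/2)\,t_i\le 2\sigma_i^2 t'$. In the first round $\sigma_0 t'=1$, so this term is $\le 2\sigma_0$ --- comparable to $\sigma_0$, not small relative to it. When $\sin\theta_i$ is near its upper bound $2\sigma_i$, the in-band labels are therefore close to $\sgn(u_i\cdot x+c)$ with $|c|=\Theta(1)$, and the best homogeneous halfspace $\sgn(u_i\cdot x)$ already has error $\Theta(1)$; a constant-factor agnostic learner for homogeneous halfspaces then gives no control on the direction of $\hat u_i$, and the round fails. (A related issue: even once this is fixed, a constant-angle $\hat u_i$ only gives $\sin(\theta_{i+1}/2)\le(1-1/C)\sigma_i$, not $\sigma_i/2$.)

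The paper's proof avoids this by never reducing to a zero-threshold problem. It queries at the rescaled points $A_i^{1/2}z-\tilde t\,w_i$ with $A_i=I-(1-\sigma_i^2)w_iw_i^\top$, which realizes a halfspace $\ell_i(z)$ whose normalized threshold $T_i=(t^*-a_i\tilde t)/(\sigma_i\sqrt{a_i^2+b_i^2/\sigma_i^2})$ is merely $O(1)$ (Lemma~\ref{lm correct threshold}); crucially, $\tilde t$ is chosen not by tracking $t^*$ but by a binary search that forces the \emph{bias} of $\ell_i$ into a fixed range, which absorbs the curvature term automatically. Then it estimates the Chow vector of $\ell_i$, which is aligned with $u_i$ and has constant length whenever $|T_i|=O(1)$ --- this works for a constant-bias halfspace where a black-box homogeneous learner would not. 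Your plan is salvageable with the same two changes: (i) search over the band center until the in-band labels are approximately balanced, and (ii) replace the black-box call by Chow-parameter estimation on the band; after that the rest of your argument goes through with $\sigma_{i+1}=(1-1/C)\sigma_i$.
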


As we discussed in \Cref{sec overview}, we will assume we have some $t'$ such that $t'-1/\log(1/\eps) \le t^* \le t'$ and some $w_0$ such that $\sin(\theta_0/2) \le \sigma_0=\min\{1/t',1/2\}$, i.e., some initial knowledge of $t^*,w^*$.
Our algorithm runs in iterations and will maintain some $w_i$ in round $i$. 
We will maintain some unit vector $w_i$ and use $\norm{w_i-w^*}=2\sin(\theta_i/2)$ 
to measure the progress made by \Cref{alg refine}. 
The method we use to update $w_i$ is a simple projected gradient descent algorithm. 
Specifically, we will construct a random vector $G_i$ over $\R^d$ 
such that $G_i \perp w_i$ and in expectation $g_i= \E G_i$ has bounded length 
and a good correlation with respect to $w^*$. 
We will show in the following lemma that by estimating $\E G_i$ up to constant error 
with $\hat{g_i}$ and using the update rule 
$w_{i+1} = \proj_{S^{d-1}} (w_{i} + \mu_i\hat{g_i})$, 
we are able to significantly decrease $\theta_i$. 
The proof of \Cref{lm gradient descent} can be found in \Cref{app gradient descent}.

\begin{lemma}\label{lm gradient descent}
    Let $w^*,w_i \in S^{d-1}$ such that $w^* = a_iw_i+b_iu$, where $u \in S^{d-1}, u \perp w_i, a_i,b_i>0, a_i^2+b_i^2 = 1$. Let $\theta_i=\theta(w_i,w^*)$. Let $G_i$ be a random vector drawn from some distribution $\mathcal{D}$ such that with probability $1$, $G_i \perp w_i$. Let $g_i$ be the mean of $G_i$. Let $\hat{g_i}$ be the empirical mean of $G_i$ and $\mu_i>0$. The update rule $w_{i+1} = \proj_{S^{d-1}} (w_{i} + \mu_i\hat{g_i})$ satisfies the following property,
    \begin{align*}
        \norm{w_{i+1}-w^*}^2 \le \norm{w_{i}-w^*}^2 -2\mu_i b_ig_i\cdot u +2\mu_ib_i \norm{\hat{g_i}-g_i}+\mu^2_i\norm{\hat{g_i}^2}.
    \end{align*}
 Furthermore, 
 if $ \sin(\theta_i/2) \le \sigma_i \in (0,1)$ and there exist constant $c_1,c_2$ such that  $g_i\cdot u \ge c_1/10$, $\norm{\hat{g_i}} \le c_1$ and $\norm{g_i-\hat{g_i}} \le c_2 \le c_1/40$, then there exist constant $C_1,C_2>8$ such that by taking $\mu_i = \sigma_i/C_1$ and $\sigma_{i+1}=(1-1/C_2)\sigma_i$, it holds that $\sin{(\theta_{i+1}/2)} \le  \sigma_{i+1}$. In particular, if $\sin(\theta_i/2) \le 3\sigma_i/4$ and $\norm{\hat{g_i}} \le c_1$ then $\sin{(\theta_{i+1}/2)} \le  \sigma_{i+1}$ always holds.
\end{lemma}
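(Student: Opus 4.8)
The plan is to establish the inequality first as an identity-plus-error statement about the projection step, and then specialize it to the ``refining'' regime where $\sigma_i$ is small. For the first part, I would set $w_{i+1}' = w_i + \mu_i \hat g_i$ (the un-normalized iterate) and note that since $w^* = a_i w_i + b_i u$ with $a_i, b_i > 0$, the quantity $\langle w^*, w_i\rangle = a_i = \cos\theta_i$ and $\langle w^*, u\rangle = b_i = \sin\theta_i$. Expanding $\norm{w_{i+1}' - w^*}^2 = \norm{w_i - w^*}^2 + 2\mu_i \langle \hat g_i, w_i - w^*\rangle + \mu_i^2 \norm{\hat g_i}^2$, and using $\hat g_i \perp w_i$ to kill the $\langle \hat g_i, w_i\rangle$ term, gives $\norm{w_{i+1}' - w^*}^2 = \norm{w_i - w^*}^2 - 2\mu_i \langle \hat g_i, w^*\rangle + \mu_i^2\norm{\hat g_i}^2$. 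Since $w^* - w_i$ decomposes with the $u$-component equal to $b_i u$ (the $w_i$-component contributes nothing against $\hat g_i$), we have $\langle \hat g_i, w^*\rangle = b_i \langle \hat g_i, u\rangle = b_i\langle g_i, u\rangle + b_i\langle \hat g_i - g_i, u\rangle \ge b_i g_i\cdot u - b_i\norm{\hat g_i - g_i}$, where I used $\norm{u} = 1$ and Cauchy--Schwarz on the error term. Plugging this in yields exactly the displayed bound, after the standard fact that projection onto $S^{d-1}$ only decreases distance to any point of $S^{d-1}$ (here $w^*$), so $\norm{w_{i+1} - w^*} \le \norm{w_{i+1}' - w^*}$.

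For the ``furthermore'' part, I would substitute the hypotheses $g_i\cdot u \ge c_1/10$, $\norm{\hat g_i}\le c_1$, $\norm{g_i - \hat g_i}\le c_2 \le c_1/40$ into the displayed bound, and choose the step size $\mu_i = \sigma_i/C_1$. Writing $\norm{w_i - w^*}^2 = 4\sin^2(\theta_i/2) \le 4\sigma_i^2$ and $\norm{w_{i+1} - w^*}^2 = 4\sin^2(\theta_{i+1}/2)$, the goal reduces to showing $4\sin^2(\theta_{i+1}/2) \le 4\sigma_{i+1}^2 = 4(1 - 1/C_2)^2\sigma_i^2$. The key is the sign of the linear-in-$\mu_i$ term: the ``gain'' term is $-2\mu_i b_i (g_i\cdot u) \le -2\mu_i b_i c_1/10$, the error term contributes $+2\mu_i b_i c_2 \le 2\mu_i b_i c_1/40$, and the quadratic term is $+\mu_i^2\norm{\hat g_i}^2 \le \mu_i^2 c_1^2$. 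The net first-order coefficient is therefore at most $-2\mu_i b_i c_1(1/10 - 1/40) = -(3/10)\mu_i b_i c_1 < 0$. Now $b_i = \sin\theta_i = 2\sin(\theta_i/2)\cos(\theta_i/2)$, and since $\sin(\theta_i/2)\le\sigma_i$ one needs a matching lower bound $b_i \gtrsim \sin(\theta_i/2)$, which holds because $\cos(\theta_i/2)\ge\cos(\pi/4) > 1/2$ when $\sigma_i\le 1$ keeps $\theta_i/2$ bounded away from $\pi/2$ --- but there is a subtlety: if $\sin(\theta_i/2)$ is much smaller than $\sigma_i$, the gain term can be too small to offset the quadratic term $\mu_i^2 c_1^2 = \sigma_i^2 c_1^2/C_1^2$. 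That is precisely why the lemma's last sentence needs the extra hypothesis $\sin(\theta_i/2)\le 3\sigma_i/4$ for the ``always holds'' conclusion, while the general claim only asserts $\sin(\theta_{i+1}/2)\le\sigma_{i+1}$, i.e., it tracks the upper bound $\sigma_i$ rather than the true angle.

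The cleanest way to handle this dichotomy is a case analysis. \textbf{Case 1: $\sin(\theta_i/2) \le 3\sigma_i/4$ (small angle relative to the bound).} Here I would just use the crude estimate $\norm{w_{i+1} - w^*}^2 \le \norm{w_i - w^*}^2 + 2\mu_i b_i\norm{\hat g_i - g_i} + \mu_i^2\norm{\hat g_i}^2 \le 9\sigma_i^2/4 + 2\mu_i \cdot (3\sigma_i/2)\cdot c_2 + \mu_i^2 c_1^2$, using $b_i \le \sin\theta_i \le 2\sin(\theta_i/2) \le 3\sigma_i/2$ and discarding the favorable gain term entirely. With $\mu_i = \sigma_i/C_1$ this is $\sigma_i^2(9/4 + 3c_2/C_1 + c_1^2/C_1^2)$, and choosing $C_1$ large enough (depending only on $c_1, c_2$) makes this at most $\sigma_i^2\cdot 4(1-1/C_2)^2$ for a suitable constant $C_2 > 8$, since $9/4 < 4$ leaves room. \textbf{Case 2: $3\sigma_i/4 < \sin(\theta_i/2) \le \sigma_i$ (angle comparable to the bound).} Now $b_i = 2\sin(\theta_i/2)\cos(\theta_i/2) \ge 2\cdot(3\sigma_i/4)\cdot\cos(\theta_i/2)$; since $\sigma_i \le 1$ forces $\sin(\theta_i/2)\le 1$, we get $\cos(\theta_i/2)$ bounded below by an absolute constant, so $b_i \ge c_3\sigma_i$ for an absolute $c_3 > 0$. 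Then the full bound gives $\norm{w_{i+1}-w^*}^2 \le 4\sigma_i^2 - (3/10)\mu_i b_i c_1 + \mu_i^2 c_1^2 \le 4\sigma_i^2 - (3 c_1 c_3/10)\mu_i\sigma_i + \mu_i^2 c_1^2$, and with $\mu_i = \sigma_i/C_1$ this becomes $\sigma_i^2\big(4 - 3c_1c_3/(10 C_1) + c_1^2/C_1^2\big)$; taking $C_1$ large the linear term dominates the quadratic, so the parenthesized factor is at most $4(1 - 1/C_2)^2$ for an appropriate $C_2 > 8$. In both cases $\sin(\theta_{i+1}/2) = \tfrac12\norm{w_{i+1} - w^*} \le (1 - 1/C_2)\sigma_i = \sigma_{i+1}$. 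The main obstacle, as anticipated, is exactly Case 1 / the regime where the true angle has already overshot below the running bound $\sigma_i$: there the descent has no gain to exploit and one must argue purely that the quadratic step-size perturbation is small compared to the budget $\sigma_{i+1}^2$, which is what pins down the relationship between $C_1$ and $C_2$ and forces the two distinct conclusions in the statement.
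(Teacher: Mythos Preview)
Your argument is essentially the paper's: identical first-part expansion (projection is nonexpansive, use $\hat g_i\perp w_i$, peel off the $b_iu$ component, Cauchy--Schwarz on the error), and the same two-case split on whether $\sin(\theta_i/2)$ is within a constant factor of $\sigma_i$. Two small points where you deviate from the paper:

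\emph{Small-angle case and the ``In particular'' clause.} In your Case~1 you start from the displayed inequality, drop the $-2\mu_ib_i(g_i\cdot u)$ term, and bound the error term by $c_2$. That is fine under the full hypotheses, but the last sentence of the lemma asserts the conclusion assuming \emph{only} $\norm{\hat g_i}\le c_1$: you do not get to use $g_i\cdot u\ge 0$ (so dropping that term is not an upper bound) nor $\norm{\hat g_i-g_i}\le c_2$. The paper handles this case by the direct triangle inequality $\norm{w_{i+1}-w^*}-\norm{w_i-w^*}\le\norm{w_{i+1}-w_i}\le\mu_i\norm{\hat g_i}$, which needs nothing about $g_i$. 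You can achieve the same thing by going one step earlier in your own derivation, to $\norm{w'_{i+1}-w^*}^2=\norm{w_i-w^*}^2-2\mu_ib_i\langle\hat g_i,u\rangle+\mu_i^2\norm{\hat g_i}^2$, and bounding $|\langle\hat g_i,u\rangle|\le\norm{\hat g_i}\le c_1$; then $9/4+O(1/C_1)<4(1-1/C_2)^2$ suffices.

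\emph{Lower bound on $\cos(\theta_i/2)$ in Case~2.} The justification ``$\sigma_i\le 1$ forces $\sin(\theta_i/2)\le 1$'' is vacuous. What actually pins down $\theta_i/2<\pi/4$ (hence $\cos(\theta_i/2)>1/\sqrt2$) is the standing hypothesis $a_i=\cos\theta_i>0$ in the lemma statement. With that, your lower bound $b_i\ge c_3\sigma_i$ goes through.
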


In the rest of the section, we will show that as long as $w_i$ is not good enough, we can always efficiently construct a random vector $G_i$ whose expectation points to the correct direction and we can use very few queries to estimate its expectation up to a desired accuracy. We adapt the localization technique used in \cite{diakonikolas2018learning} to achieve this goal.

\subsubsection{Finding a Good Gradient via Localization}

In the $i$-th round of \Cref{alg refine}, we write $w^*=a_iw_i+b_iu_i$, where $u_i \in S^{d-1}, u_i \perp w_i$, $a_i,b_i>0,a_i^2+b_i^2 =1$. Recall that $\sigma_i$ is an upper bound we maintain for $\sin(\theta_i/2)$.
We will construct the random gradient as follows 
\begin{align*}
    G_i := \proj_{w_i^\perp} zy(A_i^{1/2}z-\Tilde{t}w_i),
\end{align*}
where $z \sim N(0,I)$, $A_i = I-(1-\sigma^2_i)w_iw_i^T$ and $\Tilde{t} \in (0,t')$ is a scalar. To see why $G_i$ is a good choice, we will start by analyzing $G_i$ assuming the noise rate $\opt = 0$. To simplify the notation, denote by $\ell_i(z) = \sgn((a_iw_i+b_iu_i/\sigma_i)z+(t^*-a\Tilde{t})/\sigma_i)$ and $\bar{g_i} = \E_{z \in N(0,I)}\proj_{w_i^\perp} z\ell_i(z)$.
A simple calculation gives us the following result.
\begin{fact}\label{fact halfspace transform}
    Let $h(x) = \sgn(w\cdot x+t)$ be a halfspace. Let $v \in S^{d-1}$ such that $w = a v + b u$, where $a,b>0, a^2+b^2=1$, $u\in S^{d-1}, u\perp v$. Let $s,\sigma>0$ be real numbers and define $A=I-(1-\sigma^2)vv^T$. For each $z \in \R^d$, define $\Tilde{z}: = A^{1/2}z-sv$. Then $h(\Tilde{z}) = \ell(z)$, where $\ell$ is the following halfspace
    \begin{align*}
        \ell(z) = \sgn((av+bu/\sigma)\cdot z + (t-as)/\sigma) \;.
    \end{align*}
\end{fact}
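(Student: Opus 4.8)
The statement to prove is \Cref{fact halfspace transform}: a change of variables that transforms a halfspace under an affine map into another halfspace with explicitly described parameters.

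\textbf{Plan.} The claim is purely a calculation, so the plan is to directly substitute $\Tilde z = A^{1/2}z - sv$ into $h(\Tilde z) = \sgn(w\cdot\Tilde z + t)$ and simplify the inner product $w\cdot(A^{1/2}z - sv)$ using the orthogonal decomposition $w = av + bu$ with $u\perp v$, $\|u\|=\|v\|=1$. The key observation is that $A = I - (1-\sigma^2)vv^\top$ acts as the identity on the hyperplane $v^\perp$ and scales the $v$-direction by $\sigma^2$; hence $A^{1/2} = I - (1-\sigma)vv^\top$ acts as the identity on $v^\perp$ and scales $v$ by $\sigma$. So $A^{1/2}v = \sigma v$ and $A^{1/2}u = u$ (since $u\perp v$).

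\textbf{Key steps, in order.} First I would record that $A^{1/2} = I - (1-\sigma)vv^\top$ by checking $(I - (1-\sigma)vv^\top)^2 = I - (1-\sigma^2)vv^\top = A$, using $(vv^\top)^2 = vv^\top$. Second, since $A^{1/2}$ is symmetric, $w\cdot A^{1/2}z = (A^{1/2}w)\cdot z$; compute $A^{1/2}w = A^{1/2}(av + bu) = a\sigma v + bu$. Third, compute $w\cdot v = (av+bu)\cdot v = a$ since $u\perp v$ and $\|v\|=1$, so $w\cdot(sv) = as$. Putting these together,
\begin{align*}
w\cdot \Tilde z + t = (a\sigma v + bu)\cdot z - as + t = \sigma\bigl((av + bu/\sigma)\cdot z + (t-as)/\sigma\bigr).
\end{align*}
Finally, since $\sigma>0$, taking $\sgn$ of both sides gives $h(\Tilde z) = \sgn\bigl((av + bu/\sigma)\cdot z + (t-as)/\sigma\bigr) = \ell(z)$, as claimed.

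\textbf{Main obstacle.} There is essentially no obstacle here — this is a routine linear-algebra verification. The only point requiring a moment's care is the identification of $A^{1/2}$: one must note that $A$ is positive definite (its eigenvalues are $1$ on $v^\perp$ and $\sigma^2>0$ on $\spa\{v\}$), so the principal square root is well-defined, and then confirm the closed form $I - (1-\sigma)vv^\top$ is indeed that root rather than some other square root. Everything else is direct substitution and the orthogonality relations.
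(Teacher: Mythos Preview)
Your proposal is correct and is exactly the direct calculation the paper alludes to when it writes ``A simple calculation gives us the following result''; the paper does not spell out any details beyond that. Your identification of $A^{1/2}=I-(1-\sigma)vv^\top$ and the subsequent substitution are precisely what is needed.
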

\Cref{fact halfspace transform} implies that if $\opt=0$, 
then it always holds that $f_i(z):= y(A_i^{1/2}z-\Tilde{t}w_i) = \ell_i(z)$, $\forall z \in\R^d$ and we can view $z$ as examples labeled by a halfspace $\ell_i(z)$. 
In particular, $\E_{z \sim N(0,I)} zf_i(z)$ is the Chow parameter vector 
of the halfspace $\ell_i(z)$ under the standard Gaussian distribution.
\begin{fact}[Lemma C.3 in \cite{diakonikolas2018learning}]\label{fact chow}
     Let $h(x) = \sgn(w\cdot x+t)$, where $w \in S^{d-1}$ be a halfspace. Then $\E_{z \sim N(0,I)} zh(z) = \sqrt{\frac{2}{\pi}}\exp{(-t^2/2)}w$.
\end{fact}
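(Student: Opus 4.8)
The plan is to use the rotational invariance of the standard Gaussian to collapse the $d$-dimensional expectation into a one-dimensional integral. First I would decompose each $z$ as $z = (w\cdot z)\,w + z^{\perp}$, where $z^{\perp}$ is the projection of $z$ onto the hyperplane $w^{\perp}$. Since $N(0,I)$ is invariant under orthogonal changes of basis, the scalar $s := w\cdot z$ has law $N(0,1)$, the vector $z^{\perp}$ is a standard Gaussian on $w^{\perp}$, and $s$ and $z^{\perp}$ are independent. Because $h(z) = \sgn(w\cdot z + t) = \sgn(s+t)$ is a function of $s$ alone, splitting the expectation into its component along $w$ and its component in $w^{\perp}$ gives
\begin{align*}
\E_{z \sim N(0,I)}\big[z\, h(z)\big] = \Big(\E_{s \sim N(0,1)}\big[s\,\sgn(s+t)\big]\Big) w + \E_{z \sim N(0,I)}\big[z^{\perp}\, \sgn(s+t)\big],
\end{align*}
and the second term vanishes by independence of $z^{\perp}$ and $s$ together with $\E[z^{\perp}] = 0$.

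It remains to compute the scalar $\E_{s \sim N(0,1)}[s\,\sgn(s+t)]$. Writing $\phi(s) = (2\pi)^{-1/2} e^{-s^2/2}$ for the standard normal density and noting that $\sgn(s+t)$ equals $+1$ on $(-t,\infty)$ and $-1$ on $(-\infty,-t)$, we have
\begin{align*}
\E_{s \sim N(0,1)}\big[s\,\sgn(s+t)\big] = \int_{-t}^{\infty} s\,\phi(s)\, \d s - \int_{-\infty}^{-t} s\,\phi(s)\, \d s.
\end{align*}
Using the elementary antiderivative $\frac{\d}{\d s}\big(-\phi(s)\big) = s\,\phi(s)$, the first integral equals $\phi(-t) = \phi(t)$ and the second equals $-\phi(t)$, so their difference is $2\phi(t) = \sqrt{2/\pi}\, e^{-t^2/2}$. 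Plugging this back into the display above yields $\E_{z\sim N(0,I)}[z\,h(z)] = \sqrt{2/\pi}\, e^{-t^2/2}\, w$, which is the claim.

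There is no real obstacle here: the only point requiring (routine) care is the first step, namely the standard fact that an orthonormal change of basis sending $w$ to a coordinate axis preserves $N(0,I)$, which is exactly what makes $w\cdot z$ and $z^{\perp}$ independent with the stated marginals. Everything after that is a one-variable calculus computation, so the proof is short.
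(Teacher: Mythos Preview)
Your proof is correct. The paper does not supply its own proof of this fact; it simply quotes it as Lemma~C.3 of \cite{diakonikolas2018learning}, so there is nothing to compare against beyond noting that your argument is the standard one (rotational invariance to reduce to one dimension, then the elementary antiderivative $\int s\,\phi(s)\,\d s = -\phi(s)$).
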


By \Cref{fact chow}, in the noiseless case, 
$\E_{z \sim N(0,I)} zf_i(z)$ is parallel to $(a_iw_i+b_iu_i/\sigma_i)$ 
with length $\Theta(\exp(-T_i^2))$, 
where $T_i = \frac{t^*-a_i\Tilde{t}}{\sigma_i \sqrt{a_i^2+b^2_i/\sigma_i^2}}$ 
and $g_i=\Bar{g_i}$ is exactly the $u_i$ component of the Chow vector. 
In particular, if $T_i$ is constant, then by estimating $g_i$ using $\hat{g_i}$ 
up to a small constant error using $\Tilde{O}(d)$ queries, 
we are able to use \Cref{lm gradient descent} to improve $w_i$. 
Assuming we set $\Tilde{t}=t^*$, 
as $\sigma_it' \le 1$ and $b_i \le O(\sigma_i)$, 
it is easy to check $T_i$ can be bounded by some universal constant. 
However, as we mentioned before, we only know $\abs{t'-t^*} \le \frac{1}{\log(1/\eps)}$, 
when $w_i$ getting close to $w^*$, $\sigma_i$ could become very small and an error of $1/\log(1/\eps)$ could potentially blow up $T_i$, making the signal we want quite small. 
Such an issue is problematic for the algorithm, 
especially when $f_i(z)$ is a noisy version of $\ell_i(z)$. 

To overcome such an issue, we prove the following structural lemma 
in \Cref{app correct threshold} showing that we can always check whether 
the choice of $\Tilde{t}$ is good or not, 
by looking at the bias of $\ell(z)$, using $\Tilde{O}(1)$ queries. 
Using this method, we can perform a binary search for $\Tilde{t}$
to find a correct choice in at most $\log(1/\eps)$ rounds. 
Furthermore, as long as we select the correct $\Tilde{t}$, 
it must hold that $\abs{\Tilde{t}-t^*} \le O(\sigma_i)$. 
In particular, as $\sigma_T = C\eps \exp((t')^2/2)$, 
such a $\Tilde{t}$ is a good enough estimate for $t^*$ 
to be used in the final hypothesis.

\begin{lemma}\label{lm correct threshold}
    Let $w^*,w_i \in S^{d-1}$ such that $w^* = a_iw_i+b_iu_i$, where $u_i \in S^{d-1}, u \perp w_i, a_i,b_i>0, a_i^2+b_i^2 = 1$. Let $t^*,t',\sigma_i,\eps$ be positive real numbers such that $0 \le t^* \le t'$, $ \sin(\theta_i/2) \le \sigma_i,$ and $\sigma_it' \le 1$. Define $T_i: = \frac{t^*-a_i\Tilde{t}}{\sigma_i \sqrt{a_i^2+b^2_i/\sigma_i^2}}$, $\ell_i(z) = \sgn((a_iw_i+b_iu_i/\sigma_i)z+(t^*-a\Tilde{t})/\sigma_i)$ and $\bar{g_i} = \E_{z \in N(0,I)}\proj_{w_i^\perp} z\ell_i(z)$ for some $\Tilde{t} \in [0,t']$. Then the following three properties hold.
    \begin{enumerate}[leftmargin=*]
        \item \label{cond small bias} There exists an interval $I_{t'} \subseteq [0,t']$ of length at least $\sigma_i$ such that for every $\Tilde{t} \in I_{t'}, \abs{T_i} \le 5$.
        \item \label{cond direction} When $\abs{T_i} \le 6$, it holds that $\Bar{g_i}\cdot u_i =  \norm{\Bar{g_i}}$ and $e^{-19}b_i/\sigma_i \le \norm{\Bar{g_i}} \le 2e^{-19}$.
        \item \label{cond large bias} For every $\abs{\Tilde{t}-t^*}>40 \sigma_i$ and $\Tilde{t}<t'$, $\abs{T_i}>10$.
        \end{enumerate}
\end{lemma}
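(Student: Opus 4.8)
The plan is to recast $\ell_i$ as a halfspace with a \emph{unit} weight vector, so that \Cref{fact chow} applies verbatim, and then read off all three parts from the resulting closed form for $\Bar{g_i}$. Write $D_i:=\sigma_i\sqrt{a_i^2+b_i^2/\sigma_i^2}=\sqrt{a_i^2\sigma_i^2+b_i^2}$ for the denominator appearing in $T_i$, and $v_i:=(a_iw_i+(b_i/\sigma_i)u_i)/\sqrt{a_i^2+b_i^2/\sigma_i^2}\in S^{d-1}$; then directly from the definition of $\ell_i$ (the form produced by \Cref{fact halfspace transform}) we have $\ell_i(z)=\sgn(v_i\cdot z+T_i)$. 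Throughout I will use the elementary consequences of $\sin(\theta_i/2)\le\sigma_i$ (with $\sigma_i\le 1/2$, as maintained by \Cref{alg refine}): $b_i=\sin\theta_i\le 2\sin(\theta_i/2)\le 2\sigma_i$, $1-a_i=2\sin^2(\theta_i/2)\le 2\sigma_i^2$, hence $a_i\ge 1/2$ and $\tfrac12\sigma_i\le a_i\sigma_i\le D_i\le\sqrt{5}\,\sigma_i$; combining $1-a_i\le 2\sigma_i^2$ with $\sigma_i t'\le 1$ yields the key estimate $(1-a_i)t'\le 2\sigma_i^2 t'\le 2\sigma_i$.

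For Part~\ref{cond direction}: by \Cref{fact chow} applied to $\ell_i=\sgn(v_i\cdot z+T_i)$, $\E_{z\sim N(0,I)}[z\,\ell_i(z)]=\sqrt{2/\pi}\,e^{-T_i^2/2}v_i$. Since $w_i\perp u_i$, projecting onto $w_i^\perp$ kills the $w_i$-component of $v_i$ and leaves $\proj_{w_i^\perp}v_i=\tfrac{b_i/\sigma_i}{\sqrt{a_i^2+b_i^2/\sigma_i^2}}\,u_i=\tfrac{b_i}{D_i}u_i$, a strictly positive multiple of $u_i$. Hence $\Bar{g_i}=\sqrt{2/\pi}\,e^{-T_i^2/2}\,\tfrac{b_i}{D_i}\,u_i$, which at once gives $\Bar{g_i}\cdot u_i=\norm{\Bar{g_i}}=\sqrt{2/\pi}\,e^{-T_i^2/2}\,\tfrac{b_i}{D_i}$; the two-sided magnitude bound then follows by substituting $|T_i|\le 6\Rightarrow e^{-18}\le e^{-T_i^2/2}\le 1$ together with $\tfrac{b_i}{\sqrt{5}\,\sigma_i}\le\tfrac{b_i}{D_i}\le 1$ (the upper bound on $b_i/D_i$ because $D_i\ge b_i$).

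Part~\ref{cond large bias} is a one-line triangle-inequality estimate: $|T_i|=|t^*-a_i\tilde t|/D_i$ and $|t^*-a_i\tilde t|\ge|t^*-\tilde t|-(1-a_i)\tilde t>40\sigma_i-(1-a_i)t'\ge 38\sigma_i$ (using $\tilde t<t'$ and the key estimate), so $|T_i|>38\sigma_i/(\sqrt{5}\,\sigma_i)=38/\sqrt{5}>10$. For Part~\ref{cond small bias}, note $|T_i|\le 5\iff a_i\tilde t\in[t^*-5D_i,\,t^*+5D_i]\iff\tilde t\in J:=\big[\tfrac{t^*-5D_i}{a_i},\,\tfrac{t^*+5D_i}{a_i}\big]$, an interval of length $\tfrac{10D_i}{a_i}\ge 10\sigma_i$; I take $I_{t'}:=J\cap[0,t']$. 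That this clipped interval still has length $\ge\sigma_i$ follows from three facts: $t^*/a_i\in J$ (it is the midpoint, where $T_i=0$), $t^*/a_i\ge t^*\ge 0$, and $t^*/a_i\le t'/a_i=t'+\tfrac{(1-a_i)t'}{a_i}\le t'+4\sigma_i$. A short case split (on whether $t^*/a_i\le t'$ and on the sign of the left endpoint $\tfrac{t^*-5D_i}{a_i}$) then finishes it; e.g.\ in the tightest surviving case, $t^*/a_i>t'$ with left endpoint $\ge 0$, one gets $|I_{t'}|=t'-\tfrac{t^*-5D_i}{a_i}\ge\tfrac{5D_i}{a_i}-4\sigma_i\ge\sigma_i$.

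I expect Part~\ref{cond small bias} to be the only delicate point: one has to control the interplay of $t^*/a_i$, $t'$, $D_i$ and $\sigma_i$ closely enough that clipping $J$ to $[0,t']$ does not shrink it below $\sigma_i$, and the crude constants above leave essentially no margin when $\sigma_i$ is near $\tfrac12$ --- there one should instead exploit that $a_i=1-2\sin^2(\theta_i/2)$ is close to $1$ precisely when $\sigma_i$ is small, which is where the loss sits. I will also flag the mild caveat that this part needs $t'=\Omega(\sigma_i)$ (an interval inside $[0,t']$ cannot be longer than $t'$); in the complementary regime the bias of $h^*$ is bounded below by an absolute constant and the direction-refinement of \Cref{alg refine} is vacuous, so that case is treated separately and the restriction costs nothing. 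Parts~\ref{cond direction} and~\ref{cond large bias} are routine once the closed form for $\Bar{g_i}$ and the elementary estimates collected at the start are available.
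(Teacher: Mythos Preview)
Your proposal is correct and follows essentially the same approach as the paper: apply \Cref{fact chow} to the normalized form $\ell_i(z)=\sgn(v_i\cdot z+T_i)$ to obtain $\Bar{g_i}$ in closed form for Part~\ref{cond direction}, and use the triangle-inequality decomposition $t^*-a_i\tilde t=(t^*-\tilde t)+(1-a_i)\tilde t$ together with $1-a_i\le 2\sigma_i^2$ and $\sigma_i t'\le 1$ for Parts~\ref{cond small bias} and~\ref{cond large bias}. The only presentational difference is in Part~\ref{cond small bias}: the paper simply shows that every $\tilde t$ with $|\tilde t-t^*|\le\sigma_i$ satisfies $|T_i|\le 5$ and invokes monotonicity of $T_i$ in $\tilde t$ together with $0\le t^*\le t'$, whereas you solve explicitly for the interval $J$ and do a case split on the clipping to $[0,t']$; your treatment is in fact more careful about the boundary, and the caveat you flag about $t'\ge\sigma_i$ is real (and equally implicit in the paper's argument).
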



\subsubsection{Robustness Analysis}
So far, we have only considered the case when $\opt=0$. 
Due to the presence of noise, it is impossible for us to estimate 
$\bar{g_i} = \E_{z \sim N(0,I)}\proj_{w_i^\perp} z\ell_i(z)$ 
because we only have a noisy version $f_i(z)$ of $\ell_i(z)$. 
In this section, we will show that as long as $w_i$ is close to $w^*$ 
and $\abs{t'-t^*} \le 1/\log(1/\eps)$, the probability that for a Gaussian point $z$, 
$\ell_i(z) \neq f_i(z)$ is at most a tiny constant. This
is incomparable with the bias of $\ell_z(z)$ 
if $\Tilde{t}$ is chosen correctly, 
and does not affect the algorithm too much. 

We start with the following lemma which bounds the probability of $\ell_i(z) \neq f_i(z)$. 

\begin{lemma}\label{lm noise rate}
Let $h^*(x)=\sgn(w^*\cdot x+ t^*)$ be a halfspace such that $\err(h^*)=\opt \le \eps$. Let $\Tilde{t},\sigma_i,t'$ be real numbers such that $\Tilde{t} \le t'$ and $\sigma_it' \le 1, \sigma_i \le 1/2$. Let $w^* = a_iw_i+b_iu_i$, where $u_i \in S^{d-1}, u \perp w_i, a_i,b_i>0, a_i^2+b_i^2 = 1$. Define $\ell_i(z) = \sgn((a_iw_i+b_iu_i/\sigma_i)z+(t^*-a\Tilde{t})/\sigma_i)$ and $f_i(z) = y(A_i^{1/2}z-\Tilde{t}w_i)$. Then $\Pr_{z \sim N(0,I)} (\ell_i(z) \neq f_i(z)) \le \eps\exp(\Tilde{t}^2/2+4)/\sigma_i$. In particular, if $\sigma_i \ge C\exp((t')^2/2) \eps$,
for some sufficient large constant $C$, then there is a sufficiently small constant $c$ such that $\Pr_{z \sim N(0,I)} (\ell_i(z) \neq f_i(z)) \le c \le e^{-40}$.
\end{lemma}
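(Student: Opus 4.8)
The plan is to reduce the disagreement probability to a statement about $h^*$ under a (non-standard) Gaussian measure and then control it by a change-of-measure bound against $\opt$. Concretely, I would apply \Cref{fact halfspace transform} with $h=h^*$, $v=w_i$, $w=w^*=a_iw_i+b_iu_i$, $s=\Tilde t$, $\sigma=\sigma_i$ (so $A=A_i$), which shows that the map $z\mapsto \Tilde z:=A_i^{1/2}z-\Tilde t w_i$ satisfies $h^*(\Tilde z)=\ell_i(z)$ for every $z$. Since $f_i(z)=y(\Tilde z)$ by definition, the event $\{\ell_i(z)\neq f_i(z)\}$ is exactly $\{h^*(\Tilde z)\neq y(\Tilde z)\}$. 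When $z\sim N(0,I)$ we have $A_i^{1/2}z\sim N(0,A_i)$, so $\Tilde z\sim N(-\Tilde t w_i, A_i)$; hence it suffices to bound the probability that $h^*$ disagrees with the labeling function on a point drawn from this shifted, anisotropic Gaussian, given that it disagrees with probability $\opt\le\eps$ on $N(0,I)$.

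The core computation is the likelihood ratio. Let $p,q$ be the densities of $N(0,I)$ and $N(-\Tilde t w_i,A_i)$. Writing $x=x_\parallel w_i + x_\perp$ with $x_\parallel=x\cdot w_i$, and using $|A_i|=\sigma_i^2$ and $A_i^{-1}=I+(\sigma_i^{-2}-1)w_iw_i^\top$, one gets
\[
\frac{q(x)}{p(x)}=\frac{1}{\sigma_i}\exp\!\left(\frac{x_\parallel^2}{2}-\frac{(x_\parallel+\Tilde t)^2}{2\sigma_i^2}\right),
\]
which depends only on $x_\parallel$. Because $\sigma_i<1$, the exponent is a strictly concave quadratic in $x_\parallel$, maximized at $x_\parallel=-\Tilde t/(1-\sigma_i^2)$, giving
\[
\sup_{x\in\R^d}\frac{q(x)}{p(x)}=\frac{1}{\sigma_i}\exp\!\left(\frac{\Tilde t^2}{2(1-\sigma_i^2)}\right).
\]

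To finish, I would bound $\Tilde t^2/(2(1-\sigma_i^2))\le \Tilde t^2/2+4$: if $t'\le\sqrt2$ then $\Tilde t^2\le 2$ and $\sigma_i\le 1/2$, so the left side is at most $4/3$; if $t'>\sqrt2$ then $\sigma_i\le 1/t'$ forces $1-\sigma_i^2\ge 1-1/t'^2$, whence $\Tilde t^2/(2(1-\sigma_i^2))\le \tfrac{\Tilde t^2}{2}\cdot\tfrac{t'^2}{t'^2-1}=\tfrac{\Tilde t^2}{2}+\tfrac{\Tilde t^2}{2(t'^2-1)}\le \tfrac{\Tilde t^2}{2}+1$ using $\Tilde t\le t'$ and $t'^2/(t'^2-1)\le 2$. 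Then, with $\eta(x):=\Pr(h^*(x)\neq y(x))$ and $\int\eta\,p=\opt$,
\[
\Pr_{z\sim N(0,I)}(\ell_i(z)\neq f_i(z))=\int \eta(x)\,q(x)\,dx\le \Big(\sup_x\tfrac{q(x)}{p(x)}\Big)\int\eta(x)\,p(x)\,dx\le \frac{e^{\Tilde t^2/2+4}}{\sigma_i}\,\opt\le \frac{\eps\,e^{\Tilde t^2/2+4}}{\sigma_i},
\]
which is the claimed bound. For the ``in particular'' clause, when $\sigma_i\ge C\exp((t')^2/2)\eps$, using $\Tilde t\le t'$ the bound is at most $e^4/C$, which is $\le e^{-40}$ once $C\ge e^{44}$.

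The only genuinely delicate point is the likelihood-ratio step: one must observe that the ratio is bounded at all — which is precisely where $\sigma_i<1$ enters, since the term $-(x_\parallel+\Tilde t)^2/(2\sigma_i^2)$ has to beat $x_\parallel^2/2$ — and one must track how the maximum depends on $\Tilde t$ and $\sigma_i$ sharply enough that the exponent is $\Tilde t^2/2+O(1)$ rather than something like $\Tilde t^2/(1-\sigma_i^2)$ with an uncontrolled constant. Everything else (the transform, the change of measure, the final constant bookkeeping with the slack ``$+4$'') is routine.
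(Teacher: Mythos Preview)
Your proof is correct and is essentially the same argument as the paper's, just unpacked. The paper invokes the $(w_i,\Tilde t,\sigma_i)$-rejection procedure of \cite{diakonikolas2018learning} (\Cref{def reject} and \Cref{lm reject}): it observes that $\Tilde z\sim N(-\Tilde t w_i,A_i)$ is exactly the distribution of a standard Gaussian conditioned on acceptance, and that the acceptance probability is $\sigma_i\exp(-\Tilde t^2/(2(1-\sigma_i^2)))\ge \sigma_i\exp(-\Tilde t^2/2-4)$, so the error inflates by at most the reciprocal. Your likelihood-ratio computation is precisely the content of that lemma --- the acceptance probability there is $\frac{q(x)/p(x)}{\sup q/p}$, so the reciprocal of the acceptance rate equals your $\sup_x q/p$ --- and your case analysis on $t'$ reproduces the ``in particular'' clause of \Cref{lm reject}. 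The only difference is packaging: you do the calculation by hand, the paper cites it.
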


The proof of \Cref{lm noise rate} leverages 
the $(v,s,\sigma)$- rejection procedure 
introduced in \cite{diakonikolas2018learning} (see \Cref{app noise rate}). 
We will use \Cref{lm noise rate} to analyze the gradient descent approach we described 
in the presence of noise. Formally, we establish the following lemma (see 
\Cref{app noisy threshold} for the proof).

\begin{lemma}\label{lm noisy threshold}
    Let $w^*,w_i \in S^{d-1}$ such that $w^* = a_iw_i+b_iu_i$, where $u_i \in S^{d-1}, u \perp w_i, a_i,b_i>0, a_i^2+b_i^2 = 1$. Let $t^*,t',\sigma_i,\eps$ be positive real numbers such that $0 \le t^* \le t'$, $ \sin(\theta_i/2) \le \sigma_i,$ $\sigma_i \ge C\exp((t')^2/2) \eps$, and $\sigma_it' \le 1$. Let $h^*(x)=\sgn(w^*\cdot x+ t^*)$ be a halfspace such that $\err(h^*)=\opt \le \eps$.
    Define $T_i: = \frac{t^*-a_i\Tilde{t}}{\sigma_i \sqrt{a_i^2+b^2_i/\sigma_i^2}}$, $\ell_i(z) = \sgn((a_iw_i+b_iu_i/\sigma_i)z+(t^*-a\Tilde{t})/\sigma_i)$, $\bar{g_i} = \E_{z \in N(0,I)}\proj_{w_i^\perp} z\ell_i(z)$ and $g_i=\E_{z \in N(0,I)}\proj_{w_i^\perp} zf_i(z)$, where $f_i(z) = y(A_i^{1/2}z-\Tilde{t}w_i)$
    for some $\Tilde{t} \in [0,t']$. Let $\eta_i:=\Pr_{z \sim N(0,I)} (\ell_i(z) \neq f_i(z))$ and $p_i$ be the probability that $f_i(z) = -1$. Then the following two properties hold.
    \begin{enumerate}
        \item If $p_i \in (e^{-18},1-e^{-18})$, then $\abs{T_i}<6$ and
         if $\abs{T_i}< 5$, then $p_i \in (e^{-16},1-e^{-16})$.
        \item $g_i\cdot u_i \ge \bar{g_i} \cdot u_i - 2\sqrt{e}\eta_i\sqrt{\log(1/\eta_i)}$ and
         $\norm{g_i} \le \norm{\bar{g_i}} + 2\sqrt{e}\eta_i\sqrt{\log(1/\eta_i)}$.
    \end{enumerate}
\end{lemma}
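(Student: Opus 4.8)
The plan is to reduce both parts of the lemma to a single quantitative fact: under the stated hypotheses, \Cref{lm noise rate} guarantees that the disagreement probability $\eta_i=\Pr_{z\sim N(0,I)}(\ell_i(z)\neq f_i(z))$ is at most a tiny absolute constant, say $\eta_i\le e^{-40}$ (this is exactly where $\sigma_i\ge C\exp((t')^2/2)\eps$ is used). Since $f_i$ and $\ell_i$ agree outside an event of probability $\eta_i$, every statistic of $f_i$ that appears in the statement is within an $\eta_i$-controlled amount of the corresponding statistic of the ``clean'' halfspace $\ell_i$, whose bias and Chow-type vector we can write down exactly. I would set up both parts this way and then do a short Gaussian-tail computation.

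For Part 1, the key observation is that the bias of $\ell_i$ is literally $\Phi(-T_i)$. Writing $\ell_i(z)=\sgn(v_i\cdot z+c_i)$ with $v_i=a_iw_i+b_iu_i/\sigma_i$ (so $\norm{v_i}^2=a_i^2+b_i^2/\sigma_i^2$ since $u_i\perp w_i$) and $c_i=(t^*-a_i\tilde t)/\sigma_i$, one gets $q_i:=\Pr_z(\ell_i(z)=-1)=\Phi(-c_i/\norm{v_i})=\Phi(-T_i)$. Because $\{f_i=-1\}$ and $\{\ell_i=-1\}$ differ only inside $\{f_i\neq\ell_i\}$, we have $|p_i-q_i|\le\eta_i\le e^{-40}$. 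I would then use the crude tail estimates $\Phi(-6)<e^{-19}$ and $\Phi(-5)>e^{-16}$ (both immediate from $\tfrac{\tau}{1+\tau^2}\phi(\tau)\le\Phi(-\tau)\le\tfrac1\tau\phi(\tau)$, $\phi$ the standard density): if $p_i\in(e^{-18},1-e^{-18})$ then $q_i\in(e^{-19},1-e^{-19})$, so $\Phi(-T_i)>e^{-19}>\Phi(-6)$ and $\Phi(T_i)=1-q_i>e^{-19}>\Phi(-6)$, forcing $|T_i|<6$; conversely, if $|T_i|<5$ then $q_i=\Phi(-T_i)\in(\Phi(-5),1-\Phi(-5))$, and shifting by $\eta_i$ still leaves $p_i$ inside $(e^{-16},1-e^{-16})$. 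The thresholds $5,6$ and $e^{-16},e^{-18}$ are spaced precisely so that the $e^{-40}$ slack is absorbed.

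For Part 2, write $g_i-\bar{g_i}=\E_z\,\proj_{w_i^\perp}z\,(f_i(z)-\ell_i(z))$ and note that $f_i-\ell_i$ vanishes off $E:=\{f_i\neq\ell_i\}$ and has absolute value $2$ on $E$, while $\Pr(E)=\eta_i$. Hence for every unit $\zeta\perp w_i$, for which $\zeta\cdot\proj_{w_i^\perp}z=\zeta\cdot z\sim N(0,1)$,
\[
 \bigl|\zeta\cdot(g_i-\bar{g_i})\bigr|\;\le\;2\,\E_z\bigl[|\zeta\cdot z|\,\1_E\bigr]\;\le\;2\sup_{\Pr(E')=\eta_i}\E\bigl[|X|\,\1_{E'}\bigr],\qquad X\sim N(0,1),
\]
and by the bathtub principle the supremum is attained on a symmetric tail $E'=\{|X|\ge\tau\}$ with $2\bar\Phi(\tau)=\eta_i$, where it equals $2\phi(\tau)$. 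Combining $2\bar\Phi(\tau)\ge\tfrac{2\tau}{1+\tau^2}\phi(\tau)$ (hence $2\phi(\tau)\le\eta_i(\tau+1/\tau)$) with $\eta_i\le\tfrac{2}{\tau}\phi(\tau)\le e^{-\tau^2/2}$ for $\tau\ge1$ (hence $\tau\le\sqrt{2\log(1/\eta_i)}$), a one-line estimate gives $2\phi(\tau)\le\sqrt e\,\eta_i\sqrt{\log(1/\eta_i)}$ whenever $\eta_i\le e^{-40}$. Since $g_i-\bar{g_i}$ lies in $w_i^\perp$, taking $\zeta$ along it yields $\norm{g_i-\bar{g_i}}\le 2\sqrt e\,\eta_i\sqrt{\log(1/\eta_i)}$, and the two claimed inequalities follow from $g_i\cdot u_i=\bar{g_i}\cdot u_i+(g_i-\bar{g_i})\cdot u_i$ and the triangle inequality (the case $\eta_i=0$ being trivial).

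None of this is conceptually deep; the only mildly delicate step is the constant-chasing in the Gaussian tail bounds — verifying $\Phi(-6)<e^{-19}$, $\Phi(-5)>e^{-16}$, and $\sqrt{2L}+1/\sqrt{2L}\le\sqrt e\,\sqrt L$ for all $L\ge40$ — so that the numerical gaps between the ``$5$/$6$'' and ``$e^{-16}$/$e^{-18}$'' thresholds genuinely dominate the $O(\eta_i)$ error contributed by the label noise. Everything else is bookkeeping around \Cref{lm noise rate}.
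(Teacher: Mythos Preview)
Your proposal is correct and follows essentially the same route as the paper: both parts hinge on \Cref{lm noise rate} giving $\eta_i\le e^{-40}$, after which Part~1 compares $p_i$ with the bias $\Phi(-T_i)$ of $\ell_i$ via Gaussian tail bounds (the paper uses \Cref{fact bias}), and Part~2 bounds $|\zeta\cdot(g_i-\bar g_i)|\le 2\E[|\zeta\cdot z|\,\1_E]$ for unit $\zeta\perp w_i$. The only cosmetic difference is that the paper cites \Cref{fact gaussian decoupling} (with $B=\R$) as a black box for the bound $\E[|\zeta\cdot z|\,\1_E]\le 2\sqrt{e}\,\eta_i\sqrt{\log(1/\eta_i)}$, whereas you re-derive that special case directly via the bathtub/rearrangement argument.
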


\Cref{lm noisy threshold} says as the noise level is small, it will not affect the structure lemma we established in \Cref{lm correct threshold} too much, and thus we are able to find the correct threshold $\Tilde{t}$ by checking the probability of $f_i(z) = -1$. Furthermore, as long as we choose the correct threshold $\Tilde{t}$, $g_i$, the noisy version of $\bar{g_i}$ still satisfies the conditions in the statement of \Cref{lm gradient descent} and thus can be used to improve $w_i$.

\subsection{Finding a Good Initialization}\label{sec initialization}

In \Cref{sec refine}, we have shown that given some $w_0$ non-trivially close to $w^*$ 
and some $t'$ such that $t'-\frac{1}{\log(1/\eps)} \le t^* \le t'$, 
we can use \Cref{alg refine} to learn a good hypothesis with high probability. 
In this section, we show how to find such a good initialization $w_0$ 
using a few membership queries. 

The most common way to get such a warm-start 
is by robustly estimating the Chow vector (see for example \cite{shen2021power,yan2017revisiting}) using \Cref{fact chow}. 
Such an approach does not work for general halfspaces 
because the length of the length of the Chow parameter vector 
can be as small as $\Tilde{O}(p)$, 
and thus needs roughly $d/p$ random queries to estimate. 
In this section, we show how to overcome such an issue 
using a label smoothing technique, 
which has been useful in related problems \cite{diakonikolas2023agnostically}. 

The main result in this step can be summarized as follows. The proof of \Cref{th initialization 1} is deferred to \Cref{app initialization 1}
\begin{theorem}\label{th initialization 1}
Let $h^*(x) = \sgn(w^*\cdot x+t^*)$ and $y(x)$ be any labeling function such that $\err(h^*) = \opt \le \eps \le 1/C$ for some large enough constant $C$. If $\abs{t-t^*} \le 1/\log(1/\eps)$, then with probability at least $1/3$, \Cref{alg initialization 1} makes $M=\Tilde{O}(1/p+d\log(1/\eps))$, runs in $\poly(d,M)$ time, and
outputs some $w_0$ such that $\sin (\theta(w_0,w^*)/2) \le \max\{\min\{1/t,1/2\},O(\eta\sqrt{\log(1/\eta)}\}$, where $\eta = \eps/p$.
\end{theorem}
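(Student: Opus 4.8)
The plan is to reduce the problem to the robust estimation of the Chow parameter vector of a suitably \emph{de-biased transform} of $h^*$, and to spend the unavoidable $\Tilde{O}(1/p)$ queries separately on obtaining a few minority-class examples. The obstruction to a naive approach is exactly \Cref{fact chow}: the Chow vector $\E_{x\sim N(0,I)}[y(x)x]$ has length $\Theta(p)$ in the noiseless case, so estimating even its direction would cost $\Omega(d/p^2)$ queries. The point of the label-smoothing step is to replace $N(0,I)$ by a reweighted/transformed query distribution $D'$ such that (a) the induced labeling is, in the noiseless case, that of a halfspace with a \emph{constant} bias -- whose Chow vector under $N(0,I)$ therefore has length $\Omega(1)$ by \Cref{fact chow}, so $\Tilde{O}(d)$ queries suffice -- and (b) $D'$ differs from $N(0,I)$ only through a bounded-support, low-dimensional reweighting, so that querying $y$ through the transform is almost as informative as querying the noiseless $h^*$, with an ``effective noise rate'' $\eta' = O(\eps\exp((t')^2/2)) = O(\eps/p) = O(\eta)$, using \Cref{fact bias} to convert $\exp((t^*)^2/2)$ into $1/(p\,\polylog(1/p))$ and the change-of-measure estimate underlying \Cref{lm noise rate}.

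Concretely I would proceed as follows. First, dispose of the easy regime $t = O(1)$: there $p = \Omega(1)$, so no smoothing is needed and a plain empirical Chow vector of $h^*$ from $\Tilde{O}(d)$ Gaussian queries recovers $w^*$ up to a small constant angle (the effective noise is just $\opt \le \eps$, which is dominated by the target error). In the regime $t = \omega(1)$, use $\Tilde{O}(\min\{1/p,1/\eps\}) = \Tilde{O}(1/p)$ Gaussian queries to collect $\Tilde{O}(1)$ minority-class examples (or reuse those found while estimating $\hat p$ in \Cref{alg main}); by the $\Omega((1/p)^{1-o(1)})$ lower bound discussed after \Cref{th main} this is the true source of the $1/p$ term. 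Using these examples together with the threshold guess $t$ (which satisfies $|t-t^*| \le 1/\log(1/\eps)$), build the transform of step (a)--(b): by \Cref{fact halfspace transform} an appropriate anisotropic Gaussian reweighting (optionally with a rejection filter, as in the $(v,s,\sigma)$-rejection scheme of \cite{diakonikolas2018learning} and the smoothing idea of \cite{diakonikolas2023agnostically}) pushes $h^*$ forward to a halfspace whose weight vector decomposes with an $\Omega(1)$ component along $w^*$ and whose bias is a universal constant, while only a one-dimensional marginal of $N(0,I)$ is shifted by $O(t) = O(\sqrt{\log(1/\eps)})$ and contracted by a factor $\sigma = \Omega(\eta)$; the \Cref{lm noise rate}-type bound then keeps the induced disagreement rate at $O(\eta)$. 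Finally, estimate the Chow vector of the induced constant-bias halfspace from $\Tilde{O}(d)$ transformed queries and read off $w_0$ from it: the clean part gives $\sin(\theta(w_0,w^*)/2) \le O(1/t)$ (the sampling accuracy to which we choose to run, which is exactly what \Cref{alg refine} needs via $\sigma_0 = \min\{1/t,1/2\}$), and the $O(\eta)$-fraction of mislabeled transformed points contributes an additional $O(\eta\sqrt{\log(1/\eta)})$ by the H\"older estimate already carried out in \Cref{lm noisy threshold}. Summing, $M = \Tilde{O}(1/p + d\log(1/\eps))$, the run time is $\poly(d,M)$, and the constant success probability comes from the random choices in locating the minority-class examples and in setting up the transform; it is amplified by the $\polylog(1/\eps)\log(1/\delta)$ repetitions in \Cref{alg main}.

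The main obstacle is the bootstrapping tension in the de-biasing step: to boost the bias of $h^*$ to $\Omega(1)$ one must effectively translate the Gaussian by $\approx t^*$ \emph{in the $w^*$ direction} and contract it there, yet $w^*$ is exactly the unknown, and a minority-class example obtained honestly (hence at radius $\Theta(\sqrt d)$) is nearly orthogonal to $w^*$, so the obvious fix of recentering the query distribution onto it -- or onto a point of the decision boundary found by binary search along a segment through it -- moves the mean to radius $\Theta(\sqrt d)$ and inflates the change-of-measure factor in step (b) to $e^{\Omega(d)}$, destroying robustness. The resolution I would pursue is to never move the mean of the query distribution far from the origin: use the minority-class example(s) only to pin down a low-dimensional constraint on $w^*$ (e.g.\ a halfplane, or the near-homogeneous halfspace $z \mapsto h^*(x_1 + z)$ seen on the orthogonal complement of a boundary point $x_1$), so that the heavy reweighting lives in $O(1)$ relevant directions while the remaining coordinates stay exactly Gaussian. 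Verifying that such a transform simultaneously (i) stays a gentle enough perturbation of $N(0,I)$ for the \Cref{lm noise rate} bound to apply with only an $O(\eta)$ loss and (ii) genuinely raises the bias to $\Omega(1)$ is the technical heart of the argument; the remainder reduces to Chow-vector estimation together with the concentration and H\"older bounds already used in \Cref{sec refine}.
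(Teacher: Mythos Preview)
You have the right high-level architecture (spend $\Tilde O(1/p)$ queries on a minority-class example, then reduce to Chow-vector estimation of a constant-bias halfspace) and you correctly isolate the main obstacle: naively centering queries at a minority point $x_0$ at radius $\Theta(\sqrt d)$ blows up any density-ratio bound to $e^{\Omega(d)}$. But your proposed resolution --- keep the mean near the origin and do ``heavy reweighting in $O(1)$ relevant directions'' --- is circular (those directions are $w^*$, which is unknown) and is not how the paper handles it. The \Cref{lm noise rate}-type bound you invoke needs an already-known direction $w_i$ to localize along; at the initialization stage you have none.

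The paper's trick is different and simpler: it \emph{does} query near $x_0$, namely at $\sqrt{1-\rho^2}\,x_0+\rho z$ with $z\sim N(0,I)$ and $\rho=\min\{1/t,1\}$, and it \emph{never} writes down a density ratio. By \Cref{fact smooth} the noiseless smoothed label as a function of $z$ is a halfspace in direction $w^*$ with threshold $(t^*+\sqrt{1-\rho^2}\,w^*\cdot x_0)/\rho$, which is $O(1)$ whenever $w^*\cdot x_0\in(-t^*-1/t^*,-t^*)$ (constant probability for a random negative $x_0$). The noise bound (\Cref{lm noisy smoothed label}) does not come from a change-of-measure argument at all: since $\sqrt{1-\rho^2}\,x+\rho z\sim N(0,I)$ when $x,z$ are independent standard Gaussians, conditioning on $y(x)=-1$ is just rejection sampling with acceptance $\ge(1-1/C)p$, so $\E_{x\sim D^-}[\eta(x)]\le O(\eps/p)$ and Markov's inequality gives $\eta(x_0)\le O(\eps/p)=O(\eta)$ with constant probability. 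The randomness of $x_0$ is essential here --- an adversarially chosen boundary point would fail --- which is why the success probability is only $\Omega(1)$ and is later amplified by the $\polylog(1/\eps)\log(1/\delta)$ repeats. Once you have this, the Chow-vector step and the $O(\eta\sqrt{\log(1/\eta)})$ contamination term go through exactly as you sketched.
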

Due to the space limitations, here we only consider the case when $t^*$ is not extremely large, which roughly covers the regime when $\eta\sqrt{\log(1/\eta)} \le 1/t$. 
This suffices to capture some of the ideas and illustrate the power of the smoothed labeling. 
For the case when $\eta\sqrt{\log(1/\eta)} > 1/t$, we are still able 
to find such a warm start by leveraging the smoothed label method  
in combination with the technique used in \Cref{sec refine} in a more complicated way. 
We postpone this analysis to \Cref{app extreme}. 

Our algorithm, \Cref{alg initialization 1}, 
to find a warm start is presented as follows.
\begin{algorithm}
		\caption{\textsc{Initialization 1}(Finding a good initialization under unextreme threshold)}\label{alg initialization 1}
		\begin{algorithmic} [1]
\State\textbf{Input:} error parameter $\epsilon \in (0,1)$, confidence parameter $\delta \in (0,1)$, threshold $t>0$ 
\State\textbf{Output:} $w_0 \in S^{d-1}$

\State Keep drawing $x\sim N(0,I)$ and query $y(x)$ until see some $x_0$ such that $y(x_0)=-1$
\For{$i=1,\dots,m=\Tilde{O}(d\log(1/\eps))$}
\State Sample $z_i \sim N(0,I)$ and query $\Tilde{y}(x^{(i)}_0):=y(\sqrt{1-\rho^2} x_0 + \rho z_i)$ with $\rho:=\min\{1/t,1\}$
\EndFor
\State Let $u_0:= \frac{1}{m} \sum_{i=1}^m z_i\Tilde{y}(x^{(i)}_0)$
\State \Return $w_0:= u_0/\norm{u_0}$

\end{algorithmic}
\end{algorithm}

To analyze \Cref{alg initialization 1}, we introduce the following definitions and notations.
\begin{definition}[Smoothed Label]
    Let $x \in \R^d$ be a point and $y(x)$ be any labeling function. For $\rho \in [0,1]$, define the random variable $\Tilde{x} = \sqrt{1-\rho^2} x + \rho z$, where $z \sim N(0,I)$. The smoothed label of $x$ with parameter $\rho$ is defined as $\Tilde{y}(x) := y(\Tilde{x})$.
\end{definition}

We will require the following fact (whose proof follows via a direct calculation):

\begin{fact}\label{fact smooth}
 Let $h^*(x) = \sgn(w^*\cdot x +t^*)$ be a halfspace. Let $x,z \in \R^d$ and define $\Tilde{x}:=\sqrt{1-\rho^2} x + \rho z$. Then
 $    \Tilde{h}(z):=h^*(\Tilde{x}) = \sgn(w^*\cdot z + (t^*+\sqrt{1-\rho^2}w^*\cdot x)/\rho)
 $
 is another halfspace for $z$ with threshold $(t^*+\sqrt{1-\rho^2}w^*\cdot x)/\rho$.
\end{fact}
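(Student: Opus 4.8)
The plan is to simply unwind the definition of $\Tilde x$ and use the scale-invariance of the sign function. First I would substitute $\Tilde x = \sqrt{1-\rho^2}\,x + \rho z$ into $h^*$ and expand the inner product by bilinearity:
\[
h^*(\Tilde x) = \sgn\bigl(w^*\cdot(\sqrt{1-\rho^2}\,x + \rho z) + t^*\bigr) = \sgn\bigl(\rho\, (w^*\cdot z) + \sqrt{1-\rho^2}\,(w^*\cdot x) + t^*\bigr).
\]
Here $x$ is regarded as a fixed point (so $w^*\cdot x$ is a fixed scalar) and $z$ as the free variable.

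Next, since $\rho > 0$ (in the applications of this fact, $\rho = \min\{1/t,1\}$ with $t>0$, so this is not a real restriction), dividing the argument of $\sgn$ by the positive constant $\rho$ does not change its sign. Hence
\[
h^*(\Tilde x) = \sgn\!\left( w^*\cdot z + \frac{t^* + \sqrt{1-\rho^2}\,(w^*\cdot x)}{\rho} \right),
\]
which is exactly a halfspace in the variable $z$ with the same weight vector $w^*$ and threshold $\bigl(t^* + \sqrt{1-\rho^2}\,w^*\cdot x\bigr)/\rho$, as claimed.

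There is essentially no obstacle here: the only point worth flagging is the positivity of $\rho$ needed to justify the division, and the observation that $w^*\cdot x$ is treated as a constant (this is precisely why the threshold of the smoothed halfspace is itself a random variable once $x$ is drawn at random, which is the feature exploited in the analysis of \Cref{alg initialization 1}). I would keep the write-up to these two displayed lines.
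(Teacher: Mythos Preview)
Your proposal is correct and matches the paper's approach: the paper simply states that the fact ``follows via a direct calculation'' without writing it out, and your two displayed lines are precisely that calculation.
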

Let $h^* = \sgn(w^*\cdot x+t^*)$ be an optimal halfspace and let $y(x)$ be any labeling function such that $\err(h^*) = \opt \le \eps$. For $x \in \R^d$, we denote by $\eta(x):= \Pr(h^*(\Tilde{x}) \neq \Tilde{y}(x))$, the noise level of the smoothed label. 
Assuming that we are given a random negative example $x_0$, 
then with constant probability, it is close to the decision boundary, 
i.e.,  $w^*\cdot x_0 \in(-t^*-\frac{1}{t^*},-t^*) $. 
This implies that the threshold of $\bar{h}$, 
the halfspace corresponding to the smoothed label at $x_0$, 
is between $(-1,1)$. Moreover, the Chow parameter vector of $\Bar{h}$ 
under the standard Gaussian distribution is parallel to $w^*$ 
with a constant length, by \Cref{fact chow}. 
If $\opt = 0$, then for every $t \le \sqrt{\log(1/\eps)}$, 
we only need another $\Tilde{O}(d\log(1/\eps))$ queries 
to estimate the Chow parameter of $\Bar{h}$ up to error $O(1/t)$; 
thus, we get a warm start $w_0$ such that $\sin(\theta_0/2) \le 1/t$, 
given $\abs{t-t^*}$ is small. Therefore, the total number of queries 
we use to run \Cref{alg initialization 1} is $\Tilde{O}(1/p+d\log(1/p))$. 
However, in general, it is impossible to estimate $w^*$ up to arbitrary 
accuracy --- even using an infinite number of queries --- 
because of the presence of noise. In fact, using a random $x_0$ is important 
for \Cref{alg initialization 1} to succeed. If we are given some adversarially selected $x_0$, 
even if it is close to the decision boundary, the above method can easily fail. 
This is because almost all the queries we made are in a small neighborhood of $x_0$ 
and could be corrupted by noise arbitrarily. However, we show 
in \Cref{app noisy smooth} that, with a probability at least $2/3$, 
the noise level $\eta(x_0)$ of the smoothed label around $x_0$ is at most $O(\eps/p)$, 
if $x_0$ is a random example given $y(x_0)=-1$; 
and thus we can still estimate $w^*$ to a desired accuracy 
provided $\eps/p$ is not too large. 

\begin{lemma}\label{lm noisy smoothed label}
Let $h^*(x) = \sgn(w^*\cdot x+t^*)$ be a halfspace and $y(x)$ be any labeling function such that $\err(h^*) = \opt \le \eps$. Let $x \sim N(0,I)$ conditioned on $y(x) = -1$ be a Gaussian example with a negative label. If $p>C\eps$ for some large enough constant $C$, then with probability at least $1/2$ we have $\eta(x) \le 5\eps/p$ and $w^*\cdot x \in (-t^*-1/t^*,-t^*)$.
\end{lemma}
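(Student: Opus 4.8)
The plan is to regard $\eta$ as a fixed $[0,1]$-valued function of the query point, and to control the two required events — the noise bound $\{\eta(x)\le 5\eps/p\}$ and the location bound $\{w^{*}\cdot x\in(-t^{*}-1/t^{*},-t^{*})\}$ — separately under the conditional law of $x$ given $y(x)=-1$, closing with a union bound. The first ingredient is an averaging identity for $\eta$. Writing $\tilde x=\sqrt{1-\rho^{2}}\,x+\rho z$ with $z\sim N(0,I)$ independent of $x$, the point $\tilde x$ is again standard Gaussian whenever $x\sim N(0,I)$. Hence, with $E(u):=\Pr_{y}(h^{*}(u)\neq y(u))$ denoting the local error rate at $u$, we have $\eta(x)=\E_{z}[E(\tilde x)]$, and therefore $\E_{x\sim N(0,I)}[\eta(x)]=\E_{u\sim N(0,I)}[E(u)]=\opt\le\eps$; in particular this bound is independent of the smoothing parameter $\rho$.

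Next I pass to the conditional measure. Write $\phi$ for the standard Gaussian density, $q(u):=\Pr(y(u)=-1)$, and $P:=\Pr_{x\sim N(0,I)}(y(x)=-1)=\E_{x\sim N(0,I)}[q(x)]$. Since $y$ disagrees with $h^{*}$ on a set of Gaussian mass $\opt$ while $h^{*}$ is negative with mass $p$, we have $|P-p|\le\opt\le\eps$, so $P\ge(1-1/C)p>0$. The law of $x$ conditioned on $y(x)=-1$ has density $q(u)\phi(u)/P$ with respect to $N(0,I)$, so $\E[F(x)\mid y(x)=-1]\le \E_{x\sim N(0,I)}[F(x)]/P$ for every $F\ge 0$. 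Applying this with $F=\eta$ and using the identity above gives $\E[\eta(x)\mid y(x)=-1]\le \eps/((1-1/C)p)$, whence by Markov's inequality $\Pr(\eta(x)>5\eps/p\mid y(x)=-1)\le \tfrac15\cdot\tfrac{C}{C-1}$. Applying it instead with $F=\1[h^{*}(x)=+1]$, and using $\E_{x\sim N(0,I)}[\1[h^{*}(x)=+1]\,q(x)]=\Pr(h^{*}(x)=+1,\,y(x)=-1)\le\opt$, gives $\Pr(w^{*}\cdot x\ge-t^{*}\mid y(x)=-1)\le\tfrac{1}{C-1}$.

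It remains to rule out $w^{*}\cdot x<-t^{*}-1/t^{*}$. Marginally $w^{*}\cdot x\sim N(0,1)$, and $p=\Pr_{g\sim N(0,1)}(g<-t^{*})$. From $\phi(v+s)=\phi(v)\,e^{-vs-s^{2}/2}$ with $s=1/t^{*}$ one obtains $\Pr_{g\sim N(0,1)}(g<-t^{*}-1/t^{*})\le e^{-1-1/(2(t^{*})^{2})}\,p\le e^{-1}p$ for every $t^{*}>0$ (the case $t^{*}=0$ being trivial), hence by the same density bound $\Pr(w^{*}\cdot x<-t^{*}-1/t^{*}\mid y(x)=-1)\le e^{-1}C/(C-1)$. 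A union bound over these three estimates then shows that, conditioned on $y(x)=-1$, with probability at least $1-\big(\tfrac{C}{5(C-1)}+\tfrac{e^{-1}C}{C-1}+\tfrac{1}{C-1}\big)$ we simultaneously have $\eta(x)\le5\eps/p$ and $w^{*}\cdot x\in(-t^{*}-1/t^{*},-t^{*})$; for a sufficiently large absolute constant $C$ this is at least $1/3$, and a slightly more careful accounting of the band (or, equivalently, using the wider band $c/t^{*}$ for a suitable constant $c$, which the downstream use of this lemma in \Cref{alg initialization 1} tolerates) brings it to $1/2$. The two points that must be handled with care are the averaging identity of the first step — which has to absorb both the fresh Gaussian $z$ and the label randomness at the smoothed points, and relies on $\tilde x$ being \emph{exactly} $N(0,I)$ — and the quantitative Gaussian tail comparison $\Pr(g<-t-1/t)\lesssim e^{-1}\Pr(g<-t)$, which certifies that the near-boundary band carries a constant fraction of the negative mass uniformly in $t^{*}$; once $C$ is large, the noise term and the wrong-side term are negligible, and the $e^{-1}$ coming from the Gaussian tail is the only quantity one has to track.
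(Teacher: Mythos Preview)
Your proposal is correct and follows essentially the same approach as the paper: bound $\E[\eta(x)\mid y(x)=-1]$ via the fact that the smoothed point $\tilde x$ is marginally $N(0,I)$ (the paper phrases this as rejection sampling, you as an averaging identity plus the density bound $q\le 1$), then apply Markov, and control the location by a Gaussian tail-ratio estimate. You are right that the constants do not quite give $1/2$ with the band $1/t^*$---the paper's own union bound has the same gap and simply asserts $1/2$; your acknowledgment of this and the harmless fix is appropriate.
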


Finally, we briefly discuss how to obtain a warm start when the threshold $t^*$ is very large. 
The details of this method can be found in \Cref{app extreme}. 
By \Cref{th initialization 1}, when $p$ is small, we are only able 
to get some $w_0$ such that $\sin(\theta(w_0,w^*)) \le O(\eta\sqrt{\log(1/\eta)})$ for $\eta=\eps/p$. One possible approach is to use the localization technique 
we use in \Cref{sec refine} to refine such $w_0$. However, such an approach 
fails because after localization the noise rate would be possibly larger 
than the length of the Chow parameter that we want to estimate. 
This makes it impossible for us to learn the useful signal. 
On the other hand, \cite{diakonikolas2018learning} gave 
a randomized localization method that can make the expected noise level 
sufficiently smaller than the length of the Chow parameter we want to estimate; 
and thus will succeed with constant probability in each round of refinement. 
Unfortinately, such an approach cannot be used in a query-efficient manner, 
because to implement such a method we need to know $\theta(w_i,w^*)$ 
up to an error $1/\log(1/\eps)$, in each round of refinement. 
This implies that if we make a random guess of $\theta(w_i,w^*)$, 
the probability of success in each round drops to only $1/\log(1/\eps)$, 
which requires to rerun the whole algorithm too many times in order to succeed once. 

Such an issue could be addressed in a similar but more complicated way to the method 
we use in \Cref{lm correct threshold}, by looking at the bias of the halfspace after localization. 
The second issue is that even the noise level is smaller than the length of the Chow parameter 
we want to estimate, the length of the Chow parameter is only $1/p^c$, 
for some small constant $c$, as we can only make $\theta_0$ smaller than some small constant. 
This still requires us to use $d/p^c$ queries to estimate it. 
Such an issue can again be addressed using the smoothed label method, 
where we use only $1/p^c$ queries to search a small class example 
and use another $\Tilde{O}(d)$ queries to estimate the Chow parameter. 
Importantly, even such a method only succeeds with constant probability overall.  
As the refinement stage only runs for $O(\log\log(1/\eps))$ rounds, 
we only need to rerun the entire algorithm $O(\log(1/\eps))$ times to succeed once.

\section*{Acknowledgements}
The authors thank Mina Karzand and Steve Hanneke for useful discussions.

\bibliography{mydb}
\bibliographystyle{halpha-abbrv}

\newpage

\appendix

\section*{Appendix}

The Appendix is organized as follows: 
In \Cref{app overview}, we discuss why we can without loss of generality assume that the noise level $\opt \le \eps$ and how to learn $p$ up to a constant factor with $\Tilde{O}(1/p)$ queries. In \Cref{app refine}, we present the omitted proofs in \Cref{sec refine} about how to learn a good hypothesis given a good initialization. In \Cref{app initialization 1}, we present the omitted proofs in \Cref{sec initialization} about how to find a good initialization when the threshold is not extremely large. In \Cref{app extreme}, we design an algorithm that finds a good initialization when the threshold is very large. In \Cref{app main}, we prove \Cref{th main}.

\section{Omitted Details in \Cref{sec overview}}\label{app overview}

\subsection{Discussion on the Noise Level $\opt$} \label{app opt}
We notice that learning a hypothesis with an error of $O(\opt)+\eps$ is equivalent to learning a hypothesis with an error of $O(\opt+\eps)$, because if we have an algorithm that achieves the latter guarantee, we can run the same algorithm with a constant-factor smaller $\eps$ to get a hypothesis with error $O(\opt)+\eps$. So, we only need to show that to get a hypothesis with error $O(\opt+\eps)$, we can without loss of generality to assume $\opt\le \eps$.

Assuming we know some $\alpha$ such that $\eps \le \alpha/2 \le \opt \le \alpha$, then learning $\hat{h}$ upto error $O(\opt+\eps)$ is equivalent to learning it up to error $O(\alpha)$. By guessing $\alpha = \eps2^i$ for $i=0,\dots, O(\log(1/\eps))$, we can always obtain a desired $\alpha$ and use it to run the learning algorithm and get a good hypothesis. Such an approach will generate a list of $O(\log(1/\eps))$ different hypotheses, finding a good enough hypothesis among them only costs $\polylog(1/\eps)$ queries using a standard tournament approach, such as the following lemma.

\begin{lemma}[Lemma 3.6 in \cite{diakonikolas2023efficient}]\label{lm rournament}
    Let $\eps,\delta \in (0,1)$ and $\D$ a distribution over  $\R^d \times \{0,1\}$. Given a list of hypothesises $\{h^{(i)}\}_{i=1}^k$, there is an algorithm that draws $O(k^2\log(k/\delta)/\eps)$ unlabeled examples from $D_x$ and performs $O(k^2\log(d/\delta))$ label queries runs in $\poly(d,\eps,\delta)$ times and output a hypothesis $\hat{h}$ such that 
    \begin{align*}
        \Pr_{(x,y) \sim D} (\hat{h}(x) \neq y) \le 10 \min_{i \in [k]}\Pr_{(x,y) \sim D} (h^{(i)}(x) \neq y)+\eps.
    \end{align*}
\end{lemma}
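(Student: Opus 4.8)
The plan is to use the classical Scheffé-style tournament adapted to $0$-$1$ loss. For each ordered pair $(i,j)$ with $i\ne j$, let $A_{ij}=\{x: h^{(i)}(x)\ne h^{(j)}(x)\}$ be the disagreement region, let $m_{ij}=\Pr_{x\sim D_x}[x\in A_{ij}]$, and let $w_{ij}=\Pr_{(x,y)\sim D}[x\in A_{ij}\wedge h^{(i)}(x)\ne y]$. Two elementary identities drive everything: off $A_{ij}$ the hypotheses agree, so $\err(h^{(i)})-\err(h^{(j)})=w_{ij}-w_{ji}$; and on $A_{ij}$ exactly one of $h^{(i)}(x),h^{(j)}(x)$ equals $y$, so $w_{ij}+w_{ji}=m_{ij}$. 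Writing $q_{ij}=w_{ij}/m_{ij}$ for the conditional error of $h^{(i)}$ on $A_{ij}$, we get $q_{ij}+q_{ji}=1$ and $\err(h^{(i)})\le\err(h^{(j)})$ if and only if $q_{ij}\le 1/2$.

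First I would estimate the $q_{ij}$'s. Draw a fresh batch of $O(\log(k/\delta)/\eps)$ unlabeled examples for each pair, for $O(k^2\log(k/\delta)/\eps)$ unlabeled examples in total. For a pair with $m_{ij}\ge\eps$, a Chernoff bound shows that, except with probability $\delta/k^2$, at least $\Omega(\log(k/\delta))$ of these fall in $A_{ij}$; querying the labels of (the first) $O(\log(k/\delta))$ such points and taking the empirical fraction on which $h^{(i)}$ errs gives an estimate $\hat q_{ij}$ with $|\hat q_{ij}-q_{ij}|\le 1/10$. If instead fewer than $\Omega(\log(k/\delta))$ examples land in $A_{ij}$, declare the pair \emph{tied}; then $m_{ij}=O(\eps)$ with high probability, so by the first identity $|\err(h^{(i)})-\err(h^{(j)})|\le m_{ij}=O(\eps)$ and ignoring the comparison is harmless. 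The number of label queries is $O(k^2\log(k/\delta))$, and a union bound over the $\binom{k}{2}$ pairs makes all these conclusions hold simultaneously with probability $1-\delta$.

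Next, run the tournament: say $h^{(i)}$ \emph{survives} if there is no $j$ with the pair untied and $\hat q_{ij}>1/2+1/10$, and output any survivor $\hat h=h^{(i)}$. A survivor exists because the best hypothesis $h^{(i^\star)}$ survives: for every untied $j$, $\err(h^{(i^\star)})\le\err(h^{(j)})$ forces $q_{i^\star j}\le 1/2$, hence $\hat q_{i^\star j}\le 1/2+1/10$. To bound an arbitrary survivor $h^{(i)}$, compare it against $i^\star$. If $(i,i^\star)$ is tied, then $\err(h^{(i)})\le\err(h^{(i^\star)})+O(\eps)$ directly. If it is untied, surviving gives $q_{i i^\star}\le\hat q_{i i^\star}+1/10\le 1/2+1/5$, so $w_{i i^\star}=q_{i i^\star}m_{i i^\star}$ and $w_{i^\star i}=(1-q_{i i^\star})m_{i i^\star}$ satisfy $w_{i i^\star}\le \tfrac{7}{3}\,w_{i^\star i}\le \tfrac{7}{3}\,\err(h^{(i^\star)})$; combining with the first identity, $\err(h^{(i)})=w_{i i^\star}+\bigl(\err(h^{(i^\star)})-w_{i^\star i}\bigr)\le \tfrac{10}{3}\,\err(h^{(i^\star)})$. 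Folding the $O(\eps)$ estimation slack into the bound and relaxing the constant yields $\err(\hat h)\le 10\min_{j}\err(h^{(j)})+\eps$. The running time is dominated by evaluating the $k$ hypotheses on the drawn points, which is $\poly(d,\eps,\delta)$.

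The argument is entirely routine; the only points needing care are (a) detecting pairs with tiny disagreement mass $m_{ij}$ purely from unlabeled data, since an $\Omega(1/\eps)$ label budget per pair is not available, and (b) tracking how a constant additive error in each conditional estimate $\hat q_{ij}$, after multiplication by $m_{ij}\le 1$, produces only an additive $O(\eps)$ (rather than multiplicative) loss in the final guarantee, together with the union bound over all comparisons.
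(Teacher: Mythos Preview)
The paper does not prove this lemma at all: it is quoted verbatim as Lemma~3.6 of \cite{diakonikolas2023efficient} and used as a black box, so there is no ``paper's own proof'' to compare against. Your argument is the standard Scheff\'e-style pairwise tournament and is correct; in particular the two identities $\err(h^{(i)})-\err(h^{(j)})=w_{ij}-w_{ji}$ and $w_{ij}+w_{ji}=m_{ij}$, together with the ``declare tied if too few samples hit $A_{ij}$'' device, give exactly the stated unlabeled and label-query budgets (the $\log(d/\delta)$ in the lemma statement is almost certainly a typo for $\log(k/\delta)$, which is what your analysis yields).
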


\subsection{Approximate Bias Estimation Using Queries} \label{app bias estimation}

In this part, we describe a simple approach to estimate the bias $p$ up to a constant factor using $\Tilde{O}(1/p)$ queries. To do this we will estimate $\Bar{p}= \Pr_{ x\sim N(0,I)} (y(x) =-1)$, the noise version of $p$ as $\abs{\Bar{p}-p} \le \eps$. If we can estimate $\hat{p}$ such that $\hat{p}/2\le \bar{p} \le \hat{p}$ or verify that $\Bar{p} \le (C-1)\eps$, then $\hat{p}$ satisfies our purpose.

By Chebyshev's inequality, if $\Bar{p} \le 3\hat{p}/4$, then taking $O(1/\hat{p})$ random queries at $x$ and computing the empirical probability of $y(x)=-1$, with probability $2/3$, we are able to verify this fact by checking whether the empirical probability is less than $5\hat{p}/6$.
On the other hand, if $4\hat{p}/5 \le \Bar{p} \le \hat{p}$, with probability $2/3$ we are able to verify this fact by checking whether the empirical probability is greater than $5\hat{p}/6$. Furthermore, by repeating this approach $O(\log(1/\delta))$ times and using a majority voting trick, we can boost the probability of success up to $1-\delta$. We will run the above approach for $\hat{p}=(4/5)^i/2$ for $i=0,1,\dots$ until we find $\Bar{p} \ge (4/5)\hat{p}$ or $\hat{p} = C'\eps$ for some constant $C'$. In the first case $(4/5)\hat{p}\le \Bar{p} \le (25/24) \hat{p}$ and we find a good approximation for $\Bar{p}$ and thus for $p$. In the second case, we can conclude that $p$ is smaller than $O(\eps)$

\section{Omitted Proofs from \Cref{sec refine}}\label{app refine}

\begin{algorithm}
		\caption{\textsc{Refine}(Learn the correct direction $w^*$ and threshold $t^*$ based on a warm start $w_0$)}\label{alg refine}
		\begin{algorithmic} [1]
\State\textbf{Input:} Intial direction $w_0 \in S^{d-1}$, $t'>0$, an approximate threshold, error parameter $\epsilon \in (0,1)$, confidence parameter $\delta \in (0,1)$ 
\State\textbf{Output:} $w_T \in S^{d-1}$, an approximation of $w^*$, $\hat{t} \in \R$, an approximation of $t^*$

\State Let $\eps'= C \eps \exp(t'^2/2)$  $m=\Tilde{O}(d)$, $T=O(\log(1/\eps'))$ $\sigma_0 \gets \min \{1/t',1/2\}$.
\State Let $C_1,C_2$ be large enough constants
\For{$i=0,\dots,T$}
\State $A_i \gets I - (1-\sigma^2_i)w_iw_i^T$, $\mu_i \gets \sigma_i/C_1$ 
\State Find $\Tilde{t} \in \{0,\eps,2\eps,\dots,t'\}$ using the following binary search method, if no such $\Tilde{t}$ is found, then stop the algorithm and \Return $w_T = 0$. \Comment{\textbf{Find the correct threshold to construct the gradient.} }

\State Draw $O(\log(1/\delta))$ Gaussian samples $z \sim N(0,I)$, query $A_i^{1/2}z-\Tilde{t}w_i$ and compute $p(\Tilde{t})$, the empirical probability that a query returns $-1$. If $p(\Tilde{t}) < e^{-17}$, properly decrease $\Tilde{t}$, if $p(\Tilde{t}) > e^{-17}$, properly increase $\Tilde{t}$. Otherwise, declare that $\Tilde{t}$ is found.

\For{$j =1 ,\dots, m$}
\State Draw $z_j \sim N(0,I)$, make queries at $\Tilde{z}_j:=A_i^{1/2}z_j-\Tilde{t}w_i$ and denote by $f_i(z_j)$ the result
\EndFor

\State $\hat{g_i} \gets \frac{1}{m} \sum_{j=1}^m \proj_{w_i^\perp}(z_jf_i(z_j))$ \Comment{\textbf{Construct the gradient}}
\State $w_{i+1} \gets \proj_{S^{d-1}} (w_{i} + \mu_i\hat{g_i})$, $\sigma_{i+1} \gets (1-1/C_2) \sigma_i$ \Comment{\textbf{Gradient Descent}}
\EndFor
\State $\hat{t} \gets \Tilde{t}$ \Comment{\textbf{Use the threshold found in the last round }}
\State\Return $w_T,\hat{t}$
\end{algorithmic}
\end{algorithm}

\subsection{Proof of \Cref{lm gradient descent}}\label{app gradient descent}

\begin{proof}[Proof of \Cref{lm gradient descent}]
We first observe that 
\begin{align*}
    \norm{w_{i+1}-w^*}^2 = \norm{\proj_{S^{d-1}}(w_{i} + \mu_i\hat{g_i}) - \proj_{S^{d-1}} (w^*)}^2 \le \norm{w_{i} + \mu_i\hat{g_i}-w^*}^2. 
\end{align*}
It remains to upper bound $\norm{w_{i} + \mu_i\hat{g_i}-w^*}^2$. We have 
\begin{align*}
    \norm{w_{i} + \mu_i\hat{g_i}-w^*}^2 & = \norm{w_{i}-w^*}^2 + 2\mu_i\hat{g_i}\cdot(w_i-w^*)+ \mu_i^2\norm{\hat{g_i}}^2 \\ 
    & =  \norm{w_{i}-w^*}^2 - 2\mu_i\hat{g_i}\cdot w^*+ \mu_i^2\norm{\hat{g_i}}^2 \\
    & =  \norm{w_{i}-w^*}^2 - 2\mu_ig_i \cdot w^*+ 2\mu_i(g_i-\hat{g_i}) \cdot w^* +\mu_i^2\norm{\hat{g_i}}^2 \\
    & = \norm{w_{i}-w^*}^2 - 2\mu_ig_i \cdot w^*+ 2\mu_i(g_i-\hat{g_i}) \cdot b_iu +\mu_i^2\norm{\hat{g_i}}^2 \\
    & \le \norm{w_{i}-w^*}^2 - 2\mu_ig_i \cdot w^*+ 2\mu_ib_i\norm {g_i-\hat{g_i}} +\mu_i^2\norm{\hat{g_i}}^2. \\
    & = \norm{w_{i}-w^*}^2 - 2\mu_i b_i g_i \cdot u+ 2\mu_ib_i\norm {g_i-\hat{g_i}} +\mu_i^2\norm{\hat{g_i}}^2.
\end{align*}
Here, in the second equality, we use the fact that $\hat{g_i} \perp w_i$ and in the fourth equality, we use the fact that $(g_i-\hat{g_i})\cdot w^* = (g_i-\hat{g_i})\cdot a_iw_i + (g_i-\hat{g_i})\cdot b_iu = (g_i-\hat{g_i})\cdot b_iu$.

Next, we assume that $\sin(\theta_i/2) \le \sigma_i$ and show that we can carefully choose parameter $\mu_i,\sigma_{i+1}$ to make $\sin(\theta_{i+1}/2) \le \sigma_{i+1}$. We consider two cases. In the first case, we assume $3\sigma_i/4  \sin(\theta_i/2) \le \sigma_i$.
Since $\norm{w_{i}-w^*} =2 \sin\frac{\theta_i}{2}$, by \Cref{lm gradient descent}, we have 
    \begin{align*}
        (2 \sin\frac{\theta_{i+1}}{2})^2 & \le (2 \sin\frac{\theta_{i}}{2})^2 -5\mu_ib_i + 2\mu_ic_2b_i + \mu_i^2c_1^2 \\
        & \le 4\sigma_i^2- 15\sigma_i^2c_1/(2C_1)+ 4c_2\sigma_i^2/C_1 + c_1^2\sigma_i^2/C_1^2 \\
        & \le 4\sigma_i^2-5\sigma_i^2c_1/(2C_1)+ \sigma_i^2c_1/(10C_1) + c_1^2\sigma_i^2/C_1^2 \\
        & \le 4(1-5c_1/(8C_1)+c_1/(80C_1^2)+ c_1^2/(2C_1^2)) \sigma_i^2:=4(1-1/C_2)^2 \sigma_i^2,
    \end{align*}
where use the fact that $b_i \le 2 \sin(\theta_i/2) \le 3\sigma_i/2$ and the fact that $C_1$ can be made large enough.

In the second case, we assume $\sin(\theta_i/2) < 3 \sigma_i/4$. 
In this case, using the fact that 
\begin{align*}
2(\sin(\frac{\theta_{i+1}}{2}) - \sin(\frac{\theta_i}{2})) = \norm{w_{i+1}-w^*} - \norm{w_{i}-w^*} \le     \norm{w_{i+1} -w_i} \le \norm{w_i+\mu_i\hat{g_i} - w_i} = \mu_i \norm{\hat{g_i}}.
\end{align*}
We have
\begin{align*}
\sigma_{i+1} - \sin(\frac{\theta_{i+1}}{2}) & = \sigma_{i+1} - \sin(\frac{\theta_i}{2}) - (\sin(\frac{\theta_{i+1}}{2}) - \sin(\frac{\theta_i}{2})) \\
& \ge \sigma_{i+1} - \frac{3\sigma_i}{4} - \frac{\sigma_i \norm{\hat{g_i}}}{C_1} 
 \ge (\frac{1}{4}-\frac{1}{C_2}-\frac{1}{C_1}) \sigma_i>0,
\end{align*}
where the last inequality holds because the parameter $C_1,C_2$ can be chosen larger than $8$.

\end{proof}
\Cref{lm gradient descent} implies that if $g_i$ has enough correlation with respect to $w_*$ but is also not too long, then by estimating $g_i$ up to some error, we can ensure $\norm{w_i-w^*}$ drops significantly each round. Formally, we have the following corollary.

\begin{corollary}\label{col angle}
     In \Cref{alg refine}, denote by $\theta_i=\theta(w^*,w_i)$. Assume that $\sin\frac{\theta_i}{2} \le \sigma_i$. If there exist a suitable constant $c_1$ and a small enough constant $c_2$ such that $g_i\cdot w^* \ge c_1 \sigma_i/10$, $\norm{\hat{g_i}} \le c_1$ and $\norm{g_i-\hat{g_i}} \le c_2$. Then there exists large enough constant $C_1,C_2$ such that by taking $\mu_i = \sigma_i/C_1$, it holds that $\sin{\frac{\theta_{i+1}}{2}} \le (1-1/C_2) \sigma_i$.
\end{corollary}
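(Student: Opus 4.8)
The plan is to obtain \Cref{col angle} directly from the master inequality of \Cref{lm gradient descent}; the only conceptual point is to notice that the correlation quantity appearing there, $b_i(g_i\cdot u)$, is exactly $g_i\cdot w^*$, so the hypothesis of the corollary substitutes in verbatim. Concretely, write $w^* = a_iw_i + b_iu$ with $u\in S^{d-1}$, $u\perp w_i$, $a_i,b_i>0$, $a_i^2+b_i^2=1$, so that $a_i=\cos\theta_i$ and $b_i=\sin\theta_i$. Since $G_i\perp w_i$ with probability one, its mean $g_i$ is orthogonal to $w_i$ as well, whence $g_i\cdot w^* = a_i(g_i\cdot w_i)+b_i(g_i\cdot u) = b_i(g_i\cdot u)$. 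By the inequality in \Cref{lm gradient descent}, the update $w_{i+1}=\proj_{S^{d-1}}(w_i+\mu_i\hat g_i)$ therefore satisfies
\[
  \norm{w_{i+1}-w^*}^2 \le \norm{w_i-w^*}^2 - 2\mu_i\,(g_i\cdot w^*) + 2\mu_i b_i\,\norm{\hat g_i-g_i} + \mu_i^2\norm{\hat g_i}^2 .
\]

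The first step I would carry out is plugging in and bounding. Using $g_i\cdot w^*\ge c_1\sigma_i/10$, $\norm{\hat g_i-g_i}\le c_2$, $\norm{\hat g_i}\le c_1$, together with $\norm{w_i-w^*}=2\sin(\theta_i/2)\le 2\sigma_i$, $b_i=\sin\theta_i\le 2\sin(\theta_i/2)\le 2\sigma_i$, and the choice $\mu_i=\sigma_i/C_1$, the inequality becomes
\begin{align*}
  \norm{w_{i+1}-w^*}^2 &\le 4\sigma_i^2 - \frac{c_1\sigma_i^2}{5C_1} + \frac{4c_2\sigma_i^2}{C_1} + \frac{c_1^2\sigma_i^2}{C_1^2} \\
  &= 4\sigma_i^2\Bigl(1 - \frac{c_1}{20C_1} + \frac{c_2}{C_1} + \frac{c_1^2}{4C_1^2}\Bigr).
\end{align*}
The remaining step is constant bookkeeping: taking $c_2\le c_1/40$ (the ``small enough'' constraint) absorbs $c_2/C_1$ into half of the drift term $c_1/(20C_1)$, and taking $C_1$ large relative to $c_1$ (say $C_1\ge 20c_1$) absorbs $c_1^2/(4C_1^2)$ into the other half, leaving $\norm{w_{i+1}-w^*}^2 \le 4\sigma_i^2\,(1-c_1/(80C_1))$. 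Choosing $C_2$ large enough that $(1-1/C_2)^2\ge 1-c_1/(80C_1)$ — e.g.\ $C_2 = 160C_1/c_1$, using $(1-1/C_2)^2\ge 1-2/C_2$ — then gives $\norm{w_{i+1}-w^*}\le 2(1-1/C_2)\sigma_i$, i.e.\ $\sin(\theta_{i+1}/2)=\tfrac12\norm{w_{i+1}-w^*}\le (1-1/C_2)\sigma_i$, as wanted.

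I do not anticipate a genuine obstacle: the corollary is just \Cref{lm gradient descent} (its first case) re-specialized so that the correlation is measured against $w^*$ rather than against $u$, which is precisely the form in which \Cref{lm correct threshold} and \Cref{lm noisy threshold} deliver the gradient to \Cref{alg refine}. The one point requiring care is that the negative drift $-c_1\sigma_i^2/(5C_1)$ must dominate both error contributions, which pins down the ordering $c_2\ll c_1\ll C_1$ of the constants, with $C_2$ then determined by $C_1$ and $c_1$. As a consistency check, the hypotheses are only non-vacuous when $\sin(\theta_i/2)=\Theta(\sigma_i)$, since $c_1\sigma_i/10\le g_i\cdot w^* = b_i(g_i\cdot u)\le 2\sin(\theta_i/2)\,\norm{g_i}\le 2\sin(\theta_i/2)(c_1+c_2)$; this matches the fact that \Cref{alg refine} must fall back on the unconditional second case of \Cref{lm gradient descent} once $w_i$ is already extremely close to $w^*$.
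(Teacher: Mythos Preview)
Your proof is correct and follows essentially the same approach as the paper: both apply the master inequality of \Cref{lm gradient descent}, use the orthogonality $g_i\perp w_i$ to rewrite $b_i(g_i\cdot u)=g_i\cdot w^*$, bound $b_i\le 2\sigma_i$ and $\norm{w_i-w^*}\le 2\sigma_i$, plug in $\mu_i=\sigma_i/C_1$, and balance constants. Your bookkeeping is in fact cleaner than the paper's (whose displayed constants contain a few typos), and your closing consistency check that the hypothesis forces $\sin(\theta_i/2)=\Theta(\sigma_i)$ is a nice addition not present in the paper.
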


\begin{proof}[Proof of \Cref{col angle}]
    Since $\norm{w_{i}-w^*} =2 \sin\frac{\theta_i}{2}$, by \Cref{lm gradient descent}, we have 
    \begin{align*}
        (2 \sin\frac{\theta_{i+1}}{2})^2 & \le (2 \sin\frac{\theta_{i}}{2})^2 -5\mu_i\sigma_i + 2\mu_ic_2\sigma_i + \mu_i^2c_1^2 \\
        & \le 4\sigma_i^2-5\sigma_i^2c_1/C_1+ 2c_2\sigma^2/C_1 + c_1^2\sigma_i^2/C_1^2 \\
        & \le 4\sigma_i^2-5\sigma_i^2c_1/C_1+ 2\sigma^2/C^2_1 + c_1^2\sigma_i^2/C_1^2 \\
        & \le 4(1-5c_1/(4C_1)+1/(2C_1^2)+ c_1^2/(2C_1^2)) \sigma_i^2:=4(1-1/C_2)^2 \sigma_i^2,
    \end{align*}
    where the third and the fourth inequalities hold when $C_1$ is large enough.
\end{proof}

\subsection{Proof of \Cref{lm correct threshold}}\label{app correct threshold}

\begin{proof}[Proof of \Cref{lm correct threshold}]
We first prove \Cref{cond small bias}.   Since $T_i$ is a monotone decreasing function on $\Tilde{t}$, and $t'>t^*$, it remains to show that for every $\Tilde{t}$ such that $\abs{\Tilde{t}-t^*} \le \sigma_i$, $\abs{T_i} \le 5$. Notice that
    \begin{align}\label{eq threshold}
        \abs{T_i} = \abs{\frac{t^*-a_i\Tilde{t}}{\sigma_i \sqrt{a_i^2+b^2_i/\sigma_i^2}} }
        \le \abs{\frac{t^*-a_i\Tilde{t}}{\sigma_i }} \le \abs{\frac{\Tilde{t} -a_i\Tilde{t}}{\sigma_i }}+\abs{\frac{\Tilde{t}-t*}{\sigma_i }}  \le \frac{b_i^2 \Tilde{t}}{\sigma_i}+1 \le 5.
    \end{align}
By \Cref{fact halfspace transform}, we know that 
\begin{align*}
    \Bar{g_i} = \sqrt{\frac{2}{\pi}}\exp(-T_i^2/2)\frac{b_iu_i/\sigma_i}{\sqrt{a_i^2+b_i^2/\sigma_i^2}}.
\end{align*}
Since $\sqrt{a_i^2+b_i^2/\sigma_i^2} \le \sqrt{5}$ and $\abs{T_i} \le 5$, we immediately obatin \Cref{cond direction}.
Finally, we prove \Cref{cond large bias}. Using the monotone property of $T_i$, we prove the case where $\Tilde{t}<t^*-40\sigma_i$ and the case $\Tilde{t}>t^*+40\sigma_i$ can proved symmetrically. We have 
\begin{align*}
    T_i = \frac{t^*-a_i\Tilde{t}}{\sigma_i \sqrt{a_i^2+b^2_i/\sigma_i^2}} = \frac{t^*-\Tilde{t}}{\sigma_i \sqrt{a_i^2+b^2_i/\sigma_i^2}} + \frac{\Tilde{t}-a_i\Tilde{t}}{\sigma_i \sqrt{a_i^2+b^2_i/\sigma_i^2}} \ge \frac{40\sigma_i}{\sqrt{5}\sigma_i} -4 \ge 10,
\end{align*}
where the first inequality holds because of \Cref{eq threshold}.

\end{proof}

\subsection{Proof of \Cref{lm noise rate}}\label{app noise rate}

To prove \Cref{lm noise rate}, we first introduce the following definition called $(v,s,\sigma)$- rejection procedure.
\begin{definition}[$(v,s,\sigma)$- rejection procedure]\label{def reject}
 Let $v \in \R^d$ be a unit vector and $s,\sigma$ be real numbers such that $\sigma<1$. Given a point $x \in \R^d$, $(v,s,\sigma)$- rejection procedure accepts it with probability
 \begin{align*}
     \exp \left(-(\sigma^{-2}-1)(v\cdot x +s/(1-\sigma^2))^2/2 \right)
 \end{align*}
 and rejects it otherwise.
\end{definition}
The $(v,s,\sigma)$- rejection procedure satisfies the following property.

\begin{lemma}[Lemma C.7, Lemma C.8 in \cite{diakonikolas2018learning}]\label{lm reject}
If $x \sim N(0,I)$ is fed into the $(v,s,\sigma)$- rejection procedure, then it is accepted with probability $\sigma \exp(-s^2/(2(1-\sigma^2)))$. In particular, when $\sigma s \le 2$ and $\sigma \le 1/2$, the accepted probability is at least $\sigma\exp(-s^2/2-4)$.  Moreover, the distribution on $x$ conditioned on acceptance is that of $N(-sv,A_{v,\sigma})$, where $A_{v,\sigma} = I-(1-\sigma^2)vv^T$.   
\end{lemma}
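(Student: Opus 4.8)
The plan is to prove all three assertions by a single explicit one-dimensional Gaussian computation, exploiting the fact that the acceptance weight attached to a point $x$ depends on $x$ only through the scalar $g:=v\cdot x$, and that $g\sim N(0,1)$ when $x\sim N(0,I)$ because $v$ is a unit vector. First I would set $\lambda:=\sigma^{-2}-1>0$ and $c:=s/(1-\sigma^2)$, so that the acceptance probability equals $\E_{g\sim N(0,1)}\big[\exp(-\lambda(g+c)^2/2)\big]$, and evaluate this integral by completing the square. Using the identity $g^2+\lambda(g+c)^2=(1+\lambda)\big(g+\tfrac{\lambda c}{1+\lambda}\big)^2+\tfrac{\lambda c^2}{1+\lambda}$, the Gaussian integral collapses to $\tfrac{1}{\sqrt{1+\lambda}}\exp\!\big(-\tfrac{\lambda c^2}{2(1+\lambda)}\big)$. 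Substituting $1+\lambda=\sigma^{-2}$, $\tfrac{\lambda}{1+\lambda}=1-\sigma^2$, and $c^2=s^2/(1-\sigma^2)^2$ rewrites this as $\sigma\exp\!\big(-s^2/(2(1-\sigma^2))\big)$, which is the first claim.

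For the ``in particular'' bound, I would note that $\sigma\le 1/2$ gives $1-\sigma^2\ge 3/4$, hence $\tfrac{s^2}{1-\sigma^2}=s^2+\tfrac{\sigma^2 s^2}{1-\sigma^2}\le s^2+\tfrac{4}{3}(\sigma s)^2\le s^2+16/3<s^2+8$ whenever $\sigma s\le 2$; dividing by $2$ and exponentiating yields an acceptance probability of at least $\sigma\exp(-s^2/2-4)$.

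For the conditional law, I would invoke Bayes' rule: the density of $x$ conditioned on acceptance is proportional to $\exp(-\norm{x}^2/2)$ times the acceptance weight $\exp(-(\sigma^{-2}-1)(v\cdot x+s/(1-\sigma^2))^2/2)$. Decomposing $x=(v\cdot x)v+x^{\perp}$ with $x^{\perp}\perp v$ and using $\norm{x}^2=(v\cdot x)^2+\norm{x^{\perp}}^2$, the density factors as a product of an (unchanged) standard Gaussian on $v^{\perp}$, independent of $v\cdot x$, and a one-dimensional density in $g=v\cdot x$ proportional to $\exp(-g^2/2-\lambda(g+c)^2/2)$. The same completing-the-square identity identifies this as $N\!\big(-\tfrac{\lambda c}{1+\lambda},\tfrac{1}{1+\lambda}\big)=N(-s,\sigma^2)$, since $\tfrac{\lambda c}{1+\lambda}=(1-\sigma^2)\cdot\tfrac{s}{1-\sigma^2}=s$. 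Consequently, conditioned on acceptance, $x$ has mean $-sv$ and covariance $\sigma^2 vv^{\top}+(I-vv^{\top})=I-(1-\sigma^2)vv^{\top}=A_{v,\sigma}$, as stated.

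There is no genuine obstacle here, as this is a cited calculation from \cite{diakonikolas2018learning}; the only care required is bookkeeping, namely correctly tracking the normalization constant through the Gaussian integral, keeping the coordinates along $v$ and along $v^{\perp}$ cleanly separated in the factorization of the conditional density, and verifying that the slack in the second step (the additive constant $4$ in the exponent) comfortably absorbs the range $\sigma s\le 2$, $\sigma\le 1/2$.
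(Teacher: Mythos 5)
Your proof is correct. The paper itself does not prove this lemma; it cites it directly from \cite{diakonikolas2018learning} (Lemmas C.7 and C.8), so there is no in-paper argument to compare against. Your calculation is the natural one: reduce to the one-dimensional marginal $g = v\cdot x \sim N(0,1)$, complete the square to evaluate the acceptance integral and get $\sigma\exp(-s^2/(2(1-\sigma^2)))$, check the numerics for the ``in particular'' bound (with $\sigma\le 1/2$ and $\sigma s\le 2$, the slack $16/3 < 8$ indeed gives the constant $4$ in the exponent), and apply Bayes' rule together with the product factorization over $\spa\{v\}$ and $v^\perp$ to identify the conditional law as $N(-sv, A_{v,\sigma})$; all three steps check out.
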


\begin{proof}[Proof of \Cref{lm noise rate}]
    Let $\Tilde{z}:=A_i^{1/2}z-\Tilde{t}w_i$. By \Cref{fact halfspace transform}, we know that $\ell_i(z) = h^*(\Tilde{z}), \forall z \in \R^d$. By \Cref{lm noise rate}, we know that if $z \sim N(0,I)$, then $\Tilde{z} \sim N(-\Tilde{t}w_i,A_i)$, which can be seen by feeding a Gaussian random vector into the $(w_i,\Tilde{t},\sigma_i)-$rejection procedure conditioned on acceptance. Since $\err(h^*)= \opt \le \eps$ and the accepted rate is at least $\sigma\exp(-s^2/2-4)$, we know from \Cref{lm noise rate} that
    \begin{align*}
        \Pr_{z \sim N(0,I)} (\ell_i(z) \neq f_i(z)) = \Pr_{z \sim N(0,I)} (h^*(\Tilde{z}) \neq f_i(z)) \le \eps \exp(\Tilde{t}^2/2+4)/\sigma_i.
    \end{align*}
In particular, if $\sigma_i \ge C\exp((t')^2/2) \eps$, we have 
\begin{align*}
    \eps\exp(\Tilde{t}^2/2+4)/\sigma_i \le \left(\eps\exp(\Tilde{t}^2/2+4)\right)/\left(C\eps\exp((t')^2/2)\right) \le e^4/C:= c \le e^{-40}.  
\end{align*}

\end{proof}

\subsection{Proof of \Cref{lm noisy threshold}}\label{app noisy threshold}

Before presenting the proof, we state the following two facts that will be used in our proof.

\begin{fact}[Komatsu's Inequality]\label{fact bias}
    For any $t \in \R$ the bias $p$ of a halfspace $h(x) = \sgn(w^* \cdot x+ t)$ can be bounded as 
    \begin{align*}
        \sqrt{\frac{2}{\pi}} \frac{\exp(-t^2/2)}{t+\sqrt{t^2+4}} \le p \le \sqrt{\frac{2}{\pi}} \frac{\exp(-t^2/2)}{t+\sqrt{t^2+2}}. 
    \end{align*}
\end{fact}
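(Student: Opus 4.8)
The plan is to reduce the Fact to a sharp two-sided estimate on the standard Gaussian Mills ratio and then establish that estimate by a differential-inequality (comparison) argument. Since $w^\ast$ is a unit vector, $w^\ast\cdot x\sim N(0,1)$, so the bias is the Gaussian tail $p=\Pr_{g\sim N(0,1)}(g>t)=\tfrac{1}{\sqrt{2\pi}}\int_t^\infty e^{-u^2/2}\,du$. After multiplying through by $\sqrt{2\pi}\,e^{t^2/2}$ and using $\sqrt{2\pi}\cdot\sqrt{2/\pi}=2$, the claimed inequalities become
\[
\frac{2}{t+\sqrt{t^2+4}}\;\le\; R(t)\;\le\;\frac{2}{t+\sqrt{t^2+2}},\qquad R(t):=e^{t^2/2}\!\int_t^\infty e^{-u^2/2}\,du,
\]
which I would prove for $t\ge 0$ (the only regime in which the Fact is used). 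I would first record the rationalizations $U(t):=\tfrac{2}{t+\sqrt{t^2+2}}=\sqrt{t^2+2}-t$ and $L(t):=\tfrac{2}{t+\sqrt{t^2+4}}=\tfrac12(\sqrt{t^2+4}-t)$, together with the identities $U^2+2tU=2$ and $L^2+tL=1$, which keep the later algebra short.

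The engine of the proof is the Riccati-type identity $R'(t)=tR(t)-1$, obtained by differentiating under the integral sign via $\tfrac{d}{dt}\int_t^\infty e^{-u^2/2}\,du=-e^{-t^2/2}$, together with the decay $R(t)\to 0$ as $t\to\infty$ (immediate from $R(t)=\int_0^\infty e^{-ts-s^2/2}\,ds\le\int_0^\infty e^{-ts}\,ds=1/t$, obtained by substituting $u=t+s$). The key step is to check that $U$ and $L$ satisfy the one-sided differential inequalities $U'(t)\le tU(t)-1$ and $L'(t)\ge tL(t)-1$ for $t\ge 0$. Using $U'=-U/\sqrt{t^2+2}$ and $tU-1=t\sqrt{t^2+2}-t^2-1$, the first reduces after clearing denominators to $t\sqrt{t^2+2}\le t^2+1$, which follows from $t^2(t^2+2)\le(t^2+1)^2$; symmetrically, using $L'=-L/\sqrt{t^2+4}$ and $tL-1=-L^2$, the second reduces to $t\sqrt{t^2+4}\le t^2+2$, i.e.\ $t^2(t^2+4)\le(t^2+2)^2$.

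Finally I would close the argument with a comparison step. Setting $\psi:=R-U$, the two displayed facts give $\psi'(t)=R'(t)-U'(t)\ge(tR-1)-(tU-1)=t\psi(t)$, so $\big(e^{-t^2/2}\psi(t)\big)'\ge 0$; since $e^{-t^2/2}\psi(t)\to 0$ as $t\to\infty$ and is nondecreasing, it is $\le 0$ for all $t\ge 0$, giving $R\le U$. The same argument with $L$ in place of $U$ makes $e^{-t^2/2}(R-L)$ nonincreasing with limit $0$, hence nonnegative, so $R\ge L$. Dividing back by $\sqrt{2\pi}\,e^{t^2/2}$ recovers the Fact. I expect the only point requiring a little care to be the boundary/monotonicity bookkeeping at $t\to\infty$ — verifying that $R$, $U$, $L$ all decay to $0$ so that a monotone quantity with limit $0$ has the claimed sign — while the polynomial inequalities in the middle step are entirely elementary.
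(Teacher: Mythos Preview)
The paper does not prove this Fact at all; it merely names it (``Komatsu's Inequality'') and invokes it as a known result. Your proof is correct and is, in fact, essentially the classical proof of Komatsu's bound: reduce to the Mills ratio $R(t)=e^{t^2/2}\int_t^\infty e^{-u^2/2}\,du$, use the Riccati identity $R'=tR-1$, verify the one-sided differential inequalities for the algebraic comparison functions $U(t)=\sqrt{t^2+2}-t$ and $L(t)=\tfrac12(\sqrt{t^2+4}-t)$, and finish with a monotonicity argument on $e^{-t^2/2}(R-U)$ and $e^{-t^2/2}(R-L)$ together with the vanishing at infinity. Your restriction to $t\ge 0$ is appropriate, since the paper only ever applies the Fact with nonnegative thresholds, and the elementary polynomial checks $t^2(t^2+2)\le(t^2+1)^2$ and $t^2(t^2+4)\le(t^2+2)^2$ that you isolate are exactly where the constants $2$ and $4$ come from. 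There is nothing to compare your argument to in the paper itself.
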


\begin{fact}[Lemma B.4 in \cite{diakonikolas2022learning}]\label{fact gaussian decoupling}
Let $D$ be a distribution on $\R^d \times \{\pm 1\}$ with standard normal $x$-margin and let $w,u$ be two orthogonal unit vectors. Let $B$ be any interval over $\R$ and let $S(x,y)$ be any event over $\R^d \times \{\pm 1\}$, such that $S(x,y) \subseteq \{w\cdot x \in B\}$ then it holds
\begin{align*}
    \E_{D}\left( \abs{u\cdot x} \mathbf{1}\{S(x,y)\} \right) \le 2\sqrt{e} \Pr(S(x,y)) \sqrt{\log(\frac{\Pr(w\cdot x \in B)}{\Pr(S(x,y))})}.
\end{align*}
\end{fact}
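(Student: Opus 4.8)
The plan is to reduce the statement to a one-dimensional truncation estimate, exploiting that Gaussianity makes the two relevant linear forms independent. Since $w$ and $u$ are orthogonal unit vectors and the $x$-marginal of $D$ is $N(0,I)$, the scalars $W := w\cdot x$ and $U := u\cdot x$ are independent standard Gaussians. Write $q := \Pr_D(S(x,y))$ and $P := \Pr(W\in B)$; the containment $S(x,y)\subseteq\{W\in B\}$ forces $q\le P$, so $\log(P/q)\ge 0$ and the right-hand side is well defined. The only features of the (otherwise arbitrary, possibly $y$-dependent) event $S$ that the argument uses are the value $\Pr(S)=q$ and the inclusion $S\subseteq\{W\in B\}$.

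First I would fix a truncation level $\tau>0$ and split
\[
\E_D\big[\,|U|\,\mathbf{1}\{S\}\,\big] \;=\; \E_D\big[\,|U|\,\mathbf{1}\{S\}\,\mathbf{1}\{|U|\le\tau\}\,\big] \;+\; \E_D\big[\,|U|\,\mathbf{1}\{S\}\,\mathbf{1}\{|U|>\tau\}\,\big].
\]
The first term is at most $\tau\,\Pr(S)=\tau q$. For the second term, bound $\mathbf{1}\{S\}\le\mathbf{1}\{W\in B\}$ and then use the independence of $U$ and $W$:
\[
\E_D\big[\,|U|\,\mathbf{1}\{S\}\,\mathbf{1}\{|U|>\tau\}\,\big] \;\le\; \Pr(W\in B)\,\E\big[\,|U|\,\mathbf{1}\{|U|>\tau\}\,\big] \;=\; P\sqrt{\tfrac{2}{\pi}}\,e^{-\tau^2/2},
\]
where the last equality is the elementary Gaussian identity $\E[\,|U|\,\mathbf{1}\{|U|>\tau\}\,]=\sqrt{2/\pi}\,e^{-\tau^2/2}$, obtained by integrating $u\,\varphi(u)$. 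Hence $\E_D[\,|U|\,\mathbf{1}\{S\}\,]\le \tau q + P\sqrt{2/\pi}\,e^{-\tau^2/2}$ for every $\tau>0$.

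Finally I would optimize the free parameter: choosing $\tau=\sqrt{2\log(P/q)}$ makes the tail term equal to $\sqrt{2/\pi}\,q$, so the bound becomes $q\big(\sqrt{2\log(P/q)}+\sqrt{2/\pi}\big)$. Since in every application $q$ is polynomially small while $P$ is bounded below, $\log(P/q)$ exceeds a fixed absolute constant, which lets the additive $\sqrt{2/\pi}$ be absorbed into the main term and yields $q\big(\sqrt{2\log(P/q)}+\sqrt{2/\pi}\big)\le 2\sqrt{e}\,q\,\sqrt{\log(P/q)}$, as claimed (the complementary regime $\log(P/q)=O(1)$ is handled by the crude bound $\E_D[\,|U|\,\mathbf{1}\{S\}\,]\le\E[\,|U|\,\mathbf{1}\{W\in B\}\,]=P\sqrt{2/\pi}$ together with $P=O(q)$). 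There is no serious obstacle here; the only point that needs care is the choice of the truncation radius $\tau$—the balance between the linear term $\tau q$ and the Gaussian tail $e^{-\tau^2/2}$ is exactly what produces the $\sqrt{\log(P/q)}$ factor—together with tracking the numerical constant $2\sqrt{e}$ and the degenerate case $q\approx P$.
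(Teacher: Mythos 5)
The paper does not prove this fact; it is quoted as Lemma B.4 of the cited reference, so there is no in-paper argument to compare against. Your truncation approach is the standard and natural proof: $W = w\cdot x$ and $U = u\cdot x$ are independent standard Gaussians by orthogonality, the split at level $\tau$ gives $\tau\,\Pr(S)$ for the bounded part, and independence plus $\E[\,|U|\,\mathbf{1}\{|U|>\tau\}] = \sqrt{2/\pi}\,e^{-\tau^2/2}$ handles the tail; taking $\tau = \sqrt{2\log(P/q)}$ yields the clean bound $q\bigl(\sqrt{2\log(P/q)} + \sqrt{2/\pi}\bigr)$. All of this is correct.

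The one place the argument does not close is the degenerate regime you flag at the end. The absorption of $\sqrt{2/\pi}$ into the main term needs $\log(P/q) \ge (2/\pi)/(2\sqrt{e}-\sqrt{2})^2 \approx 0.18$, and your fallback bound $\E_D[\,|U|\,\mathbf{1}\{S\}] \le \sqrt{2/\pi}\,P$ does \emph{not} rescue the stated inequality when $P/q$ is very close to $1$: it requires $r/\sqrt{\log r} \le \sqrt{2\pi e}$ with $r = P/q$, which fails for $r \in (1, 1.07)$, say. In fact the statement as written is literally false at the boundary: taking $S = \{W\in B\}$ forces $q = P$, so the left-hand side is $\sqrt{2/\pi}\,P > 0$ while the right-hand side is $2\sqrt{e}\,P\sqrt{\log 1} = 0$. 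This is almost surely a harmless slip in the quoted form of the lemma (the intended statement presumably has $\sqrt{\log\bigl(e\,\Pr(w\cdot x\in B)/\Pr(S)\bigr)}$ or an implicit assumption that $\Pr(S)$ is bounded away from $\Pr(w\cdot x\in B)$); in every use in the paper $\Pr(S)$ is polynomially small in $\eps$ while $\Pr(w\cdot x\in B)$ is a constant (often $1$), so $\log(P/q)$ is large, your optimization step applies directly, and the constant $2\sqrt{e}$ is recovered with room to spare. You should simply delete the claim that the crude bound handles the $\log(P/q)=O(1)$ case, since as written it does not; replace it with the observation that the lemma is only invoked when $\Pr(S)\ll\Pr(w\cdot x\in B)$.
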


\begin{proof}[Proof of \Cref{lm noisy threshold}]
    We start by proving the first part of \Cref{lm noisy threshold}. By \Cref{lm noise rate}, we know that $\eta_i:=\Pr_{z \sim N(0,I)} (\ell_i(z) \neq f_i(z)) \le e^{-40}$. This implies that $\abs{\Pr_{z \sim N(0,I)}(\ell_i(z) = -1) - p_i} \le e^{-40}$.
    We first show that when $p_i$ is in a reasonable range, $\abs{T_i}<6$.
    Assuming by contradiction that $\abs{T_i} \ge 6$, then by \Cref{fact bias}, the bias of $\ell_i(z)$ must be at most $\exp(-T_i^2/2)/(2T_i) \le e^{-20}$, which implies that it cannot be the case where $p_i \in (e^{-18},1-e^{-18})$.
Similarly, if $\abs{T_i}<5$, then by \Cref{fact bias}, the bias of $\ell_i(z)$ must be at least $\exp(-T_i^2/2)/20 \ge e^{-15.5}$. As the noise level $\eta_i \le e^{-40}$, we have $p_i \in (e^{-18},1-e^{-18})$.

Next, we prove the second part of \Cref{lm noisy threshold}. We start by bounding the correlation between $g_i$ and $u_i$.  We have 
\begin{align*}
    g_i \cdot u_i & = \E_{z \in N(0,I)}\proj_{w_i^\perp} z(\ell_i(z)+f_i(z) - \ell_i(z)) \cdot u_i \\
    &=\bar{g_i} \cdot u_i- \E_{z \in N(0,I)}\proj_{w_i^\perp} z(\ell_i(z)-f_i(z)) \cdot u_i \\
    & \ge \bar{g_i} \cdot u_i - \E_{z \in N(0,I)} |u_i \cdot z| \mathbf{1}\{\ell_i(z) \neq f_i(z)\} \\ 
    & \ge \bar{g_i} \cdot u_i - 2\sqrt{e}\eta_i\sqrt{\log(1/\eta_i)},
\end{align*}
where the third and the last inequalities hold because $u_i \perp w_i$ and \Cref{fact gaussian decoupling}.

We next bound the norm of $g_i$. Since both $\bar{g_i}$ and $g_i$ are orthogonal to $w_i$. It is sufficient to show that for every unit vector $u \perp w_i$, $\abs{g_i \cdot u} \le \abs{\Bar{g_i} \cdot u} + 4\sqrt{e}\eta_i\sqrt{\log(1/\eta_i)}$. We have 
\begin{align*}
   \abs{ g_i \cdot u} & = \abs{\E_{z \in N(0,I)}\proj_{w_i^\perp} z(\ell_i(z)+f_i(z) - \ell_i(z)) \cdot u }\\
    &=\abs{\bar{g_i} \cdot u_i- \E_{z \in N(0,I)}\proj_{w_i^\perp} z(\ell_i(z)-f_i(z)) \cdot u_i} \\
    & \le \abs{ \bar{g_i} \cdot u_i} + \E_{z \in N(0,I)} |u_i \cdot z| \mathbf{1}\{\ell_i(z) \neq f_i(z)\} \\ 
    & \le \abs{ \bar{g_i} \cdot u_i} + 2\sqrt{e}\eta_i\sqrt{\log(1/\eta_i)}.
\end{align*}
\end{proof}

\subsection{Proof of \Cref{th refine}}\label{app proof refine}
In this section, we present the proof of \Cref{th refine}. Before presenting the proof, we present the following fact that will be a crucial part of our proof.

\begin{fact}[Lemma 4.2 in \cite{diakonikolas2018learning}]\label{fact error}
    Under the standard normal distribution for every pair of unit vector $w,w^*$ and real number $t$, 
    \begin{align*}
        \Pr(\sgn(w^*\cdot x +t) \neq \sgn(w\cdot x+ t)) \le \frac{\sin(\theta(w,w^*))}{2} \exp(-t^2/2). 
    \end{align*}
\end{fact}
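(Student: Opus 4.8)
The plan is to reduce to a two-dimensional computation and then bound the Gaussian mass of the disagreement region by slicing it into circles centered at the origin.

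First I would observe that the event $\sgn(w^*\cdot x+t)\ne\sgn(w\cdot x+t)$ depends on $x$ only through its orthogonal projection $x_V$ onto the (at most two-dimensional) subspace $V=\spa\{w,w^*\}$, and that $x_V$ is a standard Gaussian on $V$. If $w=w^*$ the left-hand side is $0$ and there is nothing to prove, so assume $\theta:=\theta(w,w^*)\in(0,\pi]$ and identify $V$ with $\R^2$ by an orthonormal basis in which $w^*=(1,0)$ and $w=(\cos\theta,\sin\theta)$. Writing $x_V$ in polar coordinates as the point at radius $r$ and angle $\phi$, the radius $r$ and the angle $\phi$ are independent, $\phi$ is uniform on $[0,2\pi)$, and $\Pr[r\ge s]=e^{-s^2/2}$ for all $s\ge 0$ (since $r^2\sim\chi^2_2$).

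Next, fixing $r$, I would analyze the disagreement on the circle of radius $r$. The positive region for $w^*$ is the arc $A_1(r)=\{\phi:\, r\cos\phi\ge -t\}$ (empty, a single proper arc, or the whole circle, depending on $t/r$). Since a planar rotation by $\theta$ sends $w^*$ to $w$, one has $w\cdot x_V = w^*\cdot(R_{-\theta}x_V)$, so the positive region for $w$ is exactly the rotate $A_2(r)=A_1(r)+\theta$, and the disagreement set on this circle is $A_1(r)\,\triangle\,(A_1(r)+\theta)$. Invoking the elementary fact that for any arc $I$ of the circle $\R/2\pi\Z$ and any $\theta\in[0,2\pi)$ one has $|I\,\triangle\,(I+\theta)|=2\min\{\theta,\,|I|,\,2\pi-|I|,\,2\pi-\theta\}$, the angular measure of the disagreement set is at most $2\theta$; moreover when $r<|t|$ the arc $A_1(r)$ is empty or the whole circle, hence equal to $A_2(r)$, so the disagreement is empty and occurs only for $r\ge|t|$. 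Integrating over $\phi$ and then over $r$ gives
\[
\Pr\big[\sgn(w^*\cdot x+t)\ne\sgn(w\cdot x+t)\big]\;\le\;\frac{2\theta}{2\pi}\,\Pr[r\ge|t|]\;=\;\frac{\theta}{\pi}\,e^{-t^2/2}.
\]
Finally, $\sin\theta/\theta$ is nonincreasing on $(0,\pi]$, so for $\theta\le\pi/2$ we get $\sin\theta/\theta\ge\sin(\pi/2)/(\pi/2)=2/\pi$, i.e.\ $\theta/\pi\le(\sin\theta)/2$, which yields the claimed bound $\tfrac{\sin\theta}{2}e^{-t^2/2}$ in the regime that the lemma is actually used (the refinement procedure always maintains $w_i\cdot w^*>0$, hence $\theta_i<\pi/2$).

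The only non-routine point — and the step I would spend the most care on — is the circle estimate $|I\,\triangle\,(I+\theta)|\le 2\theta$, together with the bookkeeping that the disagreement region lives outside the disk of radius $|t|$ in $V$ (this is what upgrades a constant into the Gaussian tail $e^{-t^2/2}$). The two-dimensional reduction, the polar slicing, and the inequality $\theta/\pi\le(\sin\theta)/2$ are all routine, and the sign of $t$ plays no role since it enters only through $|t|$.
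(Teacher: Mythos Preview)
The paper does not give its own proof of this statement: it is quoted verbatim as a fact from \cite{diakonikolas2018learning}. So there is nothing in the paper to compare against, and your task reduces to whether your argument stands on its own.

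It does. The two-dimensional reduction is standard, and once you are in the plane the polar slicing is exactly the right move: on each circle of radius $r$ the two positive regions are a single arc and its rotate by $\theta$, and the bound $|I\triangle(I+\theta)|\le 2\theta$ follows cleanly from the fact that $\mathbf{1}_I(\phi)\neq\mathbf{1}_I(\phi-\theta)$ forces a jump of $\mathbf{1}_I$ inside $(\phi-\theta,\phi]$, and there are only two jumps. The observation that the disagreement vanishes for $r<|t|$ is what converts the angular factor into the Gaussian tail $e^{-t^2/2}$, and the resulting bound $\tfrac{\theta}{\pi}e^{-t^2/2}$ is correct for all $\theta\in[0,\pi]$.

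Your final caveat is also correct and worth stating plainly: the inequality $\theta/\pi\le(\sin\theta)/2$ needs $\theta\le\pi/2$, and in fact the \emph{stated} bound $\tfrac{\sin\theta}{2}e^{-t^2/2}$ is false for $\theta$ near $\pi$ (take $w=-w^*$ and $t\ne 0$: the disagreement probability is positive while $\sin\theta\to 0$). So the fact as written implicitly carries the restriction $\theta\le\pi/2$, which---as you note---is the only regime in which the paper invokes it (in the proof of \Cref{th refine}, where $\sin(\theta_T/2)\le\sigma_T$ is tiny).
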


\begin{proof}[Proof of \Cref{th refine}]

Denote by $\theta_i:=\theta(w_i,w^*)$. We will first show by induction that with high probability in the $i$-th round of \Cref{alg refine}, $\sin(\theta_i/2) \le \sigma_i$. Assuming this is correct, since $\sigma_T=C \eps \exp{(t')^2/2}$ for some large constant $C$. We will have 
\begin{align}\label{eq error}
    \Pr(\sgn(w^*\cdot x +t^*) \neq \sgn(w\cdot x+ t^*)) \le C\eps \exp( \frac{(t'+t^*)(t'-t^*)}{2}) \le C\eps \exp(\frac{1}{\sqrt{\log(1/\eps)}}) = O(\eps).   
\end{align}
Since $\opt \le \eps$, this implies that by providing a good enough estimation of $t^*$, we found a hypothesis with error at most $O(\eps)$. Now we show that this is actually true. For $i=0$,  $\sin(\theta_0/2) \le \sigma_0$ holds by our assumption.

Now, we assume this is correct for the $i$-round and we show this holds with high probability for the $i+1$-th round. We will show that with high probability the gradient $\hat{g_i}$ we use in the update $w_{i+1} = \proj_{w_i^\perp}(w_i+ \mu_i \hat{g_i})$ satisfies the condition of \Cref{lm gradient descent}.

Recall that we have the following notations. $w^* = a_i w_i +b_iu_i$.
$T_i: = \frac{t^*-a_i\Tilde{t}}{\sigma_i \sqrt{a_i^2+b^2_i/\sigma_i^2}}$, $\ell_i(z) = \sgn((a_iw_i+b_iu_i/\sigma_i)z+(t^*-a\Tilde{t})/\sigma_i)$, $\bar{g_i} = \E_{z \in N(0,I)}\proj_{w_i^\perp} z\ell_i(z)$ and $g_i=\E_{z \in N(0,I)}\proj_{w_i^\perp} zf_i(z)$, where $f_i(z) = y(A_i^{1/2}z-\Tilde{t}w_i)$. And  $\eta_i:=\Pr_{z \sim N(0,I)} (\ell_i(z) \neq f_i(z))<e^{-40}$ by \Cref{lm noise rate}.

We first show that with high probability, \Cref{alg refine} must be able to select a correct threshold $\Tilde{t} \le t'$ such that $\abs{T_i} \le 6$. Denote by $p_i$ the probability that $f_i(z) = -1$. We notice that for each fixed $\Tilde{t}$ by randomly querying $O(\log(1/\delta))$ $f_i(z)$, we can with high probability check if $p_i \in (e^{-17},1-e^{-17})$ or not. This can be done using the same method we used in \Cref{app bias estimation}.

Since $b_i/2 = \sin\theta_i/2 \le \sin(\theta_i/2) \le \sigma_i$, we know from \Cref{lm noisy threshold} that as long as we find some $\Tilde{t}$ such that $p_i \in (e^{-17},1-e^{-17})$, we have have $\abs{T_i} \le 6$. By \Cref{lm correct threshold} and \Cref{lm noisy threshold}, we know that there exists an interval $I_i \subseteq [0,t']$ of length at least $\sigma_i>\eps$ such that for every $\Tilde{t} \in I_i$, $\abs{T_i}<5$ and thus $p_i \in (e^{-16},1-e^{-16})$. Thus, by performing a binary search at most $O(\log(1/\eps))$ times, with high probability, we are able to find such a $\Tilde{t}$ such that $\abs{T_i}<6$.
Given that we find such a correct $\Tilde{t}$, we will consider two cases. 

First, we assume that $3\sigma_i/4 \le \sin(\theta_i/2) \le \sigma_i$. We will show that with high probability $g_i$ and its empirical estimation $\hat{g_i}$ satisfy the condition in the statement of \Cref{col angle} and thus prove $\sin(\theta_{i+1}/2) \le \sigma_{i+1}$. Since $\proj_{w_i^\perp} zf_i(z)$ is $1$-subgaussian random vector, by Hoeffding's inequality, we know that with $\Tilde{O}(d)$ samples of $z$, with high probability we have $\norm{g_i-\hat{g_i}} \le c_2 \le e^{-40}$.

By \Cref{lm noisy threshold} and \Cref{lm correct threshold}, we have 
\begin{align*}
    g_i \cdot u_i  & =  \Bar{g_i}\cdot u_i - 2\sqrt{e}\eta_i\sqrt{\log(1/\eta_i)} 
                    \ge \frac{b_i\norm{\Bar{g_i}}}{\sigma_i}e^{-19} -  100e^{-40} 
                    \ge  e^{-19} -100 e^{-40} \ge e^{-20} :=c_1. \\
    \norm{\hat{g_i}} & \le \norm{g_i} + \norm{g_i-\hat{g_i}} \le \norm{\Bar{g_i}} + 2\sqrt{e}\eta_i\sqrt{\log(1/\eta_i)}) + e^{-40} \le 3e^{-19} \le 10 c_1.
\end{align*}
Thus, by \Cref{col angle}, we can conclude that $\sin(\theta_{i+1}/2) \le (1-1/C_2) \sigma_i = \sigma_{i+1}$, for a large constant $C_2$.

Next, we consider the case where $\sin(\theta_i/2)<3\sigma_i/4$. In this case, as we have shown that $\norm{\hat{g_i}}$ is bounded by some universal constant, the condition of \Cref{lm gradient descent} is fulfilled automatically and thus $\sin(\theta_{i+1}/2) \le (1-1/C_2) \sigma_i = \sigma_{i+1}$, for a large constant $C_2$.


By induction, with a high probability for each $i$, we have $\sin(\theta_{i}/2) \le \sigma_{i}$ and thus $w_T$ is a good approximation of $w^*$. It remains to show that $\hat{t}$ is also a good approximation of $t^*$. Recall that $\hat{t}=\Tilde{t}<t'$ such that $\abs{T_T}<6$. \Cref{lm correct threshold} implies that $\abs{\hat{t}-t^*} \le 40\sigma_T=40C\eps \exp{(t')^2/2}$. Thus,
\begin{align*}
    \Pr_{x \sim N(0,I)} \left(\sgn(w_T\cdot x + t^*) \neq \sgn(w_T\cdot x + \hat{t})\right) & \le (2\pi)^{-1} \abs{t^*-\hat{t}} \exp(-\frac{(t^*-\abs{t^*-\hat{t}})^2}{2}) \\
    & \le (2\pi)^{-1} 40 C\eps \exp (\frac{(t')^2- (t^*-\abs{t^*-\hat{t}})^2}{2}) \\
    & \le (2\pi)^{-1} 40 C\eps \exp (2t'(t'-t^*+40\sigma_T)) \\
    & \le (2\pi)^{-1} 40C \eps \exp (2t'(40\sigma_T+\frac{1}{\log(1/\eps)})) \\
    & =O(\eps \exp(80t'\sigma_T)).
 \end{align*}
Since $\sigma_Tt'=O(\eps t'\exp((t')^2/2))$ and $t'\exp((t')^2/2) \le 1/(C\eps)$, we can conclude that $$\Pr_{x \sim N(0,I)} \left(\sgn(w_T\cdot x + t^*) \neq \sgn(w_T\cdot x + \hat{t})\right) \le O(\eps) \;.$$ Thus, with high probability $\err(\sgn(w_T\cdot x+\hat{t})) \le O(\eps)$.

Finally, we count the number of queries used by \Cref{alg refine}. In each round of the algorithm, we perform $O(\log(1/\eps))$ binary searches to find the correct parameter $\Tilde{t}$, each of which takes us only $\Tilde{O}(1)$ queries. We also make $\Tilde{O}(d)$ queries to construct $\hat{g_i}$ in each round of the algorithm. Thus, each round of \Cref{alg refine} takes $\Tilde{O}(d+\log(1/\eps))$ queries. Since there are at most $O(\log(1/\eps))$ rounds, the query complexity of \Cref{alg refine} is $\Tilde{O}(d\cdot\polylog(1/\eps))$.

\end{proof}

\section{Omitted Proofs from \Cref{sec initialization}}

\subsection{Proof of \Cref{lm noisy smoothed label}}\label{app noisy smooth}

Denote by $D^-$ the conditional distribution of $x\sim N(0,I)$ given $y(x)=-1$. Recall that 
\begin{align*}
    \Pr_{x \sim N(0,I)}(y(x)=-1) \ge \Pr_{x \sim N(0,I)} (h^*(x) =-1) -\eps \ge (1-1/C)p.
\end{align*}
We will first show that $E_{x \sim D^-} \eta(x) \le O(\eps/p)$. Denote by $Z:=\sqrt{1-\rho^2} x + \rho z$, where $x \sim D^-,z \sim N(0,I)$, then
\begin{align*}
    \E_{x \sim D^-} \eta(x) = \Pr_{x \sim D^-,z \sim N(0,I)} \mathbf{1}\{h^*(\sqrt{1-\rho^2} x + \rho z) \neq y(\sqrt{1-\rho^2} x + \rho z)\} =\Pr_{Z} \mathbf{1}\{h^*(Z) \neq y(Z)\}.
\end{align*}
Since $z$ and $x$ are independent, we notice that $Z$ can be simulated via the following rejection sampling process. We draw $x \sim N(0,I)$ and $Z\sim N(0,I)$ to construct $Z= \sqrt{1-\rho^2} x + \rho z$ and accepted $Z$ when $y(x) = -1$. Since $\sqrt{1-\rho^2}N(0,I)+\rho N(0,I) = N(0,I)$, $Z$ can be seen as a rejection sampling process with an accepted rate at least $(1-1/C)p$. Since the noise rate $\opt \le \eps$, we know that 
\begin{align*}
    \E_{x \sim D^-} \eta(x) = \Pr_{Z} \mathbf{1}\{h^*(Z) \neq y(Z)\} \le (1-1/C)^{-1}\eps/p.
\end{align*}
By Markov's inequality, we know that with probability at least $3/4$, $\eta(x) \le 5\eps/p$, with $x \sim D^-$.

Next, we show that with a constant probability a negative example $x$ must be close to the decision boundary of $h^*$.
We have \begin{align*}
& \Pr_{x \sim D^-}\left( w^* \cdot x < -t^*- \frac{1}{t^*} \right)\\
& = \Pr_{x \sim D^-}(h^*(x)=-1) \Pr_{x \sim D^- \mid \{h^*(x) =-1\}}\left( w^* \cdot x < -t^*- \frac{1}{t^*}\right)\\ &+ \Pr_{x \sim D^-}(h^*(x)=+1)\Pr_{x \sim D^- \mid \{h^*(x) =+1\}}\left( w^* \cdot x < -t^*- \frac{1}{t^*} \right) \\
& \le  \Pr_{x \sim D^- \mid \{h^*(x) =-1\}}\left( w^* \cdot x < -t^*- \frac{1}{t^*}\right) + 1/C 
 \le \Pr_{x \sim N(0,I) \mid \{h^*(x) =-1\}}\left( w^* \cdot x < -t^*- \frac{1}{t^*}\right) + 2/C \\
& = \int_{t^*+1/t^*}^\infty \exp(-s^2/2)ds / \int_{t^*}^\infty \exp(-s^2/2)ds +2/C  \le \exp(-\frac{(t^*+\frac{1}{t^*})^2-(t^*)^2}{2})+2/C \\
& = \exp(-\frac{(2t^*+1/t^*)/t^*)}{2})+2/C \le e^{-1}+2/C,
\end{align*}
where in the third inequality, we use \Cref{fact bias}.

Thus, by union bound, with probability at least $1/2$, it simultaneously holds that $\eta(x) \le 5\eps/p$ and $w^*\cdot x \le -t^*-1/t^*$.


\subsection{Proof of \Cref{th initialization 1}}\label{app initialization 1}

We consider two cases. First, if $t'<1$, then each $x_0^{(i)}=z_i$ is drawn from the standard Gaussian. We have 
\begin{align*}
    \norm{u_0 - \E_{z\sim N(0,I)} zh^*(z)} & = \norm{u_0 - \E_{z\sim N(0,I)} zy(z) + \E_{z\sim N(0,I)} zy(z)-\E_{z\sim N(0,I)} zh^*(z)}\\ 
    & \le \norm{u_0 - \E_{z\sim N(0,I)} zy(z)} + \norm{\E_{z\sim N(0,I)} zy(z)-\E_{z\sim N(0,I)} zh^*(z)} \\
    & \le \norm{u_0 - \E_{z\sim N(0,I)} zy(z)} + \sup_{u \in S^{d-1}} \E_{z \sim N(0,I)}\abs{u\cdot z}\mathbf{1}(y(z) \neq h^*(z)) \\
    & \le \norm{u_0 - \E_{z\sim N(0,I)} zy(z)} + 2\sqrt{e}\eps\sqrt{\log(1/\eps)}, 
\end{align*}
where the last inequality holds because of \Cref{fact gaussian decoupling}.
Since each $z_iy(z_i)$ is a standard Gaussian, by Hoeffding's inequality, we have 
\begin{align*}
    \Pr (\norm{u_0 - \E_{z\sim N(0,I)} zy(z)} \ge r \le 2\exp(-\frac{m r^2}{d}) \le \polylog(\delta),
\end{align*}
when $m \ge \Tilde{\Omega} (d/r^2)$. By taking $r = (20\log(1/\eps))^{-1}$, we obtain that $\norm{u_0 - \E_{z\sim N(0,I)} zy(z)} \le (20\log(1/\eps))^{-1}$ with high probability. Thus 
\begin{align*}
    \norm{u_0 - \E_{z\sim N(0,I)} zh^*(z)} \le (20\log(1/\eps))^{-1} + 2\sqrt{e}\eps\sqrt{\log(1/\eps)} \le O(\log(1/\eps)^{-1}).
\end{align*}

By \Cref{fact chow}, we know that $\E_{z\sim N(0,I)} zh^*(z) = \xi w^*$ for some $\xi \ge e^{-1}$, which also implies that $\norm{u_0} \ge e^{-1}/2$, because $u_0$ sufficiently close to $\E_{z\sim N(0,I)} zh^*(z)$.

Since 
\begin{align*}
    \norm{u_0 - \E_{z\sim N(0,I)} zh^*(z)}^2 & = \norm{u_0}^2 + \norm{\E_{z\sim N(0,I)} zh^*(z)}^2 -2\norm{\E_{z\sim N(0,I)} zh^*(z)}\norm{u_0}^2\cos\theta(w_0,w^*) \\
    & \ge 2\norm{\E_{z\sim N(0,I)} zh^*(z)}\norm{u_0}(1-\cos\theta(w_0,w^*)) \\
    & = 4\norm{\E_{z\sim N(0,I)} zh^*(z)}\norm{u_0}\sin^2(\theta(w_0,w^*)/2),
\end{align*}
we get $\sin(\theta(w_0,w^*)/2) \le \sqrt{\norm{u_0 - \E_{z\sim N(0,I)} zh^*(z)}^2/ 4\norm{\E_{z\sim N(0,I)} zh^*(z)}\norm{u_0}} \le O(1/\log(1/\eps))$.
In particular as $\eps<1/C$ for some large enough $C$, we conclude that 
\begin{align*}
    \sin (\theta(w_0,w^*)/2) \le \max\{\min\{1/t,1/2\},O(\eta\sqrt{\log(1/\eta)}\}.
\end{align*}

We next address the case when $t>1$.
By \Cref{lm noisy smoothed label}, we know that with probability at least $1/2$, we have $\eta(x) \le 5\eps/p$ and $w^*\cdot x \in (-t^*-1/t^*,-t^*)$. We will assume these two events happen in the rest of the proof. Let $z \sim N(0,I)$ and by \Cref{fact smooth}, define
\begin{align*}
    \Tilde{h}(z):=h^*(\Tilde{x_0}) = \sgn(w^*\cdot z + \frac{t^*+\sqrt{1-\rho^2}w^*\cdot x_0}{\rho}) \;.
\end{align*}
By \Cref{lm noisy smoothed label}, we know that $\Pr_{z \sim N(0,I)} \Tilde{h}(z) \neq \Tilde{y}(x_0) = \eta(x_0) \le 5\eps/p$. 
Similar to the first case, we have 
\begin{align*}
    \norm{u_0 - \E_{z\sim N(0,I)} z\Tilde{h}(z)} & = \norm{u_0 - \E_{z\sim N(0,I)} z\Tilde{y}(x_0) + \E_{z\sim N(0,I)} z\Tilde{y}(x_0)-\E_{z\sim N(0,I)} z\Tilde{h}(z)}\\ 
    & \le \norm{u_0 - \E_{z\sim N(0,I)} z\Tilde{h}(z)} + \norm{\E_{z\sim N(0,I)} z\Tilde{h}(z)-\E_{z\sim N(0,I)} z\Tilde{h}(z)} \\
    & \le \norm{u_0 - \E_{z\sim N(0,I)} z\Tilde{y}(x_0)} + \sup_{u \in S^{d-1}} \E_{z \sim N(0,I)}\abs{u\cdot z}\mathbf{1}(\Tilde{y}(x_0) \neq \Tilde{h}(z)) \\
    & \le \norm{u_0 - \E_{z\sim N(0,I)} z\Tilde{y}(x_0)} + 2\sqrt{e}\eta(x_0)\sqrt{\log(1/\eta(x_0))} \\
    & \le \max\{O(\eta(x_0)\sqrt{\log(1/\eta(x_0))}), 1/(50\sqrt{\log(1/\eps)})\} \\
    & \le \max\{O(\eta\sqrt{\log(1/\eta)}), 1/(50 t))\} \;,
\end{align*}
where in the second last inequality we used the fact that $\norm{u_0 - \E_{z\sim N(0,I)} z\Tilde{y}(x_0)} \le 1/(100\sqrt{\log(1/\eps)}) \le 1/(100t)$ with high probability. Since $\rho=1/t$, $\abs{t-t^*} \le 1/{\log(1/\eps)}$ and $t^* \le \sqrt{\log(1/\eps)} \ll \log(1/\eps)$, the threshold $T_\rho=\frac{t^*+\sqrt{1-\rho^2}w^*\cdot x_0}{\rho}$ can be bounded as follows.
\begin{align*}
  -1\le\frac{t^*-\sqrt{1-\rho^2}(t^*+\frac{1}{t^*})}{\rho}  \le T_\rho \le \frac{t^*(1-\sqrt{1-\rho^2})}{\rho} \le tt^*/(t)^2 \le 1+o(1).
\end{align*}
\Cref{fact chow} implies that $\E_{z\sim N(0,I)} z\Tilde{h}(z) = \xi w^*$ for some $\xi \ge e^{-1}$. Since $u_0$ is close to $\E_{z\sim N(0,I)} z$, $\norm{u_0} \ge e^{-1}/2$. Thus, we obtain that 
\begin{align*}
    \sin(\theta(w_0,w^*)/2) &\le \sqrt{\norm{u_0 - \E_{z\sim N(0,I)} z\Tilde{h}(z)}^2/ 4\norm{\E_{z\sim N(0,I)} zh^*(z)}\norm{u_0}} \\
    &\le \max\{\min\{1/t,1/2\},O(\eta\sqrt{\log(1/\eta)}\} .
\end{align*}


\section{Finding a Good Initialization with an Extreme Threshold}\label{app extreme}

By \Cref{th initialization 1}, we know that \Cref{alg initialization 1} can only find some $w_0$ such that $\sin(\theta_0/2) \le O(\eta\sqrt{\log(1/\eta)})$, where $\eta= \eps/p$ when $p$ is small such that $\eta\sqrt{\log(1/\eta)}>O(1/t)$. In this section, we design an algorithm that finds a warm start with a non-negligible probability of success when the threshold $t^*$ falls in this range. Formally, we prove the following theorem.

\begin{theorem}\label{th initialization 2}
Let $h^*(x) = \sgn(w^*\cdot x+t^*)$ be a halfspace with bias $p$ and $y(x)$ be any labeling function such that $\err(h^*) = \opt \le \eps \le 1/C$ for some large enough constant $C$. Let $t$ be a scalar such that $ t-1/\log(1/\eps)\le t^* \le t$ and $  1/(400t) \le \eta\sqrt{\log(1/\eta)} \le 1/C$ for some large enough constant $C$, where $\eta=\eps/p$, \Cref{alg initialization 2} makes $M=\Tilde{O}(1/p+d\log(1/\eps))$ membership queries, runs in $\poly(d,M)$ time and with probability at least $1/\polylog(1/\eps)$,  outputs some $w_0$ such that $\sin (\theta(w_0,w^*)/2) \le 1/t$.
\end{theorem}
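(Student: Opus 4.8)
The plan is to bootstrap \Cref{alg initialization 1} with $O(\log\log(1/\eps))$ rounds of a noise-robust, query-efficient localization step. Since we are in the regime $1/(400t)\le \eta\sqrt{\log(1/\eta)}\le 1/C$, \Cref{th initialization 1} already guarantees that with constant probability \Cref{alg initialization 1} returns a vector $w^{(0)}$ with $\sin(\theta(w^{(0)},w^*)/2)\le c_0$ for a small absolute constant $c_0$, at a cost of $\tilde O(1/p+d\log(1/\eps))$ queries. It therefore suffices to refine such a warm start down to angle $\sin(\theta/2)\le 1/t$, and since $1/t=\Omega(1/\sqrt{\log(1/\eps)})$ while each refinement round shrinks the angle by a constant factor, only $T=O(\log\log(1/\eps))$ rounds are needed. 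First I would set up the same projected-gradient iteration as in \Cref{sec refine}: maintain $w_i$ and an upper bound $\sigma_i$ on $\sin(\theta_i/2)$ with $\sigma_{i+1}=(1-1/C_2)\sigma_i$, $\sigma_0=c_0$, $\sigma_T\le 1/t$, and in each round build a random vector $G_i$ whose mean is (nearly) parallel to the $w_i^\perp$-component of $w^*$, then invoke \Cref{lm gradient descent} and \Cref{col angle}.

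The difficulty is constructing $G_i$ in this regime. If we localize to a band of width $\sigma_i$ around the hyperplane $\{w_i\cdot x=-\tilde t\}$ and rescale exactly as in \Cref{fact halfspace transform}, the conditional label noise is amplified to roughly $\eps\exp(\tilde t^2/2)/\sigma_i$, which --- because here $\eps/p$ is already a constant and $\exp(\tilde t^2/2)\approx 1/p$ --- can exceed the length $\Theta(\exp(-T_i^2/2)\,b_i/\sigma_i)$ of the Chow parameter we want to read off, so the deterministic localization of \Cref{sec refine} breaks. Following \cite{diakonikolas2018learning}, I would instead use a \emph{randomized} localization: draw the band offset (and, if needed, a random width) from an appropriate distribution so that the \emph{expected} post-localization noise falls a constant factor below the Chow-parameter length; conditioned on a good draw (constant probability), one gradient step shrinks $\sigma_i$ by a constant factor. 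The catch is that implementing this requires knowing $\theta(w_i,w^*)$ up to additive error $1/\log(1/\eps)$, which we do not have. I would resolve this exactly as in \Cref{lm correct threshold} and \Cref{lm noisy threshold}: enumerate $O(\log(1/\eps))$ candidate values of the unknown scale, estimate the bias of the localized halfspace with $\tilde O(1)$ queries per candidate, and select the candidate whose bias places $T_i$ in a bounded range --- wrong candidates being rejected because their localized bias is either too small or too large.

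The remaining obstacle is query efficiency: even after conditioning on a good localization, the signal one actually reads after rescaling is only $\mathrm{poly}(p)$ in length (since $w^{(0)}$ is only a constant-angle warm start), so naively estimating it to constant relative error needs $d/p^c$ random queries for some small constant $c$. Here I would reuse the smoothed-label idea of \Cref{sec initialization}: rather than drawing random points in the localized band, draw until a small-class example $x_0$ of the localized halfspace appears (cost $\tilde O(1/p^c)$), and then, by \Cref{fact smooth}, \Cref{fact chow}, and \Cref{lm noisy smoothed label}, estimate the Chow parameter of the smoothed version of the localized halfspace at $x_0$ with only $\tilde O(d)$ further queries; with constant probability this yields a vector of correct direction and constant relative error, using \Cref{fact gaussian decoupling} to control the contribution of the smoothed-label noise. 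This produces a $\hat g_i$ meeting the hypotheses of \Cref{lm gradient descent}, finishing one refinement round.

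Finally I would assemble the pieces and track probabilities. Each round --- warm start, binary search over the localization scale, small-class search, and Chow estimation --- succeeds with at worst constant probability, and there are only $T=O(\log\log(1/\eps))$ rounds, so one execution of the whole procedure succeeds with probability at least $c^{O(\log\log(1/\eps))}=1/\polylog(1/\eps)$; rerunning $O(\log(1/\eps))$ times drives the failure probability down at $\tilde O(1)$ multiplicative cost. The query count is $\tilde O(1/p+d\log(1/\eps))$ for the warm start plus, per round, $\tilde O(1/p^c)$ for the small-class search, $\tilde O(d)$ for the Chow estimation, and $\tilde O(1)$ for each bias check; since $1/p^c\le 1/p$ and there are $\tilde O(1)$ rounds and reruns, the total stays $\tilde O(1/p+d\log(1/\eps))$ with $\poly(d,M)$ runtime. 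The main obstacle is the middle step: making the randomized localization's expected noise smaller than a $\mathrm{poly}(p)$-small signal \emph{while} pinning down the unknown localization scale purely from bias measurements --- essentially re-deriving the analogues of \Cref{lm correct threshold} and \Cref{lm noisy threshold} in the randomized-localization setting.
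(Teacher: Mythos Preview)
Your proposal is correct and mirrors the paper's own argument closely: warm start via \Cref{alg initialization 1}, then $O(\log\log(1/\eps))$ rounds of the randomized-threshold localization of \cite{diakonikolas2018learning}, with the unknown angle parameter selected by a bias test (the paper's \textsc{Angle Test}, \Cref{alg find b}) and the gradient built by the smoothed-label trick inside the localized band. One small correction: the per-candidate bias check after localization is not $\tilde O(1)$ queries but $\tilde O(1/\sqrt{p})$ (the localized bias is only $\sim p^{1/4}$, not a constant); this is still dominated by the $\tilde O(1/p)$ term, so your final query bound stands.
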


The high-level idea of our algorithm is as follows. Although \Cref{alg initialization 1} will not provide us a $w_0$ such that $\theta_0 \le O(1/t)$, $\theta_0$ is still smaller than a sufficiently small constant. We want to use the localization technique to refine $w_0$ so that after $T$ rounds of 
refinement, $\sin(\theta_T/2) \le \sigma_T=1/t$. Recall in \Cref{app refine}, we introduce \Cref{def reject}, $(v,s,\sigma)$-rejection procedure, which can be simulated using membership query. Passing a Gaussian random point to the $(v,s,\sigma)$-rejection procedure, by \Cref{lm reject}, we will get a another distribution over $\R^d \times \{\pm 1\}$ that behaves the same as another halfspace $h'$.

In this section, we want to design a $(v,s,\sigma)$-rejection procedure such that the direction of the halfspace $h'$ has a constant correlation with respect to $w^*$ and the noise level after the rejection procedure is much smaller than the length of the Chow parameter vector of $h'$.
Write $w^*=a_iw_i+b_iu_i$. We want to set up $v=w_i$, $\sigma=1/t$ and $s \sim (a_i t, a_i t+b_i)$ uniformly. Such a method is called the randomized threshold method in \cite{diakonikolas2018learning}. This method has the following property.

\begin{lemma}[Proposition C.11 in \cite{diakonikolas2018learning}]\label{lm random threshold}
 Let $a,b,t >0$ such that $a^2+b^2=1$ and $t$ larger than some constant $C$. Let $w \in S^{d-1}$. Let $s \sim [at,at+b]$ uniformly. For each $x \in \R^d$, the expected probability that $x$ is accepted by the $(w,s,\sigma)$-rejection procedure is at most $\sigma/b$, where $\sigma=1/t$.
\end{lemma}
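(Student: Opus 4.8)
The plan is to unwind the definition of the $(v,s,\sigma)$-rejection procedure (\Cref{def reject}) and reduce the claim to a one-dimensional Gaussian integral. Fix $x \in \R^d$ and write $g := w\cdot x$ for the (fixed) scalar $w\cdot x$, and set $\lambda := \sigma^{-2}-1 = t^2-1 > 0$. Then the probability that $x$ is accepted, viewed as a function of the random threshold $s$, is $\exp\!\big(-\tfrac{\lambda}{2}(g + s/(1-\sigma^2))^2\big)$, so the quantity we must bound is
\[
  \E_{s\sim \mathrm{Unif}[at,\,at+b]}\!\Big[\exp\!\big(-\tfrac{\lambda}{2}\,(g + \tfrac{s}{1-\sigma^2})^2\big)\Big]
  \;=\; \frac1b\int_{at}^{at+b}\exp\!\big(-\tfrac{\lambda}{2}\,(g + \tfrac{s}{1-\sigma^2})^2\big)\,ds .
\]

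First I would perform the linear change of variables $r = g + s/(1-\sigma^2)$, which has $ds = (1-\sigma^2)\,dr$ and sends $[at,at+b]$ to some interval $J$ of length $b/(1-\sigma^2)$; this rewrites the integral as $\frac{1-\sigma^2}{b}\int_{J} e^{-\lambda r^2/2}\,dr$. Enlarging $J$ to all of $\R$ (the integrand is nonnegative) and using $\int_{\R} e^{-\lambda r^2/2}\,dr = \sqrt{2\pi/\lambda}$, the expected acceptance probability is at most $\frac{1-\sigma^2}{b}\sqrt{2\pi/\lambda}$. Substituting $\lambda = \sigma^{-2}-1$ gives $\sqrt{1/\lambda} = \sigma/\sqrt{1-\sigma^2}$, hence the bound $\frac{\sqrt{2\pi(1-\sigma^2)}}{b}\,\sigma$; since $\sigma = 1/t < 1$ this is at most $\sqrt{2\pi}\,\sigma/b = O(\sigma/b)$, which is the desired conclusion. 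Note that the particular interval $[at,at+b]$ plays no role beyond contributing its length $b$ to the $1/b$ normalization; the worst case over $x$ is realized when $g$ is such that $J$ is centered near $0$.

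The computation is routine, and there is no genuine obstacle. The two points that need care are: (i) keeping straight that here $x$ is \emph{fixed} and the randomness is over the threshold $s$, which is the opposite of \Cref{lm reject} where $x\sim N(0,I)$ — so the Gaussian integral arises from the shape of the acceptance function in $s$, not from the law of $x$; and (ii) correctly tracking the Jacobian factor $(1-\sigma^2)$ and its interaction with $\lambda = \sigma^{-2}-1$, since it is exactly this that produces a final $\sigma$ rather than $\sigma^2$. If one wants the constant literally equal to $1$ rather than $O(1)$, one invokes the hypothesis that $t$ exceeds a sufficiently large absolute constant (so $\sqrt{2\pi(1-\sigma^2)}$ may be absorbed into the convention used for the rejection probability in the original source); for the downstream application (\Cref{th initialization 2}), the bound $O(\sigma/b)$ is all that is used.
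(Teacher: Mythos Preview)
The paper does not supply its own proof of this lemma; it is quoted verbatim as Proposition~C.11 of \cite{diakonikolas2018learning} and used as a black box in the analysis of \Cref{alg initialization 2}. Your argument is the natural direct computation and is correct: fix $x$, average the acceptance kernel $\exp\!\big(-\tfrac{\lambda}{2}(w\cdot x + s/(1-\sigma^2))^2\big)$ over $s$, substitute, and bound the resulting one-dimensional integral by the full Gaussian integral $\sqrt{2\pi/\lambda}$. The only caveat, which you already flag, is the constant: your bound is $\sqrt{2\pi(1-\sigma^2)}\,\sigma/b \le \sqrt{2\pi}\,\sigma/b$, and taking $t$ large does not drive this below $\sqrt{2\pi}$, so the literal ``at most $\sigma/b$'' in the statement is off by that fixed factor. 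As you note, every downstream use in the present paper (e.g.\ the ``$6\eps\exp(s^2/2)/b$'' in the proof of \Cref{lm angel test}, and \Cref{eq noise}) only needs $O(\sigma/b)$, so this is immaterial.
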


\Cref{lm random threshold} implies that in expectation over the randomness of $s$, only $\sigma/b_i$-fraction of the noisy points will pass the $(w_i,s,\sigma)$-rejection procedure. If we use query to simulate such a rejection procedure, by \Cref{lm reject}, with a constant probability, the noise rate among our queries would be $O(\eps\exp(s^2/2)/b_i)$. However, as we do not know $b_i$, using some $b$ that is slightly far from $b_i$ would make the noise level too high for us to learn the signal we want. To overcome this, we design the following test approach to show that given a $b$, we can with high probability check if it can be used to construct the rejection procedure or not and in particular, when $b-1/\log(1/\eps)<b_i<b$, such a $b$ is guaranteed to pass our test.



\begin{lemma}\label{lm angel test}
     Let $h^*(x) = \sgn(w^*\cdot x+t^*)$ be a halfspace and $y(x)$ be any labeling function such that $\err(h^*) = \opt \le \eps$. Let $w \in S^{d-1}$ be unit vector such that $w^*=a^*w+b^*u$, $a^*,b^*>0$ and $(a^*)^2+(b^*)^2=1$, $b<1/4$. Let $t>0$ such that $t\exp(t^2/2) \le 1/(C\eps)$ for a sufficiently large constant $C$. Let $a,b \in (0,1)$ such that $a^2+b^2=1$. Let $b,t,w,\delta$ be input of \Cref{alg find b}. Let $s\sim (at,at+b)$ uniformly.
     Denote by $p(b,s)$ be bias of a halfspace with threshold $T_{bs}:=(t-as)/b$. Let $\ell(z)= \sgn((a^*\sigma w+b^*u)\cdot z + t^*-a^*s)$ be a halfspace with bias $p_s$, where  If the probability that $p_s>p(b,s)/4$ is at most $1/2$, then with probability at least $1-\delta$, \Cref{alg find b} output ``No''. If the probability that $p_s>p(b,s)/2$ is at least $29/30$, then with probability at least $1-\delta$, \Cref{alg find b} output ``Yes''. Furthermore, the query complexity of \Cref{alg find b} is $\Tilde{O}_\delta(1/p^2(b,at)) = \Tilde{O}_\delta(1/\sqrt{p})$

     In particular, when $b^*\ge 1.5/t \ge 1.5/\sqrt{\log(1/\eps)}$, $\abs{b-b^*} \le 1/\log(1/\eps)$ and $\abs{t-t^*} \le 1/\log(1/\eps)$, with probability at least $1-\delta$, \Cref{alg find b} will output ``Yes''.
\end{lemma}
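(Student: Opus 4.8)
The plan is to read \Cref{alg find b} as a batch test over random rejection thresholds. The algorithm draws $N=\widetilde{O}_\delta(1)$ thresholds $s_1,\dots,s_N$ uniformly from $[at,at+b]$; for each $s_j$ it simulates the $(w,s_j,1/t)$-rejection procedure of \Cref{def reject} with membership queries (by \Cref{lm reject} and \Cref{fact halfspace transform}, feeding a fresh $z\sim N(0,I)$ into that procedure and querying $y$ at the accepted point returns, conditioned on acceptance, a label distributed as $\ell(z)=\sgn((a^*w/t+b^*u)\cdot z+t^*-a^*s_j)$ up to the post-rejection noise), lets $\hat{p}_j$ be the empirical frequency of the label $-1$ over $\widetilde{O}_\delta(1/p^2(b,at))$ such accepted points, computes the closed form $p(b,s_j)$, and marks $s_j$ \emph{good} iff $\hat{p}_j\ge p(b,s_j)/3$. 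It returns ``Yes'' iff at least a $9/10$ fraction of the $s_j$ are good.

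First I would control the estimation error. Since $T_{bs}=(t-as)/b$ is monotone decreasing on $[at,at+b]$, we have $p(b,s)\ge p(b,at)$ for every relevant $s$, so $\widetilde{O}_\delta(1/p^2(b,at))$ accepted points per threshold suffice, by Chebyshev and a union bound, to estimate each $p_{s_j}$ up to additive $p(b,s_j)/50$ with probability $1-\delta/10$ --- once one knows that the label-noise rate $\eta_{s_j}$ induced by the $(w,s_j,1/t)$-rejection procedure is at most $p(b,s_j)/100$. The latter follows from \Cref{lm reject} (which bounds $\eta_{s_j}$ by $O(\eps\,t\exp(s_j^2/2))$), \Cref{fact bias}, and the hypotheses $t\exp(t^2/2)\le 1/(C\eps)$, $b<1/4$, $a^2+b^2=1$ (so that $\exp(s_j^2/2)\le \exp(t^2/2)\exp(O(1))$ and $p(b,at)$ is not much smaller than $\exp(-(tb)^2/2)$, using $tb\le t=O(\sqrt{\log(1/\eps)})$); this is one of the two places those hypotheses are essential. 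On the resulting $1-\delta/10$ event, every $\hat{p}_j$ lies within $p(b,s_j)/50$ of $p_{s_j}$.

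Granting this, the two conditional guarantees are short, because the cutoff $1/3$ sits strictly between $1/4$ and $1/2$ with room for the estimation error: if $p_{s_j}>p(b,s_j)/2$ then $\hat{p}_j>p(b,s_j)/3$, so $s_j$ is good, and if $s_j$ is good then $p_{s_j}>p(b,s_j)/4$. Hence if $\Pr_s[p_s>p(b,s)/4]\le 1/2$ each $s_j$ is good with probability $\le 1/2+o(1)<9/10$, and a Chernoff bound over the $N=\widetilde{O}(\log(1/\delta))$ draws forces the empirical good-fraction below $9/10$, so the algorithm returns ``No'' with probability $\ge 1-\delta$; symmetrically, if $\Pr_s[p_s>p(b,s)/2]\ge 29/30$ each $s_j$ is good with probability $\ge 29/30-o(1)>9/10$, giving ``Yes''. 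The query complexity is $N\cdot\widetilde{O}_\delta(1/p^2(b,at))=\widetilde{O}_\delta(1/p^2(b,at))$, and since $b<1/4$ (and $t$ is comparable to $t^*$ whenever \Cref{alg find b} is invoked), \Cref{fact bias} gives $1/p^2(b,at)=\widetilde{O}(\exp((tb)^2))\le\widetilde{O}(\exp(t^2/16))=\widetilde{O}(1/\sqrt p)$.

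For the ``in particular'' clause I would show that $b^*\ge 1.5/t$, $|b-b^*|\le 1/\log(1/\eps)$ and $|t-t^*|\le 1/\log(1/\eps)$ make $p_s\ge p(b,s)/2$ for at least a $29/30$ fraction of $s\in[at,at+b]$, which triggers the ``Yes'' branch. Writing $T_s=(t^*-a^*s)/\sqrt{(a^*/t)^2+(b^*)^2}$ for the normalized threshold of $\ell$, the point is that $\sqrt{(a^*/t)^2+(b^*)^2}\ge b^*$, so $T_s\le (t^*-a^*s)/b^*$; this differs from $T_{bs}=(t-as)/b$ only through the parameter mismatch, since $(t^*-a^*s)/b^*-(t-as)/b=[(t^*-t)-(a^*-a)s]/b^*+T_{bs}(b-b^*)/b^*$, whose first term is $O(t/\sqrt{\log(1/\eps)})$ uniformly in $s$ (using $|s|\le t+1$, $|t^*-t|,|a^*-a|=O(1/\log(1/\eps))$ and $b^*\ge 1.5/t$) and whose second term is $O(T_{bs}\,t/\log(1/\eps))$; when $tb^*\gg1$ one sharpens $\sqrt{(a^*/t)^2+(b^*)^2}=b^*(1+O(1/(tb^*)^2))$ to make the deviation of $T_s$ from $T_{bs}$ genuinely $o(1/T_{bs})$, while when $tb^*=\Theta(1)$ one has $T_{bs}=O(1)$ and $p(b,s)=\Omega(1)$ so a bounded overshoot is absorbed --- and the numerical constants ($1.5$, $1/\log(1/\eps)$, and $29/30$ versus $1/3$) are chosen so that \Cref{fact bias} then loses at most a factor $2$ in the bias on the claimed fraction of $s$. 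The hard part will be exactly this estimate: a crude $|T_s-T_{bs}|=O(1)$ bound is worthless once $T_{bs}\approx tb$ is as large as $\Theta(\sqrt{\log(1/\eps)})$, since it only yields $p_s\ge p(b,s)\exp(-O(\sqrt{\log(1/\eps)}))$, so one must truly exploit that the leading-order relation is $T_s\le T_{bs}$ (overshoot comes only from overestimating $b^*$) and that $a^*/t\ll b^*$ when $tb^*$ is large, to keep the deviation $o(1/T_{bs})$ on the relevant range of $s$. Together with the noise-versus-$p(b,s)$ inequality flagged in the estimation step, this is the technical core of the lemma.
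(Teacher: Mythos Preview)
Your outline has one genuine gap, and it is precisely at the step you flag as ``one of the two places those hypotheses are essential.'' You bound the post-rejection label noise by the \emph{deterministic} estimate coming from \Cref{lm reject},
\[
\eta_{s}\;\le\;\frac{\eps}{\sigma\,\exp\!\bigl(-s^{2}/(2(1-\sigma^{2}))\bigr)}\;=\;O\!\bigl(\eps\,t\,e^{s^{2}/2}\bigr),
\]
and then compare it to $p(b,s)\asymp e^{-T_{bs}^{2}/2}/T_{bs}$. Plugging in $t\exp(t^{2}/2)\le 1/(C\eps)$ and the identity $-t^{2}+s^{2}+T_{bs}^{2}=\xi^{2}/b^{2}\le 1$ (with $s=at+\xi$) gives only
\[
\frac{\eta_{s}}{p(b,s)}\;\le\;O\!\Bigl(\frac{T_{bs}}{C}\Bigr)\;\le\;O\!\Bigl(\frac{tb}{C}\Bigr),
\]
which is \emph{not} a small constant: in the regime of \Cref{th initialization 2} one has $b\le 2\sigma_{0}\le 2/C$ but $t$ can be $\Theta(\sqrt{\log(1/\eps)})$, so $tb$ is polylogarithmic and $\eta_{s}$ can swamp $p(b,s)$. (You yourself note later that $T_{bs}\approx tb$ may be $\Theta(\sqrt{\log(1/\eps)})$.) Consequently your ``$\hat p_j$ within $p(b,s_j)/50$ of $p_{s_j}$'' step fails, and both the ``Yes'' and ``No'' conclusions collapse.

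The paper closes this gap with \Cref{lm random threshold}: for \emph{any} fixed $x$, the expected acceptance probability over random $s\sim[at,at+b]$ is at most $\sigma/b$, so by Fubini and Markov, with probability $\ge 5/6$ over $s$ the accepted noisy mass is at most $6\eps\sigma/b$, giving
\[
\eta_{s}\;\le\;6\eps\,e^{s^{2}/2}/b.
\]
This is tighter than your bound by exactly the missing factor $1/(tb)$, and then the same exponent calculation yields $\eta_{s}\le p(b,s)/C'$ for an absolute $C'$. The $5/6$ is also why the hypothesis is $\Pr[p_{s}>p(b,s)/2]\ge 29/30$ rather than $1$: one needs both the low-noise event and the large-$p_{s}$ event simultaneously, which holds with probability $\ge 1-1/6-1/30=4/5>3/4$; symmetrically the ``No'' case gets $\ge 1/3$ probability that Count does not increment. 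Your deterministic treatment misses this interplay.

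For the ``in particular'' clause your plan is workable but more intricate than needed. The paper simply bounds $T_{s}-T_{bs}$ directly: using $\sqrt{(b^{*})^{2}+(a^{*}\sigma)^{2}}\ge b^{*}$, $|t-t^{*}|,|b-b^{*}|\le 1/\log(1/\eps)$ and $b^{*}\ge 1.5/\sqrt{\log(1/\eps)}$ one gets $T_{s}-T_{bs}=O(1/\sqrt{\log(1/\eps)})$ uniformly in $s$, whence by \Cref{fact bias} $p_{s}>p(b,s)/2$ with probability~$1$ (not merely $29/30$). No case split on the size of $tb^{*}$ is required.
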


\begin{algorithm}
		\caption{\textsc{Angle Test}(Check if $b$ is a good approximation for $\sin\theta(w^*,w)$)}\label{alg find b}
		\begin{algorithmic} [1]
\State\textbf{Input:} A direction $w$, confidence parameter $\delta \in (0,1)$, threshold $t>0$, parameter $b$ 
\State\textbf{Output: ``Yes'' or ``No''} 
\State Count $\gets 0$.
\State $A \gets I - (1-\sigma^2)ww^T$, $\sigma=1/t$
\State Compute $a = \sqrt{1-b^2}$ 
\State Let $T_{bs} =(t-as)/b$ and $p(b,s)$ be the bias of a halfspace with threshold $T_{bs}$
\For{ $i=1 \dots T=O(\log(1/\delta))$}
\State Draw $s\sim [at,at+b]$ uniformly 
\State Draw $m=\Tilde{O}(1/p^2(b,s))$ $z\sim N(0,I)$ and query $y(Az-sw)$.
\State Compute $\hat{p_s}$ the empirical probability of $y(Az-sw)=-1$
\If{$\hat{p_s}>p(b,s)/3$}
\State Count $\gets $ Count+1
\EndIf
\EndFor
\If{Count $>3T/4$}
\State \Return ``Yes''
\Else \Return ``No''
\EndIf

\end{algorithmic}
\end{algorithm}

\begin{proof}[Proof of \Cref{lm angel test}]

By \Cref{lm reject} and \Cref{lm random threshold}, we know that over the randomness of $s$, with probability at least $5/6$, $\eta:=\Pr_{z \sim N(0,I)}(h^*(Az-sw) \neq y(Az-sw)) \le 6\eps \exp(s^2/2)/b$. We assume, for now, such an event happens.
We first show that such a noise rate is much smaller than $p(b,s)$.
Write $s=at+\xi$, where $\xi \in [0,b]$, then we have 
\begin{align}\label{eq noise}
   \frac{\eps \exp(s^2/2)/b}{\frac{1}{T_{bs}}\exp(-T_{bs}^2/2)} & = \frac{T_{bs}\eps}{b} \exp(\frac{s^2+T_{bs}^2}{2}) \le t\eps \exp(\frac{s^2}{2}+\frac{(t-as)^2}{2b^2}) \notag \\ 
   &\le t (t\exp(t^2/2))^{-1}\exp(\frac{s^2}{2}+\frac{(t-as)^2}{2b^2})/C \notag \\  
   & = C^{-1} \exp(-\frac{t^2}{2}+\frac{(at+\xi)^2}{2}+\frac{(b^2t-a\xi)^2}{2b^2}) \notag \\ 
   & = C^{-1}  \exp(\frac{1}{2}(\xi^2+\frac{a^2\xi^2}{b^2})) \le C^{-1}e:= (C')^{-1}, 
\end{align}
where, in the first inequality, we use the fact that $T_{bs} \le bt$, in the second inequality, we use the fact that $t\exp(t^2/2) \le 1/(C\eps)$ for a sufficiently large constant $C$, and in the last inequality, we use the fact that $a^2+b^2=1, \xi^2<b^2$. By \Cref{fact bias}, we know that $\exp(-T_{bs}^2/2)/T_{bs}$ is at most 3 times $p(b,s)$, and thus $\eta \le p(b,s)/C'$ for a large enough constant $C'$.

 By \Cref{fact halfspace transform}, we know that the ground truth label $\ell(z)=h^*(Az-sw)= \sgn((a^*\sigma w+b^*u)\cdot z + t^*-a^*s)$. 
By Hoeffding's inequality, with high probability, we are able to estimate the probability of $y(Az-sw)=-1$ up to error $p(b,s)/20$ using $\Tilde{O}(1/p^2(b,s))$ queries. In particular, since $T_{bs}\le tb<1/4$, by \Cref{fact bias}, we know that $p(b,s)>p^{1/4}$ and will cost us only $\Tilde{O}(1/\sqrt{p})$ queries.

If the probability that $p_s>p(b,s)/4$ is at most $1/2$, then in each round $i$ of \Cref{alg find b}, with probability at least $1/3$ it holds simultaneously that $p_s<p(b,s)/4$ and $\eta \le p(b,s)/C'$. In this case, with high probability $\hat{p_s}<p(b,s)/3$ and Count does not increase. Thus, with probability at least $1-\delta$, after $T=O(\log(1/\delta))$ rounds, Count $<3T/4$ by Hoeffding's inequality.

Similarly, if the probability that $p_s>p(b,s)/2$ is at least $29/30$, then in each round $i$ of \Cref{alg find b}, with probability at least $4/5$ it holds simultaneously that $p_s>p(b,s)/2$ and $\eta \le p(b,s)/C'$. In this case, with high probability $\hat{p_s}>p(b,s)/3$ and Count increases. Thus, with probability at least $1-\delta$, after $T=O(\log(1/\delta))$ rounds, Count $>3T/4$ by Hoeffding's inequality.

Finally, we show that when $\abs{b^*-b} \le 1/\log(1/\eps)$ and when $\abs{t-t^*} \le 1/\log(1/\eps)$, \Cref{alg find b} with high probability outputs ``Yes''. To do this, we will show the true threshold of $\ell(z)$ is close to $T_{bs}$. We have 
\begin{align*}
    \frac{t^*-a^*s}{\sqrt{(b^*)^2 + (a^*\sigma)^2}}-T_{bs} &  \le \frac{t^*-a^*s}{b^*} - \frac{t-as}{b} \le \frac{O(\log(1/\eps)^{-1})}{b^*}+\abs{t-as}\abs{\frac{1}{b^*}-\frac{1}{b}} \\
    & \le O(\log(1/\eps)^{-1/2}) + \abs{\frac{b^2t(b-b^*)}{b^*b}} \\
    & = O(\log(1/\eps)^{-1/2})+ O(t\log(1/\eps)^{-1}) = O(\log(1/\eps)^{-1/2}).
\end{align*}
By \Cref{fact bias}, it holds with probability 1 that $p_s>p(b,s)/2$.
 
\end{proof}


Now assume that we have $\sin(\theta_i/2) \le \sigma_i$, then $b_i \le 2\sigma_i$. This implies that by testing $b=2\sigma_i-\frac{j}{\log(1/\eps)}$ for $j=0,1,\dots$, we only need $O(\log(1/\eps))$ rounds to find the correct $b$. With this fact, we have the following \Cref{alg initialization 2}.

\begin{algorithm}
		\caption{\textsc{Initialization 2} (Finding a good initialization under extreme threshold)}\label{alg initialization 2}
		\begin{algorithmic} [1]
\State\textbf{Input:} error parameter $\epsilon \in (0,1)$, confidence parameter $\delta \in (0,1)$, threshold $t>0$ 
\State\textbf{Output:} $w_0 \in S^{d-1}$

\State Run \Cref{alg initialization 1} to get a $w_0 \in S^{d-1}$. Let $\sigma_0=\eps/p\sqrt{\log(p/\eps)}$ be a parameter
\For{$i=0,\dots,T=O(\log\log(1/\eps))$}
\State Run \Cref{alg find b} with input $w_i$ and $\hat{b}=2\sigma_i-\frac{j}{\log(1/\eps)}, j=0,\dots,\sigma_i\log(1/\eps)$
\State Let $\hat{b}$ be the first parameter such that \Cref{alg find b} outputs ``Yes''
\State If \Cref{alg find b} outputs ``No'' for all $\hat{b}$ or the $\hat{b}$ we use less than $1/t$, then \Return $w_i$.
\State Let $\hat{a}=\sqrt{1-\hat{b}^2}$ and $T_{\hat{b},s}=(t-\hat{a}s)/\hat{b}$, where $s\sim [\hat{a}t,\hat{a}t+\hat{b}]$.
\State Estimate the probability $\hat{p_s}$ of $p_s=y(Az-sw_i)=-1$ for $z \sim N(0,I)$ up to error $p^{1/4}/100$ using $\Tilde{O}(\sqrt{1/p})$ queries. Let $\hat{t_s}$ be the threshold of a halfspace with bias $\hat{p_s}$
\State $A \gets I - (1-\sigma^2)w_iw_i^T$, $\sigma=1/\hat{t}_s$
\State Draw $z \sim N(0,I)$ and query $y(Az-sw_i)$ until some $z_0$ such that $y(Az_0-sw_i)=-1$ is drawn
\State Draw $z_i \sim N(0,I)$, for $i \in [m] ,m=\Tilde{O}(d)$ and query $f_i(z_i):=y(A(\sqrt{1-\rho}z_0+\rho z_i)-sw_i)$, where $\rho=1/\hat{t_s}$
\State $g_i \gets \frac{1}{m}\sum_{i=1}^m \proj_{w_i^\perp} z_if_i(z_i)$, $w_{i+1} \gets \proj_{S^{d-1}} (w_i+\mu_i g_i)$
\State $\sigma_{i+1} \gets (1-1/C_2)\sigma_{i}$ $\mu_{i+1} = (1-1/C_1) \sigma_{i+1}$
\EndFor

\State \Return $w_T$

\end{algorithmic}
\end{algorithm}

\begin{proof}[Proof of \Cref{th initialization 2}]
Let $\theta_i=\theta(w_i,w^*)$ and write $w^*= a_iw_i+b_iu_i$, where $a_i,b_i>0, a_i^2+b_i^2=1$.
By \Cref{alg initialization 1}, we know that with probability at least $1/3$, $\sin (\theta_0/2) \le O(\eps/p\sqrt{\log(p/\eps)})$. We will assume $\sin (\theta_0/2) \le \eps/p\sqrt{\log(p/\eps)}$ holds throughout the proof, since the constant before $\eps/p\sqrt{\log(p/\eps)}$ can always be assumed to be normalized as $1/C$ is large enough. In round $i$ of the algorithm, we write $w^* = a_iw_i+b_i u_i$ where $a_i,b_i>0,a^2_i+b^2_i = 1$. Similar to the analysis of \Cref{alg refine}, we will show that if $\sin(\theta_i/2) \le \sigma_i$ then with probability $1/3$ it also holds that $\sin(\theta_{i+1}/2) \le \sigma_{i+1}$. If this is true then since $1/t>1/\sqrt{\log(1/\eps)}$ after $O(\log\log(1/\eps))$ rounds, we have $\sin (\theta_T/2) \le 1/t$ with probability at least $1/\polylog(1/\eps)$.

Recall the notation in the proof of \Cref{lm angel test}. Given $\hat{b}$, we define $p(\hat{b},s)$ to 
be the bias of a halfspace with a threshold $T_{\hat{b},s}=(t-\hat{a}s)/\hat{b}$. By \Cref{fact halfspace transform}, we define $\ell(z)=h^*(Az-sw_i)= \sgn((a_i\sigma w_i+b_iu)\cdot z + t^*-a_is)$ the ground truth label of $y(Az-sw_i)$, $t_s$ to be its threshold and $p_s$ to be the bias of $\ell(z)$.

By \Cref{lm angel test}, we know that as long as $b_i>1.5/t$, with high probability \Cref{alg find b} will output ``Yes'' for some $\hat{b}$ such that with probability at least $1/2$, $p_s>p(\hat{b},s)/2>p^{1/4}$. On the other hand, by \Cref{eq noise}, we know that with probability at least $5/6$, $\eta:=\Pr_{z \sim N(0,I)} (\ell(z) \neq y(Az-sw_i)) \le p(b,s)/C'$ for a sufficiently large constant $C'$. Thus, with a probability at least $1/3$, $p_s>p(\hat{b},s)/2$ and $\eta \le p(\hat{b},s)/C'$ hold simultaneously. For now, we assume this happens and we will analyze the smoothed label around some $z_0$ such that $y(Az_0-sw_i) =-1 $. By \Cref{fact smooth}, the smoothed label around $z_0$ with respect to halfspace $\ell(z)$ can be seen as a halfspace 
\begin{align*}
   \ell_{z_0}(z_i)= \sgn( W_i \cdot z_i+ T_{\rho,s}),
\end{align*}
where
$W_i:=(a_i\sigma w_i+b_iu_i)/\sqrt{(a_i\sigma)^2+b_i^2}$ and $T_{\rho,s}=\frac{t_s+\sqrt{1-\rho^2}W_i\cdot z_0}{\rho}$. 

By \Cref{fact smooth} and \Cref{lm noisy smoothed label}, we know that with probability at least $1/2$, such a $z_0$ satisfies 
\begin{enumerate}
    \item $W_i \cdot z_0 \in (-t_s-1/t_s,-t_s)$. 
    \item the noise level of the smoothed label is at most $5\eta/p_s \le 1/C''$ for some large enough constant $C''$. 
\end{enumerate}
Since $p_s>p(\hat{b},s)/2$, we can bound the threshold $T_{\rho,s}$ by
\begin{align}\label{eq threshold extreme}
  -2\le\frac{t_s-\sqrt{1-\rho^2}(t_s+\frac{1}{t_s})}{\rho}  \le T_{\rho,s} \le \frac{t_s(1-\sqrt{1-\rho^2})}{\rho} \le t_s/\hat{t_s} \le 2,
\end{align}
because $\hat{t_s}$ is at least close to $t_s$ up to a small constant factor, otherwise $\hat{p_s}$ would be far from $p_s$.
Combine \Cref{eq threshold extreme} and \Cref{fact chow},
we know that $\E_{z' \sim N(0,I)}\proj_{w_i^\perp}z'\ell_{z_0}(z') = \phi\frac{u_i b_i}{\sqrt{(a_i\sigma)^2+b_i^2}}$, for some $\phi \in (e^{-2},1)$. Since the noise level of the smoothed label around $z_0$ is as small as $1/C''$ for some large enough constant $C''$, by Hoeffding's inequality, we know that $\norm{g_i-\E_{z' \sim N(0,I)}\proj_{w_i^\perp}z'\ell_{z_0}(z')}$ can be smaller than some tiny constant with high probability.

As it always holds that $\sigma_i \ge 1/t$ for each $i$, we will consider two cases. In the first case, $\sin (\theta_i/2) \le 3\sigma_i/4$ and $\norm{g_i}$ is bounded by some universal constant.


In the second case, we have $3\sigma_i/4\le \sin(\theta_i/2) \le \sigma_i$. In this case we know that $\E_{z' \sim N(0,I)}\proj_{w_i^\perp}z'\ell_{z_0}(z') = \phi\frac{u_i b_i}{\sqrt{(a_i\sigma)^2+b_i^2}} = \psi u_i$ for some $\psi \ge e^{-4}$, which implies that $\E_{z' \sim N(0,I)}\proj_{w_i^\perp}z'\ell_{z_0}(z') \cdot u_i \ge \psi \frac{b_i}{\sqrt{(a_i\sigma)^2=b_i^2}} \ge e^{-5}$. 
Using \Cref{lm gradient descent}, we know that $\sin(\theta_{i+1}/2) \le (1-1/C_1)\sigma_i = \sigma_{i+1}$.

Finally, we prove the query complexity of \Cref{alg initialization 2}. By \Cref{th initialization 1}, it takes us $\Tilde{O}(1/p+d\log(1/\eps))$ queries to get some $w_0$ by running \Cref{alg initialization 1}. After obtaining $w_0$, in each round of \Cref{alg find b}, we will run \Cref{alg find b} $O(\log(1/\eps))$ times to find a desired $\hat{b}$ and each round takes us $\Tilde{O}(1/p^2(\hat{b})) \le 1/p^{2c} \le 1/\sqrt{p}$ queries, because $p(\hat{b})$ is the bias of a halfspace with threshold $T_{\hat{b}} = \hat{b}t$, which is smaller than $t$ by a tiny constant factor. Furthermore, after obtaining $\hat{b}$ it takes us $\Tilde{O}(1/p(\hat{b})+d\log(1/\eps))$ queries to perform the gradient descent update. So, in total \Cref{alg initialization 2} has query complexity at most $\Tilde{O}(1/p+d\log(1/\eps))$.

\end{proof} 

\section{Proof of \Cref{th main}}\label{app main}

We first show the correctness of \Cref{alg main}.
When we run \Cref{alg main}, we will start with some interval $[t_a,t_b]$ such that any halfspace with a threshold $t \in [t_a,t_b]$ must have bias $\Theta(p)$. Next, \Cref{alg main} partition $[t_a,t_b]$ into grid such that $\abs{t_{j+1}-t_j} \le 1/\log(1/\eps)$. This implies that there must be some $t_j \in [t_a,t_b]$ such that $t_j-1/\log(\log(1/\eps))\le t^* \le t_j$. By \Cref{th initialization 1} and \Cref{alg initialization 2}, as long as $p>C\eps$, with probability at least $1/\polylog(1/\eps)$, we can find some $w_0$ such that $\sin(\theta_0/2) \le \min\{1/t_j,1/2\}$. In particular, by running \Cref{alg initialization 1} or \Cref{alg initialization 2} $\polylog(1/\eps)$ times, at least one of these $w_0$ satisfies the condition. Furthermore, with such a $w_0$, we know from \Cref{th refine} that we can with high probability get some $\hat{h}$ such that $\err(\hat{h}) \le O(\opt+\eps)$. Thus within the list $\mathcal{C}$ of the candidate hypotheses maintained by \Cref{alg main} at least one of them has error $O(\opt+\eps)$. By \Cref{lm rournament}, we can with high probability find a hypothesis among $\mathcal{C}$, whose error is at most $10$ times the error of the best hypothesis in $\mathcal{C}$ and thus has error $O(\opt+\eps)$.

Next, we prove the query complexity of \Cref{alg main}. By \Cref{app bias estimation}, we know that finding an interval $[t_a,t_b]$ costs us $\Tilde{O}(\min \{1/p,1/\eps\})$ queries. If we find $p<C\eps$ then we are done. Otherwise, we will run the initialization algorithm and the refinement algorithm. By \Cref{th initialization 1} and \Cref{alg initialization 2}, each time we run an initialization algorithm, it takes us $\Tilde{O}(1/p+d\cdot\polylog(1/\eps))$ queries. By \Cref{alg refine}, each time we run \Cref{alg refine}, it takes us $\Tilde{O}(d\log(1/\eps)$ queries. Since we will run these algorithms at most $\polylog(1/\eps)$ times. We will in total make $\Tilde{O}(1/p+d\cdot\polylog(1/\eps))$ queries. Finally, by \Cref{lm rournament}, finding a good hypothesis from the list of candidate hypotheses will only take us $\polylog(1/\eps)$ queries. Thus, we conclude the query complexity of \Cref{alg main} is $\Tilde{O}(\min\{1/p,1/\eps\}+d\cdot\polylog(1/\eps))$.


\section{Implementing the Learning Algorithm via A Small-Class Oracle}\label{app small class}

In this section, we discuss how to implement \Cref{alg main} to get an even smaller query complexity $\Tilde{O}_\delta(d\cdot\polylog(1/\eps))$, assuming 
there is an oracle that can return a random small-class example. Before presenting the definition of the small-class oracle, we remind the reader that 
the notation $\Tilde{O}$ hides the dependence on $\polylog(1/\eps)$, and the notation $O_\delta$ hides the dependence on $\polylog(1/\delta)$.
A small class oracle is defined as follows.
\begin{definition}[Small-Class Oracle]
    Let $D$ be a distribution over $\R^d \times \{\pm 1\}$ and $h^*  = \sgn(w^*\cdot x+ t^*)$, $w^* \in S^{d-1}, t^*>0$ be an optimal halfspace such that $\err(h^*) = \opt = \min_{h \in H} \err(h)$. A small-class oracle $EX^{(-)}(D)$ draws $(x,y) \sim D \mid_{y=1}$ and returns $x$. 
\end{definition}



In other words, a small-class oracle simulates the following rejection sampling procedure, where a learner keeps drawing $x\sim N(0,I)$, querying its label and stops when it sees some $x_0$ with $y(x_0)=-1$. Such a procedure requires $\Omega(1/p)$ queries to implement, which is costly when $p$ is small.

By \Cref{th refine}, even without the small-class oracle, the query complexity of \Cref{alg refine} is always $\Tilde{O}_\delta(d\cdot\polylog(1/\eps))$. Thus, a small-class oracle would only help reduce the query complexity of \Cref{alg initialization 1} and \Cref{alg initialization 2}. In the rest of the section, we show that by calling the small-class oracle $\Tilde{O}_\delta(1) = O_\delta(\polylog(1/\eps))$  times, we can reduce the query complexity of \Cref{alg initialization 1} and \Cref{alg initialization 2} to $\Tilde{O}_\delta(d\cdot\polylog(1/\eps))$.

We first consider \Cref{alg initialization 1}. 
By \Cref{th initialization 1}, the query complexity of \Cref{alg initialization 1} is $\Tilde{O}(1/p+d\log(1/\eps))$, where Line 3 in \Cref{alg initialization 1} takes $\Tilde{1/p}$ queries to find a random small-class example and Line 4-Line 5 in \Cref{alg initialization 1} takes $\Tilde{O}(d\log(1/\eps))$ queries. As a small-class oracle simulates the same rejection sampling procedure as Line 3 in \Cref{alg initialization 1}, we can implement Line 3 in \Cref{alg initialization 1} with a single small-class oracle. Thus, by a single call of the small-class oracle, we are able to implement \Cref{alg initialization 1} with $\Tilde{O}_\delta(d\cdot\polylog(1/\eps))$ query complexity.

Next, we consider \Cref{alg initialization 2}. Each implementation of \Cref{alg initialization 2} runs in $O(\log\log (1/\eps))$ iterations. In each iteration, we call \Cref{alg find b} $\polylog(1/\eps)$ times in Line 5, use queries to estimate $\hat{p}_s$ in Line 9, find a single-small class example in Line 11 and improve the current hypothesis with $\Tilde{O}(d)$ queries in Line 12. Furthermore, only operations in Line 5, Line 9, and Line 11 have query complexity much larger than $\polylog(1/\eps)$. Thus, we only need to show with a small-class oracle, we can significantly reduce the query complexity of these steps.

We start with \Cref{alg find b}. In Line 9 in \Cref{alg find b}, we use query to estimate the probability of $y(Az-sw) = -1$ with an error up to error $p(b,s)$. By \Cref{lm reject}, we know that if we pass a random sample $x\sim N(0,I)$ to the $(w,s,\sigma)$-rejection procedure, then the resulting distribution is $N(-sw,A)$. Thus, the probability of $y(Az-sw) = -1$ is exactly equal to the fraction of negative examples among examples that pass the $(w,s,\sigma)$-rejection procedure. Specifically, for the $(w,s,\sigma)$-rejection procedure, we denote by $q$ the probability that a random example passes the rejection procedure and denote by $q_-$ the probability that a random negative example passes the rejection procedure and $p$ the fraction of the negative example. Then we have $ \Pr_{z \sim N(0,I)}\left(y(Az-sw) = -1\right) = pq_-/q$. This implies that if we know $p$ and $q_-$, then estimating $\Pr_{z \sim N(0,I)}\left(y(Az-sw) = -1\right)$ is equivalent to estimating $q_-$, which can be done by calling the small-class oracle several times and estimate the probability that these examples pass the $(w,s,\sigma)$-rejection procedure.
By \Cref{lm reject}, we know that $\sigma \exp(-s^2/(2(1-\sigma^2)))$ can be computed precisely using the parameter $s,\sigma$. However, we do not know $p$ precisely, as this requires us to know $t^*$ up to a high accuracy. To overcome this difficulty, we use $\hat{p}$, the bias of a halfspace with threshold $t$, because we only need to ensure the correctness of the algorithm when our guess $t$ is close to $t^*$. In fact, when $\abs{t-t^*}\le 1/\log(1/\eps)$, $\hat{p} \in [(1-1/C)p,(1+1/C)p]$ for some large enough constant $C$, which is enough for ensuring the correctness of \Cref{alg find b}. So, to estimate $\Pr_{z \sim N(0,I)}\left(y(Az-sw) = -1\right)$ up to error $p(b,s)$, we only need to estimate $q_-$ up to error $qp(b,s)/\hat{p}$. We have 
\begin{align}
    \frac{p(b,s)q_-}{\hat{p}} & \ge \Omega\left(\frac{\sigma \exp(-s^2/2) \frac{1}{T_{bs}}\exp(-T^2_{bs}/2)}{\frac{1}{t} \exp(-t^2/2)}\right) 
     \ge \Omega\left(\frac{1}{T_{bs}}\exp\left((t^2-s^2-T^2_{bs})/2)\right)\right) \notag \\
    & \ge \Omega\left(\frac{1}{T_{bs}}\exp\left((t^2-s^2-T^2_{bs})/2)\right)\right) 
     = \Omega\left(\frac{1}{T_{bs}}\exp\left((t^2-(at+\xi)^2-\left(\frac{t-a(at+\xi)}{b}\right)^2/2)\right)\right) \notag \\
    & = \Omega(1/T_{bs}) \ge \Omega(1/\log(1/\eps)) \label{eq rate},
\end{align}
where we use \Cref{fact bias} and $s=at+\xi,\xi \in [0,b]$. This implies that we only need to call a small class oracle $\Tilde{O}_\delta(1) = O_\delta(\polylog(1/\eps))$ times to estimate $q_-$ and thus can compute $\Pr_{z \sim N(0,I)}\left(y(Az-sw) = -1\right)$ up to error $p(b,s)$. In particular, in this implementation, we only need to call the small-class oracle and do not need to make membership queries.

Similarly, to implement Line 9 in \Cref{alg initialization 2}, we also only need to call the small-class oracle $\Tilde{O}_\delta(1)$ times and do not need to make membership queries.

Finally, we show that by calling the small-class oracle $\Tilde{O}_\delta(1)$ times, we are able to implement Line 11 in \Cref{alg initialization 2}. By \Cref{lm reject}, Line 11 in \Cref{alg initialization 2} draws a random negative example that passes the $(w_i,s,\sigma)$-rejection procedure. By \Cref{lm angel test}, we know that $p_s = pq_-/q \ge \Omega(p(b,s))$. This implies that $q_- \ge \Omega(p(b,s)q/p) \ge \Omega(1/\log(1/\eps))$, by \Cref{eq rate}. Thus, with high probability, we only need to pass $O_\delta(\log(1/\eps))$ examples from the small class oracle to the $(w_i,s,\sigma)$-rejection procedure to get one negative example that passes this rejection procedure.
Thus, in each iteration of \Cref{alg initialization 2}, we will call $\Tilde{O}_\delta(1)$ times the small-class oracle and make $\Tilde{O}_\delta(d)$ membership queries. 

In summary, we count the number of queries in \Cref{alg main} using the new implementation with a small class oracle.
Notice that with a small-class oracle, we do not need to worry about using some guess $t$ much larger than $t^*$ because the query complexity in the initialization step now has no dependence on the bias of the target halfspace. So we do not need to implement line 4 in \Cref{alg main} but only need to guess $t'=i/\log(1/\eps)$ for $i=0,\dots,\lceil\polylog(1/\eps)\rceil$. This means in \Cref{alg main}, we will call \Cref{alg initialization 1} and \Cref{alg initialization 2} in total at most $\polylog(1/\eps)$ times, so we will make $\Tilde{O}_\delta(1)$ small-class oracles and make $\Tilde{O}_\delta(d\cdot\polylog(1/\eps))$ membership queries.




\end{document}